
\documentclass{article}

\usepackage[T1]{fontenc}
\usepackage[htt]{hyphenat}

\usepackage{microtype}
\usepackage{graphicx}
\usepackage{subfigure}
\usepackage{booktabs} 
\usepackage{natbib}
\bibliographystyle{abbrvnat}
\usepackage{amsmath}
\usepackage{amssymb}
\usepackage{graphicx}
\usepackage{mathtools}
\usepackage{amsfonts}
\usepackage{amsthm,bm}
\usepackage{color}
\usepackage[table]{xcolor}
\usepackage{enumerate}
\usepackage{dsfont}
\usepackage{pgfplots}
\pgfplotsset{width=10cm,compat=1.9}
 \usepackage{pifont}
\usepackage{thmtools} 
\usepackage{thm-restate}
\usepackage{sidecap}


\usepackage{algorithm}
\usepackage[noend]{algpseudocode}
\algrenewcommand{\algorithmiccomment}[1]{$\vartriangleright$ #1}
\algrenewcommand{\algorithmicreturn}{\textbf{Return: }}
\algnewcommand\algorithmicinput{\textbf{Input: }}
\algnewcommand\Input{\State \algorithmicinput}

\usepackage{titletoc}

\setlength{\parskip}{1em}
\setlength\parindent{0pt}

\graphicspath {{figures/}}

\usepackage{hyperref}
\usepackage{caption}
\usepackage[super]{nth}
\delimitershortfall-1sp

\newtheorem{lemma}{Lemma}
\newtheorem{definition}{Definition}
\newtheorem{theorem}{Theorem}

\newtheorem{assumption}{Assumption}

\newtheorem{remark}{Remark}
\newtheorem{corollary}{Corollary}

\mathtoolsset{showonlyrefs}

%
%


\newcommand{\EE}{\mathbb{E}}


\newcommand{\Cc}{\mathcal{C}}

\newcommand{\Gg}{\mathcal{G}}

\newcommand{\Ll}{\mathcal{L}}

\newcommand{\Nn}{\mathcal{N}}

\newcommand{\Pp}{\mathcal{P}}

\newcommand{\Xx}{\mathcal{X}}
\newcommand{\Yy}{\mathcal{Y}}








\newcommand{\xx}{x}
\newcommand{\yy}{y}

\newcommand{\hmu}{\hat{\mu}}



\renewcommand{\phi}{\varphi}





\DeclareMathOperator{\supp}{supp}






\renewcommand{\leq}{\leqslant}
\renewcommand{\geq}{\geqslant}




\renewcommand{\epsilon}{\varepsilon}
\renewcommand{\imath}{\mathrm{i}}










\DeclareMathOperator*{\argmin}{argmin}
\DeclareMathOperator*{\argmax}{argmax}

\usepackage{accents}




\newcommand{\arthur}[1]{{\color{blue}[AM: #1]}}


\newlength{\restsubwidth}
\newlength{\restsubheight}
\newlength{\restsubmoreheight}
\setlength{\restsubmoreheight}{4pt}
\newcommand{\rest}[2]{%
        \settowidth{\restsubwidth}{\ensuremath{#2}}
        \settoheight{\restsubheight}{\ensuremath{{}_{#2}}}
        \ensuremath{{#1\hskip 0.5pt}_{\vrule\kern2pt\parbox[b][%
        4pt][b]{\the\restsubwidth}{%
                        \ensuremath{{}_{#2}}}}}
        }


\usepackage{breakcites}
\usepackage{authblk}
\usepackage[margin=1in]{geometry}
\usepackage{enumitem}

\usepackage{hyperref}

\hypersetup{
     colorlinks = true,
     linkcolor = brown,
     anchorcolor = blue,
     citecolor = teal,
     filecolor = blue,
     urlcolor = black
     }

\title{A mean-field analysis of two-player zero-sum games}

\author[a]{Carles Domingo-Enrich}
\author[a,d]{Samy Jelassi}
\author[e]{Arthur Mensch}
\author[a]{\\Grant M. Rotskoff}
\author[a, b, c]{Joan Bruna}
\affil[a]{Courant Institute of Mathematical Sciences, New York University, New York}
\affil[b]{Center for Data Science, New York University, New York}
\affil[c]{Institute for Advanced Study, Princeton}
\affil[d]{Princeton University, Princeton}
\affil[e]{ENS Paris, France}


\begin{document}





\maketitle

\begin{abstract}
  Finding Nash equilibria in two-player zero-sum continuous games is a central problem in machine learning, e.g. for training both GANs and robust models. The existence of pure Nash equilibria requires strong conditions which are not typically met in practice. Mixed Nash equilibria exist in greater generality and may be found using mirror descent. Yet this approach does not scale to high dimensions. To address this limitation, we parametrize mixed strategies as mixtures of particles, whose positions and weights are updated using gradient descent-ascent. We study this dynamics as an interacting gradient flow over measure spaces endowed with the Wasserstein-Fisher-Rao metric. We establish global convergence to an approximate equilibrium for the related Langevin gradient-ascent dynamic. We prove a law of large numbers that relates particle dynamics to mean-field dynamics. Our method identifies mixed equilibria in high dimensions and is demonstrably effective for training mixtures of GANs.
\end{abstract}

\section{Introduction}

Multi-objective optimization problems arise in many fields, from economics to civil engineering. Tasks that require optimizing multiple objectives have also become a routine part of many agent-based machine learning algorithms including generative adversarial networks \citep{goodfellow2014generative}, imaginative agents \citep{racaniere2017imagination}, hierarchical reinforcement learning \citep{wayne2014hierarchical} and multi-agent reinforcement learning \citep{bu2008comprehensive}. It not only remains difficult to carry out the necessary optimization, but also to assess the optimality of a given solution.

Multi-agent optimization is generally cast as finding equilibria in the space of strategies. The classic notion of equilibrium is due to Nash~\citep{nash1951non}: a Nash equilibrium is a set of agent strategies for which no agent can unilaterally improve its loss value. Pure Nash equilibria, in which each agent adopts a single strategy, provide a limited notion of optimality because they exist only under restrictive conditions. On the other hand, mixed Nash equilibria (MNE), where agents adopt a strategy from a probability distribution over the set of all strategies, exist in much greater generality~\citep{glicksberg1952further}. Importantly, MNE exist for games with infinite-dimensional compact strategy spaces, in which each player observes a loss function that is continuous in its strategy. We encounter this setting in different game formulations of machine learning problems, like GANs \citep{goodfellow2014generative}.

Although MNE are guaranteed to exist, it is difficult to identify them. Indeed, worst-case complexity analyses have shown that without additional assumptions on the losses there is no efficient algorithm for finding a MNE, even in the case of two-player finite games~\citep{daskalakis2009complexity}.
Some recent progress has been made; \citet{hsiehfinding2019} proposed a mirror-descent algorithm with convergence guarantees, which is approximately realizable in high-dimension. 

\paragraph{Contributions.} Following \citet{hsiehfinding2019}, we formulate continuous two-player zero-sum games as a multi-agent optimization problem over the space of probability measures on strategies. We describe two gradient descent-ascent dynamics in this space, both involving a transport term. 
\begin{itemize}
    \item We show that the stationary points of a gradient ascent-descent flow with Langevin diffusion over the space of mixed strategies are approximate MNE.
    \item We analyse a gradient ascent-descent dynamics that jointly updates the positions and weights of two mixed strategies to converge to an \textit{exact} MNE. This dynamics corresponds to a gradient descent-ascent flow over the space of measures endowed with a Wasserstein-Fisher-Rao (WFR) metric \citep{chizat_interpolating_2018}.
    \item We discretize both dynamics in space and time to obtain implementable training algorithms. We provide mean-field type consistency results on the discretization. We demonstrate numerically how both dynamics overcome the curse of dimensionality for finding MNE on synthetic games. On real data, we use WFR flows to train mixtures of GANs, that explicitly discover data clusters while maintaining good performance.
\end{itemize}

\section{Related work}
\paragraph{Equilibria in continuous games.}
Most of the works that study convergence to equilibria in continuous games or GANs do not frame the problem in the infinite-dimensional space of measures, but on finite-dimensional spaces. That is because they either (i) restrict their attention to games with convexity-concavity assumptions in which pure equilibria exist~\citep{optimistic,lin2018solving,nouiehed2019solving}, or (ii) provide algorithms with convergence guarantees to local notions of equilibrium such as stable fixed points, local Nash equilibria and local minimax points~\citep{heusel2017gans,adolphs2018local,mazumdar2019finding,jin2019minmax,fiez2019convergence,mechanics}. Both approaches differ from ours, which is to give global convergence guarantees without convexity assumptions.  
Some works have studied approximate MNE in infinite-dimensional measure spaces. \citet{arora2017generalization} proved the existence of approximate MNE and studied the generalization properties of this approximate solution; their analysis, however, does not provide a constructive method to identify such a solution. In a more explicit setting, \citet{grnarova2017online} designed an online-learning algorithm for finding a MNE in GANs under the assumption that the discriminator is a single hidden layer neural network. \citet{balandat2016minimizing}
apply the dual averaging algorithm to the minimax problem and show that it
recovers a MNE, but they do not provide any convergence rate nor a
practical algorithm for learning mixed NE. Our framework holds without making any assumption on the architectures of the discriminator and generator and provides explicit algorithms with some convergence guarantees.

\paragraph{Mean-field view of nonlinear gradient descent.}
Our approach is closely related to the mean-field perspective on wide neural
networks
\citep{mei2018mean,rotskoff2018neural,chizat2018global,sirignano2019mean,rotskoff2019global}.
These methods view training algorithms as approximations of Wasserstein gradient flows, which are dynamics on measures over the space of neurons. In our setting, a mixed strategy corresponds to a measure over the space
of strategies. 

\paragraph{Particle approaches for two-player games.} 
Our theoretical work sheds a new light on the results of \citet{hsiehfinding2019}, and rigorously justifies important algorithmic modifications the authors introduced. Specifically, they give rates of convergence for infinite-dimensional mirror descent on measures (i.e. updating strategy weights but not their positions). The straightforward implementation of this algorithm performs poorly unless the dimension is low (\autoref{fig:polygames}), which is why they proposed an `implementable` two-timescale version, in which the inner loop is a transport-based sampling procedure  closely related to our Algorithm \ref{alg:langevin_dynamics}. This implementable version is not studied theoretically, as the two-timescale structure hinders a thorough analysis. Our analysis includes transport on equal footing with mirror descent updates.

\section{Problem setup and mean-field dynamics}

\paragraph{Notation.}  For a topological space $\mathcal{X}$ we denote by $\mathcal{P}(\mathcal{X})$ the space of Borel probability measures on $\mathcal{X}$, and $\mathcal{M}_{+}(\mathcal{X})$ the space of Borel (positive) measures. For a given measure $\mu \in \mathcal{P}(\mathcal{X})$ that is absolutely continuous with respect to the canonical Borel measure $d x$ of $\mathcal{X}$ and has Radon-Nikodym derivative $\frac{d\mu}{dx} \in \Cc(\Xx)$, we define its differential entropy $H(\mu)=-\int \log(\frac{d\mu}{dx})d\mu$. For measures $\mu, \nu \in \mathcal{P}(\mathcal{X})$, $\mathcal{W}_2$ is the 2-Wasserstein distance.

\subsection{Lifting differentiable games to spaces of strategy distributions} 
 
\paragraph{Differentiable two-player zero-sum games.} We recall the definition of a differentiable zero-sum game, and show how finding a mixed Nash equilibrium to such a game is equivalent to solving a bi-linear game in the infinite dimensional space of distributions on strategies. We will use gradient flow approaches for solving the lifted problem.
\begin{definition}
    A two-player zero-sum game consists of a set of two players with parameters
    $z=(x,y)\in \mathcal{Z} = \mathcal{X}\times\mathcal{Y}$, where players observe a loss functions $\ell_1\colon \mathcal{Z}\rightarrow \mathbb{R}$ and
    $\ell_2\colon \mathcal{Z}\rightarrow \mathbb{R}$ that satisfy for all
    $(x,y)\in \mathcal{Z}$, $\ell_1(x,y)+\ell_2(x,y)=0.$ $\ell\triangleq \ell_1=-\ell_2$ is the loss of the game.
\end{definition}
 The compact finite-dimensional spaces of strategies $\Xx$ and $\Yy$ are endowed with a certain distance function $d$ (which we assume Euclidean in what follows---\autoref{sec:sdes_manifolds} derives our results on arbitrary strategy manifolds). This allows to define differentiable games, amenable to first-order optimization. We make the following mild assumption over the regularity of losses and constraints \citep{glicksberg1952further}.
 
 \begin{assumption}\label{ass:continuous_game}
    The parameter spaces $\mathcal{X}$ and $\mathcal{Y}$ are compact Riemannian
    manifolds without boundary of dimensions $d_x, d_y$ embedded in
    $\mathbb{R}^{D_x}, \mathbb{R}^{D_y}$ respectively. The loss $\ell$
    is continuously differentiable and $L$-smooth with respect to each
    parameter. That is, for all $x,x' \in \mathcal{X}$ and $y, y' \in
    \mathcal{Y}$, ${\|\nabla_x\ell(x,y)-\nabla_x\ell(x',y')\|}_2\leq L(d(x,x') +
    d(y,y')), \ {\|\nabla_y\ell(x,y)-\nabla_y\ell(x',y')\|}_2\leq L(d(x,x') +
    d(y,y'))$.
\end{assumption}



\paragraph{From pure to mixed Nash equilibria.} Assuming that both players play simultaneously, a pure Nash equilibrium point is a pair of strategies 
$(x^*,y^*) \in \Xx \times \Yy$ such that, for all $(x, y) \in \Xx \times \Yy$, $\ell(x^\star, y) \leq \ell(x^\star, y^\star) \leq \ell(x, y^\star)$. Such points do not always exist in continuous games. In contrast,
mixed Nash equilibria (MNE) are guaranteed to exist~\citep{glicksberg1952further} under
\autoref{ass:continuous_game}. Those distributions $(\mu_x^\star, \mu_y^\star) \in
\mathcal{P}(\mathcal{X})\times\mathcal{P}(\mathcal{Y})$ are global saddle points of the expected loss 
$\mathcal{L}(\mu_x,\mu_y) \triangleq \iint \ell(x,y)d\mu_x(x)d\mu_y(y)$. Formally, for all
$\mu_x, \mu_y\in \mathcal{P}(\mathcal{X}) \times \mathcal{P}(\mathcal{Y})$,
\begin{equation}\label{def:mixed_nash}
         \Ll(\mu_x^*, \mu_y) \leq 
        \Ll(\mu_x^*, \mu_y^*) \leq \Ll(\mu_x, \mu_y^*).
\end{equation}
We quantify the accuracy of an
estimation $(\hat \mu_x,\hat \mu_y)$ of a MNE using the \citet{nikaido_note_1955} error
\begin{align} \label{eq:nikaido_def}
    \mathrm{NI}(\hat \mu_x,\hat \mu_y) = \sup_{\mu_y \in \mathcal{P}(\mathcal{Y})} \mathcal{L}(\hat \mu_x,\mu_y)-\!\!\!\inf_{\hat \mu_x \in \mathcal{P}(\mathcal{X})} \mathcal{L}(\mu_x,\hat \mu_y).
\end{align}
We track the evolution of this metric in our
theoretical results (\autoref{sec:wfr_sec}) and in our experiments. We obtain guarantees on finding $\varepsilon$-MNE $(\mu_x^\varepsilon, \mu_y^\varepsilon)$, i.e. distribution pairs such that $\mathrm{NI}(\mu_x^\varepsilon, \mu_y^\varepsilon) \leq \varepsilon$.


\subsection{Training dynamics on discrete mixtures of strategies}
\label{subsec:dynamics}

We study three different dynamics for solving \eqref{def:mixed_nash}. Let us first assume that the two players play \textit{finite} mixtures of $n$ strategies $\mu_x = \sum_{i=1}^n w^{i}_x \delta_{x^i} \in \mathcal{P}(\mathcal{X})$, $\mu_y = \sum_{i=1}^n w^{i}_y \delta_{y^i} \in \mathcal{P}(\mathcal{Y})$, where $\{x^i, y^i\}_{i \in [1:n]}$ are the positions of the strategies and $w^{i}_x, w^{i}_y \geq 0$ are their weights. In the simplest setting, those mixtures are assumed uniform, i.e. $w^{i}_x = w^{i}_y = 1/n$. Finding the best $2n$ strategies involve finding a saddle point of 
$\Ll(\mu_x, \mu_y) = \frac{1}{n^2} \sum_i \sum_j \ell(x_i, y_j)$. Starting from random independent initial strategies $x^i_0 = \xi_i \sim \mu_{x,0}, y^i_0 = \bar{\xi}_i \sim \mu_{y,0}$, we may hope that the gradient descent-ascent dynamics
\begin{equation} \label{eq:IWGF_particle_flow}
    \frac{dx^{i}_t}{dt} = - \frac{1}{n} \sum_{j=1}^{n} \nabla_x \ell(x^i_t, y^j_t), \quad \frac{dy^{i}_t}{dt} = \frac{1}{n} \sum_{j=1}^{n} \nabla_y \ell(x^{j}_t, y^{i}_t),  \quad \forall i \in [1:n]
\end{equation}
finds such a saddle point. Yet this may fail in simple
nonconvex-nonconcave games, as illustrated in \autoref{subsec:failure_IWGF}---the particle distributions collapse to a stationary point that is not a MNE.

To mitigate this convergence problem, we analyse a perturbed dynamics analogous to Langevin gradient descent. Using the same initialization as in \eqref{eq:IWGF_particle_flow}, we add a small amount of noise in the gradient dynamics and obtain the stochastic differential equations  
\begin{align}
\begin{split} \label{eq:particle_system_LGV}
    dX_t^{i} = - \frac{1}{n} \sum_{j=1}^{n} \nabla_x \ell(X_t^{i}, Y_t^{j}) dt + \sqrt{\frac{2}{\beta}} dW_t^{i}, \ dY_t^{i} = \frac{1}{n} \sum_{j=1}^{n} \nabla_y \ell(X_t^{j},Y_t^{i}) dt + \sqrt{\frac{2}{\beta}} d\bar{W}_t^{i},
\end{split}
\end{align}
where $W_t^{i}, \bar{W}_t^{i}$ are independent Brownian motions. The discretization of \eqref{eq:particle_system_LGV} results in \autoref{alg:langevin_dynamics}; it is similar to Alg. 4 in \citet{hsiehfinding2019}.

We propose a second alternative dynamics to \eqref{eq:IWGF_particle_flow}, that updates both the positions and the weights of the particles, using relative updates for weights. We will show that it enjoys better convergence properties in the mean-field limit.
\begin{align}
\begin{split} \label{eq:particle_system_WFR}
    \frac{dx_t^{i}}{dt} &= - \gamma \sum_{j=1}^n w_{y,t}^{j} \nabla_x \ell(x_t^{i}, y_t^{j}), \quad \frac{dw_{x,t}^{i}}{dt} = \alpha \left(- \sum_{j=1}^n w_{y,t}^{j} \ell(x_t^{i}, y_t^{j}) + K(t) \right) w_{x,t}^{i}
\end{split}
\end{align}
and similarly for all $y_t^i$ (flipping the sign of $\ell$). $K(t) \triangleq \sum_{k=1}^n \sum_{j=1}^n w_{y,t}^{j} w_{x,t}^{k} \ell(x_t^{i}, y_t^{j})$ keeps $w_{x,t}$ in the simplex. We use uniform weights for initialization. When $\gamma = 0$ and  $\alpha = 1$, only the weights are updated: this results in the continuous-time version of the infinite-dimensional mirror descent studied by \citet{hsiehfinding2019}. The Euler discretization of \eqref{eq:particle_system_WFR} results in \autoref{alg:WFR_dynamics}.

\begin{algorithm}[t]
\caption{Langevin Descent-Ascent (L-DA).} 
\label{alg:langevin_dynamics}
\begin{algorithmic}[1]
\State \textbf{Input}: IID samples $x_0^{1},\dots,x_0^{n}$ from $\mu_{x,0} \in \mathcal{P}(\mathcal{X})$, IID samples $y_0^{1},\dots,y_0^{n} \in \mathcal{Y}$ from $\mu_{y,0} \in \mathcal{P}(\mathcal{Y})$
\For {$t=0,\dots,T$} \For {$i=1,\dots,n$}
        \State Sample $\Delta W_{t}^i \sim \mathcal{N}(0,I)$, $x_{t+1}^i = x_{t}^i - \frac{\eta}{n} \sum_{j=1}^n \nabla_x \ell(x_t^{i}, y_t^{j}) + \sqrt{2 \eta \beta^{-1}} \Delta W_{t}^i$
        \State Sample $\Delta \bar{W}_{t}^i \sim \mathcal{N}(0,I)$, $y_{t+1}^i = y_{t}^i + \frac{\eta}{n} \sum_{j=1}^n \nabla_y \ell(x_t^{j}, y_t^{i}) + \sqrt{2 \eta \beta^{-1}} \Delta \bar{W}_{t}^i$
    \EndFor
\EndFor
\State \textbf{Return} $\mu_{x,T}^n = \frac{1}{n}\sum_{i=1}^n \delta_{x_T^i}, \quad \mu_{y,T}^n = \frac{1}{n}\sum_{i=1}^n \delta_{y_T^i}$
\end{algorithmic}
\end{algorithm}
\begin{algorithm}[t]
\caption{Wasserstein-Fisher-Rao Descent-Ascent (WFR-DA).}
\label{alg:WFR_dynamics}
\begin{algorithmic}[1]
\State \textbf{Input}: IID samples $x_0^{(1)},\dots,x_0^{(n)}$ from $\nu_{x,0} \in \mathcal{P}(\mathcal{X})$, IID samples $y_0^{(1)},\dots,y_0^{(n)}$ from $\nu_{y,0} \in \mathcal{P}(\mathcal{Y})$. Initial weights: For all $i \in [1:n]$, $w_x^{(i)} = 1, \ w_y^{(i)} = 1$.
\For {$t=0,\dots,T$}
        \State $[x_{t+1}^{(i)}]_{i=1}^{n} = [x_{t}^{(i)} - \eta \sum_{j=1}^n w_{y,t}^{(i)} \nabla_x \ell(x_t^{(i)}, y_t^{(j)})]_{i=1}^{n}$
        \State ${\scriptstyle [\hat{w}_{x,t+1}^{(i)}]_{i=1}^{n} = \left[w_{x,t}^{(i)} \exp\left(- \eta' \sum_{j=1}^n w_{y,t}^{(j)} \ell(x_t^{(i)}, y_t^{(j)}) \right) \right]_{i=1}^{n}, \quad  [w_{x,t+1}^{(i)}]_{i=1}^{n} = [\hat{w}_{x,t+1}^{(i)}]_{i=1}^{n}/\sum_{j=1}^n \hat{w}_{x,t+1}^{(j)}}$
        \State $[y_{t+1}^{(i)}]_{i=1}^{n} = [y_{t}^{(i)} + \eta \sum_{j=1}^n w_{x,t}^{(j)} \nabla_y \ell(x_t^{(j)}, y_t^{(i)})]_{i=1}^{n},$
        \State ${\scriptstyle [\hat{w}_{y,t+1}^{(i)}]_{i=1}^{n} = \left[w_{y,t}^{(i)} \exp\left(\eta' \sum_{j=1}^n w_{x,t}^{(j)} \ell(x_t^{(j)}, y_t^{(i)})\right) \right]_{i=1}^{n}, \quad [w_{y,t+1}^{(i)}]_{i=1}^{n} = [\hat{w}_{y,t+1}^{(i)}]_{i=1}^{n}/\sum_{j=1}^n \hat{w}_{y,t+1}^{(j)}}$
\EndFor
\State \textbf{Return} $\bar{\nu}_{x,T}^n = \frac{1}{T+1} \sum_{t=0}^T \sum_{i=1}^n w_{x,T}^{(i)} \delta_{x_T^{(i)}}, \quad \bar{\nu}_{y,T}^n = \frac{1}{T+1} \sum_{t=0}^T \sum_{i=1}^n w_{y,T}^{(i)} \delta_{y_T^{(i)}}$
\end{algorithmic}
\end{algorithm}

\subsection{Training dynamics as gradient flows on measures} 
The three dynamics that we have introduced at the level of particles induces dynamics on the
associated empirical probability measures. If $\{ x^i_t, y^i_t\}_{i \in [1,n]}$ is a solution of \eqref{eq:IWGF_particle_flow}, then $\mu_x(t)=\frac{1}{n}\sum_{i=1}^n \delta_{x^i_t}$
and $\mu_y(t)=\frac{1}{n}\sum_{i=1}^n \delta_{y^i_t}$ are solutions of the \textit{Interacting Wasserstein Gradient Flow} (IWGF) of $\mathcal{L}$:
\begin{equation}
    \begin{split} \label{eq:interacting_flow}
        \begin{cases} 
        \partial_t \mu_x = \nabla \cdot (\mu_x \nabla_x V_x(\mu_y,x)), \quad \mu_x(0) = \frac{1}{n}\sum_{i=1}^n \delta_{x^i_0},\\
        \partial_t \mu_y = -\nabla \cdot (\mu_y \nabla_y V_y(\mu_x,y)), \quad \mu_y(0) =  \frac{1}{n}\sum_{i=1}^n \delta_{y^i_0}. 
        \end{cases}
    \end{split}
\end{equation} 
The derivation of \eqref{eq:interacting_flow} is provided in \autoref{subsec:link_IWGF}. We use the notation
$V_x(\mu_y,x) \triangleq \frac{\delta \mathcal{L}}{\delta \mu_x}(\mu_x,\mu_y)(x) = \int \ell(x,y)d\mu_y(y)$ for the first variations of the functional $\mathcal{L}(\mu_x,\mu_y)$. Holding $\mu_y$ fixed, the evolution of $\mu_x$ is a Wasserstein gradient flow
on $\mathcal{L}(\cdot, \mu_y)$~\citep{ambrosio_gradient_2005}. We interpret these PDEs in the weak sense, i.e. equality holds when integrating measures against bounded continuous functions.

The distributions $\mu_x(t) = \frac{1}{n} \sum_{i=1}^n \delta_{X^i_t}$
and $\mu_y(t)= \frac{1}{n} \sum_{i=1}^n \delta_{Y^i_t}$, where $\{X^i, Y^i\}_{i \in [1:n]}$ are solutions of~\eqref{eq:particle_system_LGV} follows a \textit{Entropy-Regularized Interacting Wasserstein Gradient Flow (ERIWGF)}:
\begin{align}
    \begin{split} \label{eq:entropic_interacting_flow}
        \begin{cases} 
        \partial_t \mu_x = \nabla_x \cdot (\mu_x \nabla_x V_x(\mu_y,x)) + \beta^{-1} \Delta_x \mu_x, \quad \mu_x(0) = \frac{1}{n}\sum_{i=1}^n \delta_{x^i_0} \\
        \partial_t \mu_y = -\nabla_y \cdot (\mu_y \nabla_y V_y(\mu_x,y)) + \beta^{-1} \Delta_y \mu_y, \quad \mu_y(0) = \frac{1}{n}\sum_{i=1}^n \delta_{y^i_0}
        \end{cases}
    \end{split}
\end{align}
The derivation of \eqref{eq:entropic_interacting_flow} is provided in \autoref{lem:forward_kolmogorov_entropy}. It is a system of coupled nonlinear Fokker-Planck equations, that are the Kolmogorov forward equations of the SDE \eqref{eq:particle_system_LGV}. They correspond to the IWGF of the entropy-regularized loss $\mathcal{L}_{\beta}(\mu_x,\mu_y)\triangleq \mathcal{L}(\mu_x,\mu_y) +\beta^{-1}(H(\mu_y) -H(\mu_x)$. 

Finally, if $\{x^i, y^i, w_x^i, w_y^i\}_{i \in [1:n]}$ solve \eqref{eq:particle_system_WFR}, then $\mu_x(t) =\sum_{i=1}^n w^{i}_{x,t} \delta_{x^i_t}$, $\mu_y(t) = \sum_{i=1}^n w^{i}_{y,t} \delta_{y^i_t}$ solve the \textit{Interacting Wasserstein-Fisher-Rao Gradient Flow (IWFRGF)} of $\mathcal{L}$:
\begin{align}
    \begin{split}
        \begin{cases} \label{eq:WFR_eq}
        \partial_t \mu_x \!\!\!\!\! &= \gamma \nabla_x \cdot (\mu_x \nabla_x V_x(\mu_y,x)) - \alpha \mu_x(V_x(\mu_y,x) - \mathcal{L}(\mu_x,\mu_y)), \: \mu_x(0) = \sum_{i=1}^n w^{i}_{x,0} \delta_{x^i_0},\\
        \partial_t \mu_y &= -\gamma \nabla_y \cdot (\mu_y \nabla_y V_y(\mu_x,y)) + \alpha \mu_y(V_y(\mu_x,y) -  \mathcal{L}(\mu_x,\mu_y)), \: \mu_y(0) = \sum_{i=1}^n w^{i}_{y,0} \delta_{y^i_0}.
        \end{cases}
    \end{split}
\end{align}

The derivation of \eqref{eq:WFR_eq} is provided in \autoref{subsec:lifted_WFR} and \autoref{lem:forward_kolmogorov_wfr}.
The Wasserstein-Fisher-Rao or Hellinger-Kantorovich metric \citep{chizat2015unbalanced, kondratyev2016new,gallout2016AJS} is a metric on the probability space $\mathcal{M}_{+}(\mathcal{X})$ induced by a lifting to the space $\mathcal{P}(\mathcal{X} \times \mathbb{R}^{+})$ of the form $\nu \mapsto \mu = \int_{\mathbb{R}^{+}} w \ d\nu(\cdot,w)$. If we keep $\nu_y$ fixed, the first equation in \eqref{eq:WFR_eq} is a Wasserstein-Fisher-Rao gradient flow (slightly modified by the term $\alpha \mu_x \mathcal{L}(\mu_x,\mu_y)$ to constrain $\mu_x$ in $\mathcal{P}(\mathcal{X})$). The term $- \alpha \mu_x(V_x(\mu_y,x) - \mathcal{L}(\mu_x,\mu_y))$, which also arises in entropic mirror descent, allow mass to `teleport' from bad strategies to better ones with finite cost by moving along the weight coordinate. Wasserstein-Fisher-Rao gradient flows have been used by \citet{ chizat2019sparse, rotskoff2019global, Liero2018} in the context of optimization. 

Initialization of \eqref{eq:interacting_flow}, \eqref{eq:entropic_interacting_flow} and \eqref{eq:WFR_eq} may be done with the measures $\mu_{x,0}$ and $\mu_{y,0}$ from which $\{x_0^i\}, \{y_0^i\}$ are sampled, in which case the measures $\mu_{x}(t)$ and $\mu_{y}(t)$ are not discrete and follow the \textit{mean-field} dynamics. In \autoref{sec:mean-field} we link the dynamics starting from discrete realizations to the mean-field dynamics. 

\section{Convergence analysis}

We establish convergence results for the entropy-regularized dynamics and the WFR dynamics.

\subsection{Convergence of the entropy-regularized
Wasserstein dynamics} \label{subsec:lgv_analysis}

The following theorem characterizes the stationary points of the entropy-regularized dynamics.

\begin{theorem} \label{thm:langevin_unique}
Suppose that \autoref{ass:continuous_game} holds, that $\ell \in C^2(\mathcal{X} \times \mathcal{Y})$ and that the initial measures $\mu_{x,0}, \mu_{y,0}$ have densities in $L^1(\mathcal{X}), L^1(\mathcal{Y})$. If a solution $(\mu_x(t), \mu_y(t))$ of the ERIWGF \eqref{eq:entropic_interacting_flow} converges in time, it must converge to the point $(\hmu_x,\hmu_y)$ which is the unique fixed point of the problem 
\begin{equation} 
\label{eq:exp}
        \rho_x(x) = \frac{1}{Z_x} e^{- \beta \int \ell(x,y) \ d\mu_y(y)}, \quad
        \rho_y(y) = \frac{1}{Z_y} e^{\beta \int \ell(x,y) \ d\mu_x(x)}.
\end{equation}
$(\hmu_x,\hmu_y)$ is an $\epsilon$-Nash equilibrium of the game given by $\mathcal{L}$ when
$    \beta \geq \frac{4}{\epsilon} \log \left(2 \frac{1-V_{\delta}}{V_{\delta}} (2 K_{\ell}/\epsilon -1) \right),$
where $K_{\ell} := \max_{\xx,\yy} \ell(\xx,\yy) - \min_{\xx,\yy} \ell(\xx,\yy)$ is the length of the range of $\ell$, $\delta := \epsilon/(2\text{Lip}(\ell))$ 
and $V_{\delta}$ is a lower bound on the volume of a ball of radius $\delta$ in $\mathcal{X},\mathcal{Y}$.
\end{theorem}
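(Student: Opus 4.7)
The plan is to prove the theorem in three stages: characterising stationary points of \eqref{eq:entropic_interacting_flow}, proving uniqueness, and converting the Gibbs fixed point into an approximate equilibrium of $\mathcal{L}$.

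\textbf{Stationary points.} Setting $\partial_t \mu_x = 0$ in \eqref{eq:entropic_interacting_flow} and rewriting $\beta^{-1}\Delta_x\mu_x = \nabla_x\cdot(\mu_x \beta^{-1}\nabla_x\log\mu_x)$, the PDE becomes $\nabla_x\cdot\bigl(\mu_x \nabla_x(V_x(\mu_y,x) + \beta^{-1}\log\mu_x)\bigr) = 0$. Multiplying by $V_x(\mu_y,x) + \beta^{-1}\log\mu_x$ and integrating by parts on the compact boundary-less manifold $\mathcal{X}$ yields $\int \mu_x \|\nabla_x(V_x + \beta^{-1}\log\mu_x)\|^2\, dx = 0$. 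The $C^2$ regularity of $\ell$, the $L^1$ assumption on the initial density, and the parabolic smoothing of the ERIWGF propagate smoothness and strict positivity of $\mu_x$, so the squared-gradient identity forces $V_x(\mu_y,\cdot) + \beta^{-1}\log\mu_x$ to be constant on $\mathcal{X}$. Exponentiating recovers the first equation of \eqref{eq:exp}; the symmetric argument on $\mu_y$ yields the second.

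\textbf{Uniqueness.} Introduce the regularised functional $\mathcal{L}_\beta(\mu_x,\mu_y) := \mathcal{L}(\mu_x,\mu_y) + \beta^{-1}(H(\mu_y) - H(\mu_x))$. Writing the first-order saddle-point conditions, $\delta \mathcal{L}_\beta/\delta\mu_x = \mathrm{const}$ and $\delta \mathcal{L}_\beta/\delta\mu_y = \mathrm{const}$, and solving for the densities reproduces exactly the system \eqref{eq:exp}, so its fixed points coincide with the saddle points of $\mathcal{L}_\beta$. Since $\mathcal{L}$ is bilinear and $-H$ is strictly convex on absolutely continuous measures, $\mathcal{L}_\beta$ is strictly convex in $\mu_x$ and strictly concave in $\mu_y$; its saddle point is therefore unique.

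\textbf{$\epsilon$-Nash accuracy.} Let $M_x := \sup_y V_x(\hmu_x,y)$, $m_y := \inf_x V_y(\hmu_y,x)$, and decompose
\[
    \mathrm{NI}(\hmu_x,\hmu_y) = \bigl(M_x - \mathcal{L}(\hmu_x,\hmu_y)\bigr) + \bigl(\mathcal{L}(\hmu_x,\hmu_y) - m_y\bigr).
\]
For the first summand, let $y^\star$ attain $M_x$; $\mathrm{Lip}(\ell)$-Lipschitzness of $V_x(\hmu_x,\cdot)$ gives that on $B(y^\star,\delta)$ with $\delta = \epsilon/(2\mathrm{Lip}(\ell))$ the deficit $g(y) := M_x - V_x(\hmu_x,y)$ is at most $\epsilon/2$. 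Plugging the Gibbs form $\hmu_y \propto e^{\beta V_x(\hmu_x,\cdot)}$ from Step~1 into the ratio for $\hmu_y(g>t)$, lower-bounding the partition function by integrating over $B(y^\star,\delta)$ (where $V_x \geq M_x - \epsilon/2$) and upper-bounding the numerator using $\{g>t\} \subset \mathcal{Y}\setminus B(y^\star,\delta)$ for $t \geq \epsilon/2$ produces the tail estimate
\[
    \hmu_y(g > t) \leq \tfrac{1 - V_\delta}{V_\delta}\, e^{-\beta(t - \epsilon/2)} \qquad \text{for all } t \geq \epsilon/2.
\]
Combining this with the layer-cake inequality $\mathbb{E}_{\hmu_y}[g] \leq \tau + (K_\ell - \tau)\,\hmu_y(g > \tau)$ at a threshold $\tau > \epsilon/2$, together with the symmetric bound for the $x$-player summand, reduces $\mathrm{NI} \leq \epsilon$ to an inequality of the form $\tau + (K_\ell - \tau)\tfrac{1-V_\delta}{V_\delta}\,e^{-\beta(\tau-\epsilon/2)} \leq \epsilon/2$. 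Solving for $\beta$ at the optimal threshold yields the stated condition.

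The main obstacle lies in this last balancing: the linear contribution $\tau$ inherited from the Lipschitz resolution near the argmax and the exponential tail from Gibbs concentration must jointly fit inside the $\epsilon$ budget. This optimisation is precisely what produces the logarithmic factor $\log\bigl(2\tfrac{1-V_\delta}{V_\delta}(2K_\ell/\epsilon - 1)\bigr)$ together with the $4/\epsilon$ prefactor. Steps~1 and~2 are structural consequences of recognising the flow as the Wasserstein gradient flow of the strictly convex–concave functional $\mathcal{L}_\beta$; the quantitative content of the theorem is concentrated in Step~3.
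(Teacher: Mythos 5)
Your three-part structure---characterise stationary points, prove uniqueness, convert the Gibbs fixed point into an $\epsilon$-Nash guarantee---matches the paper's organisation exactly (it splits into Theorems 2, 3, 4 in the appendix). Steps~2 and~3 are essentially the same arguments: your ``strictly convex--concave saddle point is unique'' is the conceptual version of the Rosen-type relative-entropy computation the paper uses, and your Gibbs tail bound plus layer-cake is the same estimate the paper phrases as a three-region split of $\mathcal{X}$ by sublevel sets of $\tilde V_x$. The tail bound, volume lower bound, Lipschitz radius $\delta$, and the final optimisation over the threshold all line up.

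The genuine gap is in Step~1, and it is the opposite of where you locate the difficulty. Multiplying the stationary equation by $V_x + \beta^{-1}\log\mu_x$ and integrating by parts already presupposes that the stationary measure has a differentiable, strictly positive density: without that, $\log\mu_x$ is not even defined and the manipulation $\beta^{-1}\Delta_x\mu_x = \nabla_x\cdot(\mu_x\,\beta^{-1}\nabla_x\log\mu_x)$ has no meaning. You wave this away as ``parabolic smoothing,'' but that does not close the gap: smoothness of $\mu_x(t)$ for $t>0$ does not transfer to the limit measure as $t\to\infty$, and the theorem is a statement about that limit. The real work is showing that measure-valued stationary solutions of the Fokker--Planck system have regular densities a priori, which the paper handles by invoking regularity results for stationary Fokker--Planck equations (Bogachev et al.\ for measure-to-weak, Schauder estimates for weak-to-classical), and which is precisely why the paper's appendix version of the theorem strengthens the hypothesis from $\ell\in C^2$ to $\ell\in C^{2,\alpha}$. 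Once that regularity is established, your energy-method derivation of the Gibbs form is a clean and legitimate finish, but as written it skips the step that is the actual mathematical content of this part of the proof.

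One small notational slip worth fixing: $M_x := \sup_y V_x(\hmu_x,y)$ should be $\sup_y V_y(\hmu_x,y)$, since $V_x(\mu_y,\cdot)$ is a function on $\mathcal{X}$, not $\mathcal{Y}$; the subsequent use of $\hmu_y\propto e^{\beta V_y(\hmu_x,\cdot)}$ shows you meant $V_y$.
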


The proof is in \autoref{sec:pf_thm1}. \autoref{thm:langevin_unique} characterizes the stationary points of the ERIWGF but does not provide a guarantee of convergence in time. It implies that if the
dynamics \eqref{eq:entropic_interacting_flow} converges in time, the limit will
be an $\epsilon$-Nash equilibrium of $\mathcal{L}$, with $\epsilon = \tilde{O}(1/\beta)$ (disregarding log factors). 
The dynamics \eqref{eq:entropic_interacting_flow} correspond to a McKean-Vlasov 
process on the joint probability measure $\mu_x \times \mu_y$. 
While convergence to stationary solutions of such processes have been studied
in the Euclidean case \citep{eberlequantitative2019}l, their results would only 
guarantee convergence for temperatures $\beta^{-1} \gtrsim Lip(\ell)$ in our setup, 
which is not strong enough to certify convergence to arbitrary $\epsilon$-NE.

There is a
trade-off between setting a low temperature $\beta^{-1}$, which yields an
$\epsilon$-Nash equilibrium with small $\epsilon$ but possibly slow or no convergence,
and setting a high temperature, which has the opposite effect. Linear potential
Fokker-Planck equations (that we recover when both players are decoupled) indeed converge exponentially with rate
$e^{-\lambda_{\beta} t}$ for all $\beta$, with $\lambda_{\beta}$ decreasing
exponentially with $\beta$ for nonconvex potentials \citep[sec. 5]{Markowich99onthe}. Entropic regularization also biases the dynamics towards measures with full support and hence precludes convergence to sparse equilibria even if they exist. This problem does not arise in the WFR dynamics.


\subsection{Analysis of the Wasserstein-Fisher-Rao dynamics}\label{sec:wfr_sec}



\autoref{thm:wfr_main_thm} states that, at a certain time $t_0$, the time averaged measures of the solution $(\nu_x,\nu_y)$ of \eqref{eq:WFR_eq} are an $\epsilon$-MNE, where $\epsilon$ can be made arbitrarily small by adjusting the constants $\gamma, \alpha$ of the dynamics. We define  $\bar{\nu}_{x}(t) = \frac{1}{t} \int_0^t \nu_{x}(s) \ ds$ and $\bar{\nu}_{y}(t) = \frac{1}{t} \int_0^t \nu_{y}(s) \ ds$, where $\nu_x$ and $\nu_y$ are solutions of \eqref{eq:WFR_eq}. 
\begin{theorem} \label{thm:wfr_main_thm}
Let $\epsilon > 0$ arbitrary. Suppose that $\nu_{x,0}, \nu_{y,0}$ are such that their Radon-Nikodym derivatives with respect to the Borel measures of $\mathcal{X},\mathcal{Y}$ are lower-bounded by $e^{-K'_x}, e^{-K'_y}$ respectively. For any $\delta \in (0,1/2)$, there exists a constant $C_{\delta, \mathcal{X},\mathcal{Y},K'_x,K'_y} > 0$ depending on the dimensions of $\mathcal{X}, \mathcal{Y}$, their curvatures and $K'_x, K'_y$, such that if $\gamma/\alpha < 1$,
$\frac{\gamma}{\alpha} \leq \left(\epsilon/C_{\delta, \mathcal{X},\mathcal{Y},K'_x,K'_y}\right)^{\frac{2}{1-\delta}}$
\begin{equation}
\text{NI}(\bar{\nu}_{x}(t_0),\bar{\nu}_{y}(t_0)) \leq \epsilon\quad\text{where}\quad t_0 = (\alpha \gamma)^{-1/2}.
\end{equation}

\end{theorem}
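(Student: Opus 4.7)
The plan is to run a mirror-descent-style relative-entropy argument, perturbed by the Wasserstein transport term, and then to balance the two resulting error contributions by an appropriate choice of test measures and of $t_0$. Fix any absolutely continuous test measures $\mu_x^\star \in \mathcal{P}(\mathcal{X})$, $\mu_y^\star \in \mathcal{P}(\mathcal{Y})$ with smooth positive densities, and consider the Lyapunov functional $\Phi(t) \triangleq \mathrm{KL}(\mu_x^\star \,\|\, \nu_x(t)) + \mathrm{KL}(\mu_y^\star \,\|\, \nu_y(t))$. Differentiating along \eqref{eq:WFR_eq} and integrating by parts separates $\Phi'$ into (i) a reaction contribution which, after a cancellation between the two players of the $\mathcal{L}(\nu_x(t),\nu_y(t))$ terms, yields the familiar bilinear identity $\alpha[\mathcal{L}(\mu_x^\star,\nu_y(t)) - \mathcal{L}(\nu_x(t),\mu_y^\star)]$, and (ii) a transport contribution of the form $\gamma \int \nabla V_x(\nu_y(t),x) \cdot \nabla \log \tfrac{d\mu_x^\star}{d\nu_x(t)}(x)\, d\mu_x^\star(x)$ minus its symmetric counterpart in $y$.

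Next, I would integrate $\Phi'$ from $0$ to $t_0$, discard $\Phi(t_0) \geq 0$, and use linearity of $\mathcal{L}$ in each argument together with Jensen's inequality to pass the time-average inside, yielding
\begin{equation*}
    \mathcal{L}(\mu_x^\star, \bar{\nu}_y(t_0)) - \mathcal{L}(\bar{\nu}_x(t_0), \mu_y^\star) \leq \frac{\Phi(0)}{\alpha t_0} + \frac{\gamma}{\alpha t_0}\, \mathcal{E}_{t_0}(\mu_x^\star,\mu_y^\star),
\end{equation*}
where $\mathcal{E}_{t_0}$ is the time-integrated absolute value of the transport cross-terms above. To convert this into a bound on $\mathrm{NI}$, I would take $\mu_x^\star,\mu_y^\star$ to be heat-semigroup mollifications at some scale $\sigma$ of Dirac measures located at (approximate) arg\,min/arg\,max of $\mathcal{L}(\cdot,\bar{\nu}_y(t_0))$ and $\mathcal{L}(\bar{\nu}_x(t_0),\cdot)$. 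The $L$-Lipschitzness of $\ell$ (\autoref{ass:continuous_game}) guarantees that replacing the Dirac by its mollification costs only $O(\sigma)$ in the objective; the density lower bounds on $\nu_{x,0},\nu_{y,0}$ imply that $\Phi(0) \lesssim (d_x+d_y)|\log \sigma| + K_x'+K_y'$; and Sobolev norms of the log-density of $\mu_x^\star,\mu_y^\star$ grow polynomially in $\sigma^{-1}$, giving the dominant contribution to $\mathcal{E}_{t_0}$.

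The main obstacle is the control of the transport cross-terms, which requires uniform-in-time lower bounds on the densities of $\nu_x(t),\nu_y(t)$, without which $\nabla \log \tfrac{d\mu_x^\star}{d\nu_x(t)}$ cannot be bounded. This is obtained by a comparison/maximum-principle argument applied along the characteristic flow $\dot X_t = \gamma \nabla V_x(\nu_y(t), X_t)$ associated with the pure transport part of \eqref{eq:WFR_eq}: along these trajectories the log-density satisfies an ODE whose right-hand side is uniformly bounded in terms of $\|\nabla V_x\|_\infty$ and $\|\Delta V_x\|_\infty$, themselves controlled by $L$ and the curvatures of $\mathcal{X},\mathcal{Y}$ via \autoref{ass:continuous_game}; this is precisely the origin of the constant $C_{\delta,\mathcal{X},\mathcal{Y},K_x',K_y'}$. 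With an estimate $\mathcal{E}_{t_0} \leq C\, t_0\, \sigma^{-p}$ for some dimension-dependent exponent $p$, collecting terms yields $\mathrm{NI}(\bar{\nu}_x(t_0),\bar{\nu}_y(t_0)) \lesssim \frac{|\log \sigma|}{\alpha t_0} + \frac{\gamma}{\alpha}\sigma^{-p} + \sigma$; choosing $t_0 = (\alpha\gamma)^{-1/2}$ balances the first two explicit rates, after which optimizing over $\sigma$ produces a bound of order $(\gamma/\alpha)^{(1-\delta)/2}$ for any $\delta \in (0,1/2)$, matching the claim.
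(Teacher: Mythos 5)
There is a genuine gap, and it sits exactly where you identify "the main obstacle." Your Lyapunov functional $\Phi(t)=\mathrm{KL}(\mu_x^\star\,\|\,\nu_x(t))+\mathrm{KL}(\mu_y^\star\,\|\,\nu_y(t))$ uses a \emph{fixed} target $\mu_x^\star$, and when you differentiate and integrate by parts, the transport contribution is
$\gamma\int\nabla_x V_x\cdot\bigl(\nabla\log\rho_{\mu_x^\star}-\nabla\log\rho_{\nu_x(t)}\bigr)\,d\mu_x^\star$.
The piece $\nabla\log\rho_{\mu_x^\star}$ is indeed $O(\sigma^{-1})$ after mollification, but the piece $\nabla\log\rho_{\nu_x(t)}$ — the score of the \emph{evolving} solution — is not controlled by anything in the hypotheses. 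The theorem assumes only a pointwise lower bound $e^{-K'_x}$ on the initial density; it imposes no upper bound, no Lipschitz bound, and no bound on $\nabla\log\rho_{\nu_{x,0}}$. Your proposed remedy (a comparison/maximum principle along the characteristic flow) gives uniform control of the log-density \emph{itself}, which is a transport equation with bounded source $\|\Delta V_x\|_\infty$, but it does not control the \emph{gradient} of the log-density; that object obeys a linear ODE driven by $\nabla^2 V_x$ whose initial condition is unbounded under the theorem's assumptions. So the bound $\mathcal{E}_{t_0}\lesssim t_0\sigma^{-p}$ cannot be established this way.

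The paper (following Chizat's Lemma 3.10) sidesteps this entirely by a different choice of Lyapunov functional: it introduces an auxiliary measure $\mu_x^\epsilon$ that is advected by the \emph{same} velocity field $-\gamma\nabla V_x(\mu_y,\cdot)$ as $\nu_x$ (but without the reaction term), and tracks $\mathcal{H}(\mu_x^\epsilon,\mu_x)$. Because the two measures share a velocity field, the transport contributions to $\frac{d}{dt}\mathcal{H}(\mu_x^\epsilon,\mu_x)$ cancel identically, leaving only the reaction term $\alpha\int\frac{\delta\mathcal{L}}{\delta\mu_x}\,d(\mu_x^\epsilon-\mu_x)$ — with \emph{no} appearance of $\nabla\log\rho_{\nu_x(t)}$ at all. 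The cost of transport then re-enters elsewhere, namely in the drift $\|\mu_x^\epsilon(t)-\mu_x^\star\|_{\text{BL}}^*\leq\gamma Bt$, which is bounded directly by the Lipschitz constant of $\ell$ and requires no density regularity. This is the decisive step that makes the argument go through under the weak assumption of an initial density lower bound, and it is not recoverable from your fixed-target scheme without extra hypotheses. The final steps you sketch (ball mollification to control $\Phi(0)$, choosing $t_0=(\alpha\gamma)^{-1/2}$, absorbing the logarithm via $\log x\leq C_\delta x^\delta$) do parallel the paper's Lemma on $\mathcal{Q}$ and the conclusion, but they rest on the unestablished transport estimate.
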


The proof (\autoref{sec:proof_thm4}) builds on the convergence properties of continuous-time mirror descent and closely follows the proof of Theorem 3.8 from \citet{chizat2019sparse}. We explicit the dependency of $C_{\delta, \mathcal{X},\mathcal{Y},K'_x,K'_y}$ on the dimensions of the manifolds and the properties of the loss $\ell$. 
Notice that \autoref{thm:wfr_main_thm} ensures convergence towards an $\epsilon$-Nash equilibrium of the non-regularized game. Following \citet{chizat2019sparse}, it is possible to replace the regularity assumption on the initial measures $\nu_{x,0}, \nu_{y,0}$ by a singular initialisation, at the expense of using $O(\exp(d))$ particles. 
This result is not a convergence result for the measures, but rather on the value of the NI error. Notice that it involves time-averaging and a finite horizon. Similar results are common for mirror descent in convex games \citep{solving}, albeit in the discrete-time setting.


\autoref{thm:wfr_main_thm} does not capture the benefits of transport, as it regards it as a perturbation of mirror descent (which corresponds to $\gamma=0$). When targetting a small error $\epsilon$, we need to set $\gamma \ll \alpha$ because of the bound on $\gamma/\alpha$. In this case, mirror descent is the main driver of the dynamics. However, it is seen empirically that taking much higher ratios $\gamma/\alpha$ (i.e. increasing the importance of the transport term) results in better performance. A satisfying explanation of this phenomenon is still sought after in the simpler optimization setting \citep{chizat2019sparse}.


\subsection{Convergence to mean-field}\label{sec:mean-field}
\label{sec:algorithms}

The following theorem (proof in \autoref{sec:WFR_prop}) links the empirical measures of the systems \eqref{eq:particle_system_LGV}, \eqref{eq:particle_system_WFR} to the solutions of the mean field dynamics~\eqref{eq:entropic_interacting_flow} and~\eqref{eq:WFR_eq} respectively. It can be seen as a law of large numbers. It shows that by \autoref{thm:convergence_mean_field}, \autoref{alg:langevin_dynamics} and \autoref{alg:WFR_dynamics} approximate the mean-field dynamics studied in \autoref{subsec:lgv_analysis} and \autoref{sec:wfr_sec}.
\begin{theorem} \label{thm:convergence_mean_field}
(i) Let $\mu_x^n = \frac{1}{n} \sum_{i=1}^n \delta_{X^{(i)}} \in \mathcal{C}([0,T],\mathcal{P}(\mathcal{X})), \mu_y^n = \frac{1}{n} \sum_{i=1}^n \delta_{Y^{(i)}} \in \mathcal{C}([0,T],\mathcal{P}(\mathcal{Y}))$ be the empirical measures of a solution of \eqref{eq:particle_system_LGV} up to an arbitrary time $T$. Let $\mu_x \in \mathcal{C}([0,T],\mathcal{P}(\mathcal{X})), \mu_y \in \mathcal{C}([0,T],\mathcal{P}(\mathcal{Y}))$ be a solution of the ERIWGF \eqref{eq:entropic_interacting_flow} with mean-field initial conditions $\mu_x(0) = \mu_{x,0}, \mu_y(0) = \mu_{y,0}$. Then, 
\begin{align}
    \mathbb{E}[\mathcal{W}_{2}^2(\mu_{x,t}^n, \mu_{x,t}) + \mathcal{W}_{2}^2(\mu_{y,t}^n, \mu_{y,t})] \xrightarrow{n \rightarrow \infty} 0, \quad \mathbb{E}[|\text{NI}(\mu_{x,t}^n, \mu_{y,t}^n)-\text{NI}(\mu_{x,t}, \mu_{y,t})|] \xrightarrow{n \rightarrow \infty} 0,
\end{align}
uniformly over $t \in [0,T]$. $\text{NI}$ is the Nikaido-Isoda error defined in \eqref{eq:nikaido_def}.

(ii) Let $\nu_x^n = \sum_{i=1}^n w_{x,t}^{i} \delta_{X^{(i)}} \in \mathcal{C}([0,T],\mathcal{P}(\mathcal{X})),$ $\mu_y^n = \sum_{i=1}^n w_{y,t}^{i} \delta_{Y^{(i)}} \in \mathcal{C}([0,T],\mathcal{P}(\mathcal{Y}))$ be the (projected) empirical measures of a solution of \eqref{eq:particle_system_WFR} up to an arbitrary time $T$. Let $\nu_x \in \mathcal{C}([0,T],\mathcal{P}(\mathcal{X})), \nu_y \in \mathcal{C}([0,T],\mathcal{P}(\mathcal{Y}))$ be a solution of \eqref{eq:WFR_eq} with mean-field initial conditions $\mu_x(0) = \mu_{x,0}, \mu_y(0) = \mu_{y,0}$. Then,
\begin{align}
    \mathbb{E}[\mathcal{W}_{2}^2(\nu_{x,t}^n, \nu_{x,t}) + \mathcal{W}_{2}^2(\nu_{y,t}^n, \nu_{y,t})] \xrightarrow{n \rightarrow \infty} 0, \quad \mathbb{E}[|\text{NI}(\bar{\nu}_{x,t}^n, \bar{\nu}_{y,t}^n)-\text{NI}(\bar{\nu}_{x,t}, \bar{\nu}_{y,t})|] \xrightarrow{n \rightarrow \infty} 0,
\end{align}
uniformly over $t \in [0,T]$. $\bar{\nu}_{x,t}, \bar{\nu}_{y,t}, \bar{\nu}_{x,t}^n, \bar{\nu}_{y,t}^n$ are the time-averaged measures, as in \autoref{thm:wfr_main_thm}.
\end{theorem}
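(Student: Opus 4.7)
The plan is to prove both statements by a Sznitman-style synchronous coupling, i.e.\ a classical propagation-of-chaos argument, followed by a Lipschitz-continuity estimate that transfers Wasserstein convergence to convergence of the Nikaido--Isoda error. For each $i \in [1{:}n]$ I introduce an auxiliary trajectory $(\tilde X_t^i, \tilde Y_t^i)$ (and in part (ii) also weights $(\tilde w_{x,t}^i, \tilde w_{y,t}^i)$) that is driven by the \emph{same} initial condition and the \emph{same} Brownian motion as the interacting particle $(X_t^i, Y_t^i)$, but which follows the mean-field McKean--Vlasov dynamics associated to \eqref{eq:entropic_interacting_flow} and \eqref{eq:WFR_eq} respectively. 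The goal is to bound $\mathbb{E}[\sup_{t \leq T}(d(X_t^i, \tilde X_t^i)^2 + d(Y_t^i, \tilde Y_t^i)^2)] = O(1/n)$; by symmetry in $i$, the same rate transfers to the Wasserstein distance between the empirical measures $\mu_{x,t}^n, \mu_{y,t}^n$ and the mean-field laws on path space.

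For part (i), existence and uniqueness of the McKean--Vlasov SDE is standard given \autoref{ass:continuous_game} (Lipschitzness of $\nabla_x\ell, \nabla_y\ell$): a Picard iteration on $\mathcal{C}([0,T],\mathcal{P}(\mathcal{X}\times\mathcal{Y}))$ equipped with a path-wise Wasserstein metric converges under $L$-smoothness, along the lines of~\citet{sznitmantopics1991} and Theorem~3.3 of~\citet{lackermean2018}. The coupling estimate is then obtained by writing the difference in the ambient Euclidean embedding (or in local charts via the manifold version of \autoref{ass:continuous_game}), applying It\^o's formula, and exploiting Lipschitzness to obtain
\[
\mathbb{E}\|X_t^i-\tilde X_t^i\|^2 \leq C\int_0^t \mathbb{E}\bigl(\|X_s^i-\tilde X_s^i\|^2 + \|Y_s^i-\tilde Y_s^i\|^2\bigr) ds + R_n(t),
\]
where the remainder $R_n(t) = O(1/n)$ quantifies the variance of the empirical average $\tfrac{1}{n}\sum_j \nabla_x\ell(\tilde X_s^i, \tilde Y_s^j)$ around $\int \nabla_x\ell(\tilde X_s^i, y)\, d\mu_{y,s}(y)$ over the IID mean-field copies. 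Gr\"onwall's inequality closes the estimate.

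Part (ii) follows the same blueprint without the stochastic component, so the coupling reduces to a deterministic Gr\"onwall argument, but two new ingredients are needed. First, a uniform-in-$n$ bound on the particle weights: since compactness of $\mathcal{X},\mathcal{Y}$ and continuity of $\ell$ give $\|\ell\|_\infty < \infty$, we have $|K(t)| \leq \|\ell\|_\infty$ and hence $w_{x,t}^i, w_{y,t}^i \in [e^{-2\alpha T \|\ell\|_\infty}, e^{2\alpha T \|\ell\|_\infty}]$ uniformly in $i, n$. Second, Lipschitz control on $K(t)$ as a quadratic functional of $(\mu_x^n,\mu_y^n)$, which follows from the weight bound together with $\|\ell\|_{C^1}$. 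With these in hand, the coupling argument applied to the joint $(X^i, w_x^i, Y^i, w_y^i)$ process in $(\mathcal{X}\times\mathbb{R}^+)\times(\mathcal{Y}\times\mathbb{R}^+)$ yields Wasserstein convergence of the lifted empirical measures, and then, by continuity of the projection $(x,w)\mapsto w\delta_x$, of their projections $\nu_x^n, \nu_y^n$.

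For the Nikaido--Isoda convergence, I use that $\mathrm{NI}$ is a difference of a supremum and an infimum of functionals of the form $\mu\mapsto \mathcal{L}(\cdot,\mu),\ \mathcal{L}(\mu,\cdot)$, which are $\mathrm{Lip}(\ell)$-Lipschitz against $\mathcal{W}_1$ by Kantorovich--Rubinstein duality. Hence $|\mathrm{NI}(\mu_x,\mu_y)-\mathrm{NI}(\mu'_x,\mu'_y)| \leq \mathrm{Lip}(\ell)\bigl(\mathcal{W}_1(\mu_x,\mu'_x)+\mathcal{W}_1(\mu_y,\mu'_y)\bigr)$, and $\mathcal{W}_1 \leq \mathcal{W}_2$ combined with the already-established convergence yields the NI claim in expectation. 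Time-averaging in part (ii) is handled via Jensen's inequality and convexity of the Wasserstein distance. The main obstacle I anticipate is part (ii): propagation-of-chaos is typically formulated for bounded, additive vector fields, whereas here the weight dynamics are multiplicative and couple globally through $K(t)$, itself a quadratic functional of the empirical measures. The bootstrap between the \emph{a priori} weight bound, the Lipschitz continuity of $K$ in both marginals, and the coupling's Gr\"onwall loop must be executed carefully to yield a genuinely uniform-in-$n$ estimate.
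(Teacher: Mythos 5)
Your approach matches the paper's: Sznitman-style synchronous coupling with i.i.d.\ mean-field copies sharing the same initial data and Brownian motions (following Theorem 3.3 of \citet{lackermean2018}), Picard iteration and Gr\"onwall for well-posedness of the McKean--Vlasov system, an a priori exponential bound on the particle weights in part (ii), and the $\mathrm{Lip}(\ell)$-Lipschitz estimate on $\mathrm{NI}$ against $\mathcal{W}_1$ to transfer Wasserstein convergence to the NI error (the paper's \autoref{lem:NI_continuity} and \autoref{lem:conv_ni}). The time-averaging in part (ii) is handled in the paper via convexity of $\mathcal{W}_2^2$ (\autoref{cor:conv_ni2}), which is what your Jensen remark amounts to.

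One step in your outline is, however, not correct. The synchronous-coupling Gr\"onwall loop does give $\mathbb{E}[\sup_{t\leq T}\|X_t^i-\tilde X_t^i\|^2] = O(1/n)$ (the remainder is the variance of an i.i.d.\ empirical average), but this rate does \emph{not} transfer to $\mathbb{E}[\mathcal{W}_2^2(\mu_{x}^n,\mu_{x})]$. What the coupling controls at rate $O(1/n)$ is $\mathbb{E}[\mathcal{W}_2^2(\mu_x^n, \tilde\nu_x^n)]$, where $\tilde\nu_x^n=\frac{1}{n}\sum_i\delta_{\tilde X^i}$ is the empirical measure of the i.i.d.\ copies; the triangle inequality leaves the additional term $\mathbb{E}[\mathcal{W}_2^2(\tilde\nu_x^n,\mu_x)]$, which is the empirical Wasserstein convergence of an i.i.d.\ sample on path space and is governed by the dimension-dependent Fournier--Guillin/Glivenko--Cantelli rate (and in fact only qualitatively on $\mathcal{C}([0,T],\mathcal{X})$). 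The paper does exactly this triangle-plus-Gr\"onwall bootstrap and then invokes Corollary 2.14 of \citet{lackermean2018} for qualitative convergence of the i.i.d.\ term; no $O(1/n)$ rate for the empirical measures is claimed. Since the theorem statement is rate-free, your conclusion still follows, but the phrase ``the same rate transfers to the Wasserstein distance between the empirical measures'' should be removed. A smaller point: your plan implicitly identifies the ERIWGF and (lifted) IWFRGF PDEs with the forward Kolmogorov equations of the dynamics you couple to; the paper proves this separately (\autoref{lem:forward_kolmogorov_entropy}, \autoref{lem:forward_kolmogorov_wfr}), and on a compact manifold without boundary this requires the extra geometric drift $\mathbf{\hat h}$ and the tangent-space projection of the noise, which your ``local charts / ambient embedding'' remark covers in spirit but should be executed with the care the paper takes in \autoref{sec:sdes_manifolds}.
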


\section{Numerical Experiments}\label{sec:num_exp}
 We show that WFR and Langevin dynamics outperform mirror descent in high dimension, on synthetic games. We then show the interests of using WFR-DA for training GANs. Code has been made available for reproducibility. 

\begin{figure}[t]
    \begin{minipage}[c]{0.5\textwidth}
    \includegraphics[width=\linewidth]{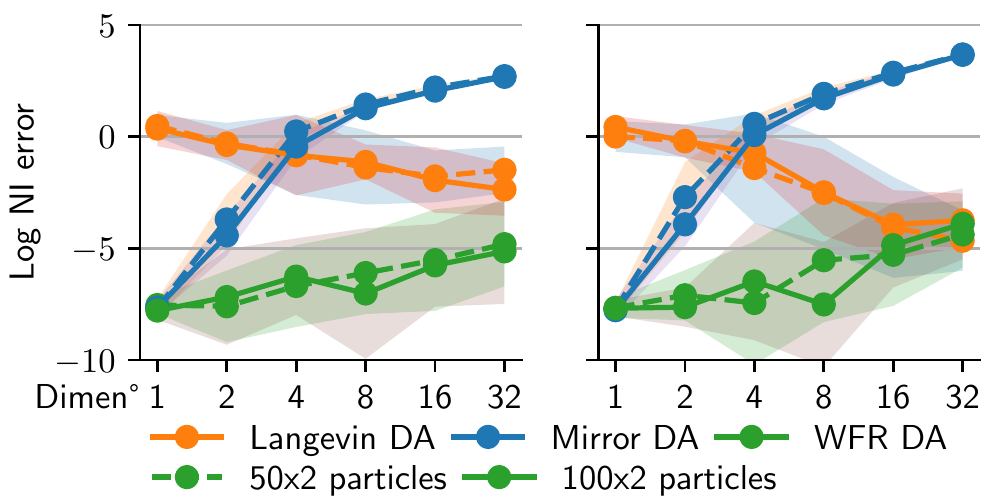}
    \end{minipage}\hfill
    \begin{minipage}[c]{0.43\textwidth}
    \caption{Nikaido-Isoida errors for L-DA, WFR-DA and mirror descent, as a function of the problem dimension, for a nonconvex loss $\ell_a$ (left) and convex loss $\ell_b$ (right). L-DA and WFR-DA outperforms mirror descent for large dimensions.
    Values averaged over 20 runs after 30000 iterations. Error bars show standard deviation across runs.} \label{fig:polygames}
    \end{minipage}
\end{figure}

\begin{figure*}[t]
    \centering
    \includegraphics[scale=.54]{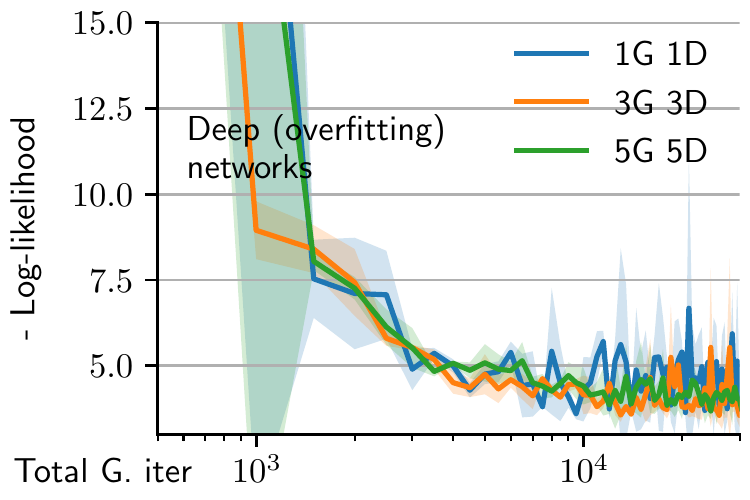}
    \includegraphics[scale=.54]{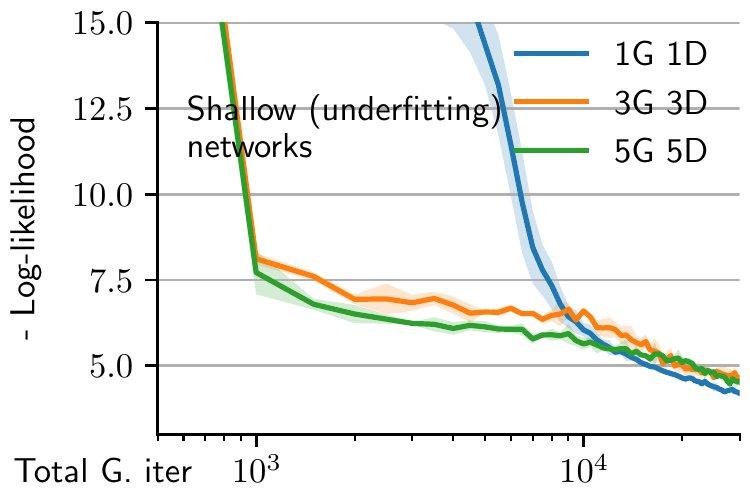}
    \includegraphics[scale=.54]{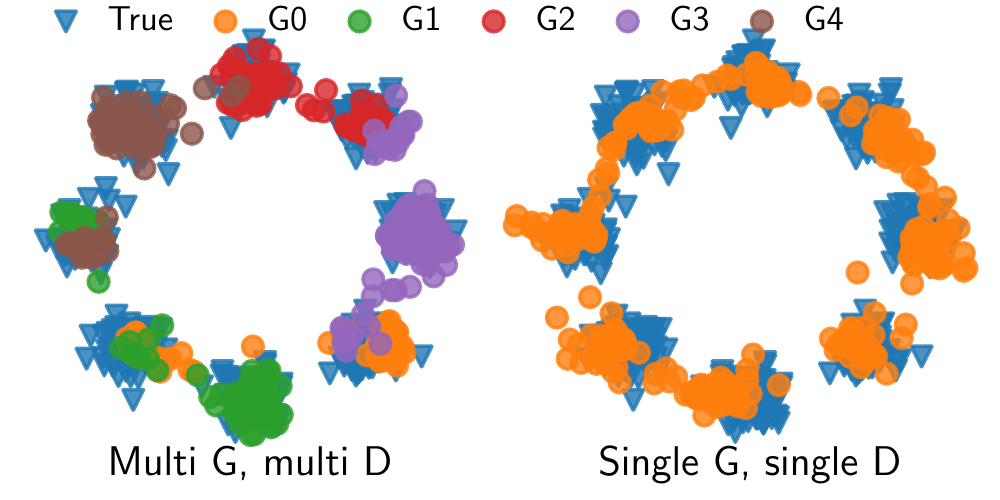}

    \caption{Training mixtures of GANs over a synthetic mixture of Gaussians in 2D. WFR-DA converges faster with models with low number of parameters, and similar performance with over-parametrized models. Mixtures naturally perform a form of clustering of the data. Errors bars show variance across 5 runs.\label{fig:synthetic_gan}}
\end{figure*}

\begin{figure*}[ht]
    \centering
    \includegraphics[scale=.54]{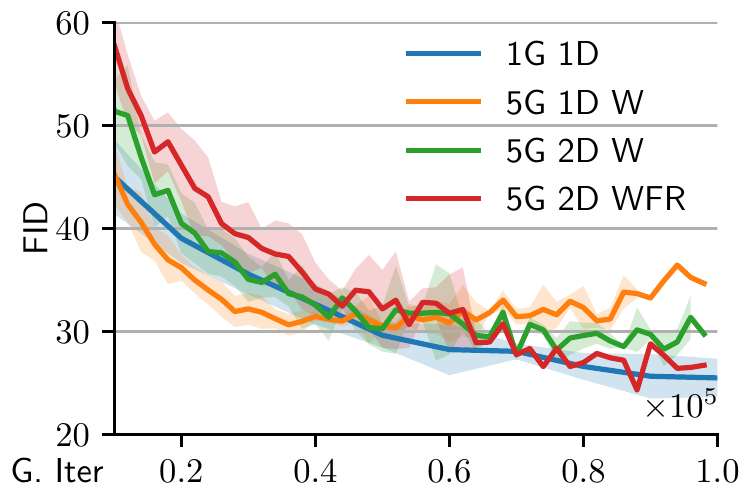}
    \includegraphics[scale=.54]{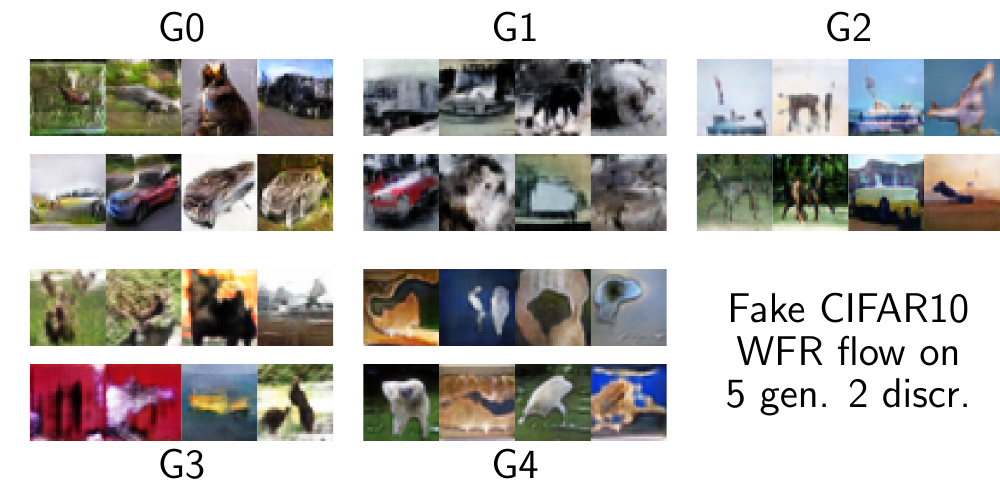}
    \includegraphics[scale=.54]{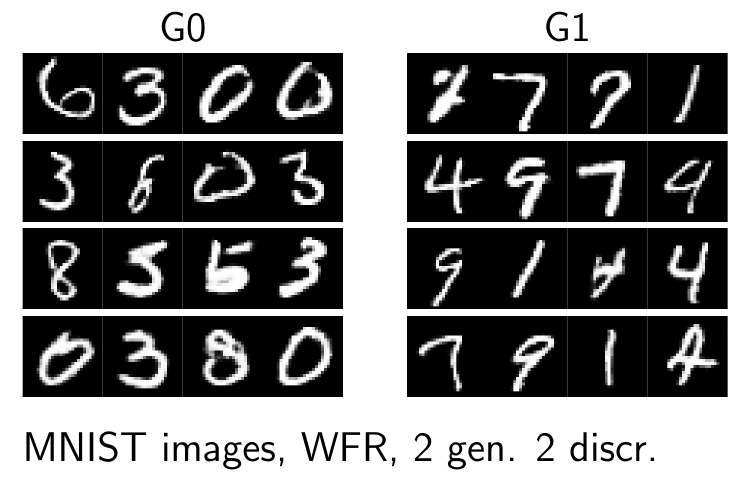}

    \caption{Training mixtures of GANs over CIFAR10. We
    compare the algorithm that updates the mixture weights and parameters
    (WFR-DA flow) with the algorithm that
    only updates parameters (W-DA flow). Using several discriminators and a WFR-DA flow brings more stable convergence. Each generator tends to specialize in a type of images. Errors bars show variance across 5 runs.\label{fig:gan}}
\end{figure*}

\subsection{Polynomial games on spheres}
We study two different games with losses $\ell_{a}, \ell_{b} : \mathcal{S}^{d-1} \times \mathcal{S}^{d-1} \rightarrow \mathbb{R}$ of the form
\begin{align}
\begin{split}
    \ell_a(x,y) &= x^\top A_0 x + x^\top A_1 y + y^\top A_2 y + y^\top A_3 (x^2) + a_0^\top x + a_1^\top y\\
    \ell_b(x,y) &= x^\top A_0^\top A_0 x + x^\top A_1 y + y^\top A_2^\top A_2 y + a_0^\top x + a_1^\top y,
\end{split}
\end{align}
where $A_0, A_1, A_2, A_3, a_0, a_1$ are matrices and vectors with components sampled from a normal distribution $\mathcal{N}(0,1)$, and $x^2$ is the vector given by component-wise multiplication of $x$. $\ell_b$ is a convex loss on the sphere, while $\ell_a$ is not.
We run Langevin Descent-Ascent (updates of positions) and WFR Descent-Ascent
(updates of weights and positions), and compare it with mirror descent (updates of weights).We note that the computation of the NI error \eqref{eq:nikaido_def} entails solving two optimization problems on measures, or equivalently in parameter space. We solve each of them by performing 2000 gradient acsent runs with random uniform initialization and selecting the highed minimum final value. This gives a lower bound on the NI error which is precise enough for our purposes. We perform
time averaging on the weights of mirror descent and WFR-DA, but not on the
positions of WFR-DA because that would incur an $O(t)$ overhead on memory.

\paragraph{Results.} Wirror descent
performs like WFR-DA in low dimensions, but suffers strongly from the curse of
dimensionality (\autoref{fig:polygames}). On the other hand, algorithms that
incorporate a transport term keep performing well in high dimensions. In
particular, WFR-DA is consistently the algorithm with lowest NI error.  Notice
that the errors in the $n=50$ and $n=100$ plots do not differ much, confirming
that we reach a mean-field regime.

\subsection{Training GAN mixtures}


 We now use WFR-DA to train mixtures of generator networks. We consider the Wasserstein-GAN \citep{arjovsky2017wasserstein} setting. We seek to approximate a distribution $\Pp_{\text{data}}$ with a distribution $\Gg_x$, defined as the push-forward of a noise distribution $\Nn(0,I)$ by a neural-network $g_x$. The discrepancy between $\Pp_{\text{data}}$ and $\Gg_x$ is estimated by a neural-network discriminator $f_y$, leading to the problem
\begin{equation}
    \min_{x} \max_{y} \ell(x, y) \triangleq \EE_{a \sim p_{\text{data}}}
    [f_y(a)] - \EE_{\varepsilon \sim \Nn(0, I)}
    [f_y(g_x(\varepsilon))].
\end{equation}
We lift this problem in the space of distributions over the parameters $x$ and $y$ (see \autoref{subsec:minimax}),
that we represent through weighted discrete distributions of
 $\sum_{i=1}^p w^{(i)}_x \delta_{x^{(i)}}$ and
 $\sum_{j=1}^q w^{(j)}_y \delta_{y^{(j)}}$. We solve
\begin{equation}\label{eq:gan}
    \min_{x^{(i)}, w_x \in \triangle_p} \max_{y^{(j)}, w_y \in \triangle^q} \sum_{i=1}^p \sum_{j=1}^q w^{(i)}_x w^{(j)}_y \ell(x^{(i)}, y^{(j)})~,
\end{equation}
using \autoref{alg:WFR_dynamics}, where $\triangle^q$ is the $q$-dimensional simplex.
 The optimal generation strategy corresponding
 to an equilibrium point $(x^{(i)})_i, w_x, (y^{(j)})_j, w_y$ is
 then to randomly select a generator $g_{x_I}$ with $I$ sampled
 among $[n]$ with probability $w^{(i)}_x$, and use it to generate $g_{x_I}(\varepsilon)$, with $\varepsilon \sim \Nn(0, I)$.
 Training mixtures of generators has been proposed by
 \citet{ghosh_multi-agent_2018}, with a tweaked discriminator loss. Our formulation only involves a lifting in the space of measures, and uses a new training algorithm.

\paragraph{Results on 2D GMMs.} We first set $\Pp_{\text{data}}$ to be an
$8$-mode mixture of Gaussians in two dimensions. We use the original W-GAN loss, with weight cropping for the
discriminators $(f_{y^{(j)}})_j$. We measure the interest of using mixtures when a single generator $g_{x^{(i)}}$ cannot fit $\Pp_{\text{data}}$ (single-layer MLP), and when it can (4-layer MLP).
We report results in \autoref{fig:synthetic_gan}, measuring the log likelihood of $\Gg_x$ for the GMM during training.  The WFR dynamic is stable even with few particles. When training under-parametrized generators, using mixtures permits faster convergence (in terms of generator updates). In the over-parametrized
setting, training a single generator or a mixture of generators perform similarly. WFR-DA is thus useful to train mixtures of simple generators. In this setting, each simple generator identifies modes in the
training data, doing data clustering at no cost (\autoref{fig:synthetic_gan} right).

\paragraph{Results on real data.} We train a mixture of ResNet generators on CIFAR10 and MNIST. We replace the position updates in \autoref{alg:WFR_dynamics} by extrapolated Adam steps \citep{gidel2018variational} to achieve faster convergence, and perform grid search over generator and discriminators learning rates. Convergence curves for the best learning rates are displayed in
\autoref{fig:gan} right, measuring test FID \citep{heusel2017gans}. With a sufficient number of generators and
discriminators $(G > 5, D > 2)$, the model trains as fast as a normal
GAN. WFR-DA is thus stable and efficient even with a reasonable number of particles.
Using the discretized WFR versus the Wasserstein flow provides a slight
improvement over updating parameters only. As with GMMs, each generator trained with WFR-DA becomes specialised in generating a fraction of the target data, thereby identifying clusters.
Those could be used for unsupervised
conditional generation of images.


\section{Conclusions and future work}
We have explored non-convex-non-concave, high-dimensional 
games from the perspective of optimal transport. 
As with non-convex optimization, framing the problem in terms of measures provides geometric benefits, at the expense of moving into non-Euclidean metric spaces over measures. 
Our theoretical results establish approximate mean-field 
convergence for two setups: Langevin Descent-Ascent 
and WFR D-A, and directly applies to GANs, for mixtures of generators and discriminators. 

Despite the positive convergence guarantees our results 
are qualitative in nature, i.e. without rates.
In the entropic case, the unfavorable tradeoff between temperature and convergence of the associated McKean-Vlasov scheme deserves further study, 
maybe through log-Sobolev-type inequalities \citep{Markowich99onthe}.  
In the WFR case, we lack a local convergence analysis explaining the benefits of transport observed empirically, perhaps leveraging sharpness Polyak-Łojasiewicz results such as those in \citep{chizat2019sparse} or \citep{sanjabi2018solving}.
 Finally, in our GAN formulation, each generator is associated to a single particle in a high-dimensional product space of all network parameters, which is not scalable to large population sizes that would approximate their mean-field limit. 
 A natural question is to understand to what extent our framework could be combined with specific choices of architecture, as recently studied in \citep{lei2019sgd}.
 \vfill

\pagebreak

\section*{Broader impact}
We study algorithms designed to find equilibria in games, provide theoretical guarantees of convergence and test their performance empirically. Among other applications, our results give insight into training algorithms for generative adversarial networks (GANs), which are useful for many relevant tasks such as image generation, image-to-image or text-to-image translation and video prediction. 
As always, we note that machine learning improvements like ours come in the form of ``building machines to do X better''.
For a sufficiently malicious or ill-informed choice of X, such as surveillance or recidivism prediction, almost any progress in machine learning might indirectly lead to a negative outcome, and our work is not excluded from that.

\section*{Funding disclosure}
C. Domingo-Enrich thanks J. De Dios Pont for conversations on the subject.
This work is partially supported by the Alfred P. Sloan Foundation, NSF RI-1816753, NSF CAREER CIF 1845360, NSF CHS-1901091, Samsung Electronics, and the Institute for Advanced Study. The work of C. Domingo-Enrich is partially supported by the La Caixa Fellowship.
The work of A. Mensch is supported by the European Research Council (ERC project NORIA). The work of G. Rotskoff is supported by the James S. McDonnell Foundation. 

\bibliography{biblio}

\onecolumn

\appendix
\startcontents[sections]

\printcontents[sections]{l}{1}{\setcounter{tocdepth}{2}}

\section{Lifted dynamics for the Interacting Wasserstein-Fisher-Rao Gradient Flow} \label{subsec:lifted_WFR}
Recall the IWFRGF in \eqref{eq:WFR_eq}, which we reproduce here for convenience.
\begin{align}
    \begin{split} 
        \begin{cases}
        \partial_t \mu_x &= \gamma \nabla_x \cdot (\mu_x \nabla_x V_x(\mu_y,x)) - \alpha \mu_x(V_x(\mu_y,x) - \mathcal{L}(\mu_x,\mu_y)), \quad \mu_x(0) = \mu_{x,0}\\
        \partial_t \mu_y &= -\gamma \nabla_y \cdot (\mu_y \nabla_y V_y(\mu_x,y)) + \alpha \mu_y(V_y(\mu_x,y) -  \mathcal{L}(\mu_x,\mu_y)), \quad \mu_y(0) = \mu_{y,0}
        \end{cases}
    \end{split}
\end{align}

Given $\nu_x \in \mathcal{P}(\mathcal{X} \times \mathbb{R}^+)$ define $\mu_x = \int_{\mathcal{X}} w_x \ d\nu_x(\cdot, w_x) \in \mathcal{P}(\mathcal{X})$, that is
\begin{align} \label{eq:h_def}
    \int_\mathcal{X} \phi(\xx) \ d\mu_x(\xx) = \int_\mathcal{X \times \mathbb{R}^{+}} w_x \phi(\xx) \ d\nu_x(\xx,w_x),
\end{align}
for all $\phi \in C(\mathcal{X})$. Given $\nu_y \in \mathcal{P}(\mathcal{Y} \times \mathbb{R}^+)$, define $\mu_y = \int_{\mathcal{X}} w_y \ d\nu_y(\cdot, w_y) \in \mathcal{P}(\mathcal{Y})$ analogously. We say that $\nu_x, \nu_y$ are ``lifted'' measures of $\mu_x, \mu_y$, and reciprocally $\mu_x, \mu_y$ are ``projected'' measures of $\nu_x, \nu_y$. 

By \autoref{lem:lifted_dynamics} below, we can view a solution of \eqref{eq:WFR_eq} as the projection of a solution of the following dynamics on the lifted domains $\mathcal{X} \times \mathbb{R}^{+}$ and $\mathcal{Y} \times \mathbb{R}^{+}$: 
\begin{equation}
     \begin{split} \label{eq:interacting_flow_wfr2}
        \begin{cases} 
        \partial_t \nu_x = \nabla_{w_x,x} \cdot (\nu_x {g}_{\mu_y}(x,w_x)), \quad \nu_{x}(0) = \mu_{x,0}\times {\delta}_{w_x = 1}\\
        \partial_t \nu_y = -\nabla_{w_y,y} \cdot (\nu_y g_{\mu_x}(y,w_y)), \quad \nu_{y}(0) = \mu_{y,0}\times {\delta}_{w_y = 1}
        \end{cases}
    \end{split}
\end{equation}
where
\begin{align}
\begin{split}
g_{\mu_y}(x,w_x) &= (\alpha w_x (V_x(\mu_y, x)-\mathcal{L}(\mu_x,\mu_y)), \gamma \nabla_x V_x(\mu_y, x))),\\
g_{\mu_x}(y,w_y) &= (\alpha w_y (V_y(\mu_x, x)-\mathcal{L}(\mu_x,\mu_y)), \gamma \nabla_y V_y(\mu_x, y))).
\end{split}
\end{align}

\begin{lemma} \label{lem:lifted_dynamics}
For a solution $\nu_x : [0,T] \rightarrow \mathcal{P}(\mathcal{X}\times \mathbb{R}^{+}), \nu_y : [0,T] \rightarrow \mathcal{P}(\mathcal{Y} \times \mathbb{R}^{+})$ of \eqref{eq:interacting_flow_wfr2}, the projections $\mu_x, \mu_y$ are solutions of \eqref{eq:WFR_eq}.
\end{lemma}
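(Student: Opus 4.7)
The plan is to verify the lemma via the weak formulation of both PDEs, using the fact that the projection $\mu_x$ is defined by $\int_\Xx \phi\,d\mu_x = \int_{\Xx\times\RR^+} w_x\,\phi(x)\,d\nu_x(x,w_x)$ for every test function $\phi$, and similarly for $\mu_y$. Because the PDEs in \eqref{eq:WFR_eq} and \eqref{eq:interacting_flow_wfr2} are both interpreted in the sense of distributions, the strategy is to pick an arbitrary $\phi\in C^1(\Xx)$, lift it to the test function $\psi(x,w_x)\triangleq w_x\phi(x)$ on $\Xx\times\RR^+$, and show that testing the lifted equation against $\psi$ reproduces the weak form of \eqref{eq:WFR_eq} tested against $\phi$.

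Concretely, first I would note that $\frac{d}{dt}\int \phi\,d\mu_x = \frac{d}{dt}\int \psi\,d\nu_x$ by the defining relation \eqref{eq:h_def}. Next, applying the lifted PDE \eqref{eq:interacting_flow_wfr2} and an integration by parts (justified by compact support in $x$ on a Riemannian manifold without boundary, and by the $w_x$-linearity of $\psi$ together with the explicit structure of $g_{\mu_y}$), I would get
\begin{equation}
\frac{d}{dt}\int \psi\,d\nu_x \;=\; -\int \nabla_{x,w_x}\psi \cdot g_{\mu_y}(x,w_x)\,d\nu_x.
\end{equation}
A direct computation gives $\partial_{w_x}\psi = \phi(x)$ and $\nabla_x\psi = w_x\nabla_x\phi(x)$, so the integrand becomes
\begin{equation}
\alpha\,w_x\,\phi(x)\bigl(V_x(\mu_y,x)-\Ll(\mu_x,\mu_y)\bigr) \;+\; \gamma\,w_x\,\nabla_x\phi(x)\cdot\nabla_x V_x(\mu_y,x).
\end{equation}
Applying \eqref{eq:h_def} again to collapse the $w_x$ factor yields exactly the weak form of the first equation in \eqref{eq:WFR_eq}, after one more integration by parts on the transport term. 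The same argument applied to a test function $\phi\in C^1(\Yy)$ and the lifted equation for $\nu_y$ gives the second equation of \eqref{eq:WFR_eq}.

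Finally, I would check the initial condition: by construction $\nu_x(0) = \mu_{x,0}\times \delta_{w_x=1}$, so $\int \phi\,d\mu_x(0) = \int w_x\phi(x)\,d(\mu_{x,0}\times\delta_{w_x=1}) = \int\phi\,d\mu_{x,0}$, and analogously for $y$. The main (and essentially only) subtlety is justifying the integration by parts in the $w_x$ variable, since $\RR^+$ has a boundary at $w_x=0$: here the key observation is that $\psi$ carries a $w_x$ factor, so the boundary term at $w_x=0$ vanishes, while decay/compact support properties of $\nu_x$ in $w_x$ (inherited from the initial condition concentrated at $w_x=1$ and the Lipschitz bound on $g_{\mu_y}$, which keeps the support of $\nu_x(t,\cdot)$ within a bounded strip of $w_x$-values for any finite $t$) take care of the behaviour at infinity. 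Once this point is addressed, the equivalence of the two weak formulations is immediate.
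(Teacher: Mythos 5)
Your argument is essentially the same as the paper's: both take the weak form of the lifted PDE \eqref{eq:interacting_flow_wfr2}, test it against the lifted test function $\psi(x,w_x)=w_x\phi(x)$, use $\partial_{w_x}\psi=\phi$ and $\nabla_x\psi=w_x\nabla_x\phi$ to reduce the integrand, and invoke \eqref{eq:h_def} to collapse the $w_x$ factor into $d\mu_x$, recovering the weak form \eqref{eq:weak_nonlifted}. Your extra observations (the vanishing boundary term at $w_x=0$ due to the $w_x$ factor in $\psi$, control of the $w_x$-support, and the initial-condition check) are reasonable elaborations that the paper leaves implicit, but they do not change the route.
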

That is, given any $\phi_x \in \mathcal{C}^1(\mathcal{X}), \phi_y \in \mathcal{C}^1(\mathcal{Y})$, we have
\begin{align}
    \begin{split} \label{eq:weak_nonlifted}
        \frac{d}{dt} \int_{\mathcal{X}} \phi_x(x) \ d\mu_x &= - \gamma \int_{\mathcal{X}} \nabla_x \phi_x(x) \cdot \nabla_x V_x(\mu_y,x) \ d\mu_x - \alpha \int_{\mathcal{X}}  \phi_x(x) (V_x(\mu_y,x) - \mathcal{L}(\mu_x,\mu_y)) \ d\mu_x,\\
        \frac{d}{dt} \int_{\mathcal{Y}} \phi_y(y) \ d\mu_y &= \gamma \int_{\mathcal{Y}} \nabla_y \phi_y(y) \cdot \nabla_y V_y(\mu_x,y)) \ d\mu_y + \alpha \int_{\mathcal{Y}} \phi_y(y) (V_y(\mu_x,y) -  \mathcal{L}(\mu_x,\mu_y)) \ d\mu_y,\\
        \mu_x(0) &= \mu_{x,0}, \quad \mu_y(0) = \mu_{y,0}
    \end{split}
\end{align}
From \eqref{eq:interacting_flow_wfr2} in the weak form, we obtain that given any $\psi_x \in \mathcal{C}^1(\mathcal{X} \times \mathbb{R}^{+}), \psi_y \in \mathcal{C}^1(\mathcal{Y} \times \mathbb{R}^{+})$,
\begin{align}
    \begin{split} \label{eq:weak_IWFR}
        \frac{d}{dt} \int_{\mathcal{X} \times \mathbb{R}^{+}} \psi_x(x,w_x) \ d\nu_x(x,w_x) &= \int_{\mathcal{X} \times \mathbb{R}^{+}} - \gamma \nabla_x \psi_x(x,w_x) \cdot \nabla_x V_x(\mu_y,x) \\ &- \alpha w_x \frac{d\psi_x}{dw_x}(x,w_x) (V_x(\mu_y,x) - \mathcal{L}(\mu_x,\mu_y)) \ d\mu_x,\\
        \frac{d}{dt} \int_{\mathcal{Y} \times \mathbb{R}^{+}} \psi_y(y,w_y) \ d\nu_y(y,w_y) &= \int_{\mathcal{Y} \times \mathbb{R}^{+}} \gamma \nabla_y \psi_y(y,w_y) \cdot \nabla_y V_y(\mu_x,y) \\ &+ \alpha w_y \frac{d\psi_y}{dw_y}(y,w_y) (V_y(\mu_x,y) -  \mathcal{L}(\mu_x,\mu_y)) \ d\mu_y,\\
        \nu_{x}(0) = \mu_{x,0}\times {\delta}_{w_x = 1}, \quad \nu_{y}(0) &= \mu_{y,0}\times {\delta}_{w_y = 1}. 
    \end{split}
\end{align}
Taking $\psi_x(x,w_x) = w_x \phi_x(x), \psi_y(y,w_y) = w_y \phi_y(y)$ yields
\begin{align}
    \begin{split} \label{eq:lifted_eq_lem}
        \frac{d}{dt} \int_{\mathcal{X} \times \mathbb{R}^{+}} w_x \phi_x(x) \ d\nu_x(x,w_x) &= \int_{\mathcal{X} \times \mathbb{R}^{+}} - \gamma w_x \nabla_x \phi_x(x) \cdot \nabla_x V_x(\mu_y,x) \\ &- \alpha w_x \phi_x(x) (V_x(\mu_y,x) - \mathcal{L}(\mu_x,\mu_y)) \ d\mu_x,\\
        \frac{d}{dt} \int_{\mathcal{Y} \times \mathbb{R}^{+}} w_y \psi_y(y,w_y) \ d\nu_y(y,w_y) &= \int_{\mathcal{Y} \times \mathbb{R}^{+}} \gamma w_y \nabla_y \phi_y(y) \cdot \nabla_y V_y(\mu_x,y) \\ &+ \alpha w_y \phi_y(y) (V_y(\mu_x,y) -  \mathcal{L}(\mu_x,\mu_y)) \ d\mu_y.
    \end{split}
\end{align}

Notice that \eqref{eq:lifted_eq_lem} is indeed \eqref{eq:weak_nonlifted}.
\section{Continuity and convergence properties of the Nikaido-Isoda error} \label{subsec:conv_ni}
\begin{lemma} \label{lem:NI_continuity}
The Nikaido-Isoda error $\text{NI} : \mathcal{P}(\mathcal{X}) \times \mathcal{P}(\mathcal{Y}) \rightarrow \mathbb{R}$ defined in \eqref{eq:nikaido_def} is continuous when we endow $\mathcal{P}(\mathcal{X}), \mathcal{P}(\mathcal{Y})$ with the topology of weak convergence. 
Specifically, it is $\text{Lip}(\ell)$-Lipschitz when we use the distance $\mathcal{W}_1(\mu_x,\mu'_x) + \mathcal{W}_1(\mu_y,\mu'_y)$ between $(\mu_x,\mu_y)$ and $(\mu'_x,\mu'_y)$ in $\mathcal{P}(\mathcal{X}) \times \mathcal{P}(\mathcal{Y})$.
\end{lemma}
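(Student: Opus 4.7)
The plan is to bound each of the two terms in the Nikaido–Isoda functional separately, using the Kantorovich–Rubinstein dual characterization of $\mathcal{W}_1$. Observe that for any $\nu_y \in \mathcal{P}(\mathcal{Y})$, the function $x \mapsto f_{\nu_y}(x) \triangleq \int \ell(x,y)\,d\nu_y(y)$ is $\text{Lip}(\ell)$-Lipschitz: indeed, $|f_{\nu_y}(x) - f_{\nu_y}(x')| \leq \int |\ell(x,y) - \ell(x',y)|\,d\nu_y(y) \leq \text{Lip}(\ell)\,d(x,x')$, where the uniform-in-$y$ Lipschitz constant of $\ell(\cdot,y)$ exists by \autoref{ass:continuous_game} (continuous differentiability plus compactness of $\Yy$). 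Symmetrically, $y \mapsto \int \ell(x,y)\,d\nu_x(x)$ is $\text{Lip}(\ell)$-Lipschitz uniformly in $\nu_x$.

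First I would apply the Kantorovich–Rubinstein duality $\mathcal{W}_1(\mu_x,\mu'_x) = \sup_{g : \text{Lip}(g) \leq 1} |\int g\,d\mu_x - \int g\,d\mu'_x|$ to the $\text{Lip}(\ell)$-Lipschitz test function $f_{\nu_y}/\text{Lip}(\ell)$, yielding
\begin{equation}
    \bigl|\Ll(\mu_x,\nu_y) - \Ll(\mu'_x,\nu_y)\bigr| = \Bigl|\int f_{\nu_y}\,d(\mu_x - \mu'_x)\Bigr| \leq \text{Lip}(\ell)\,\mathcal{W}_1(\mu_x,\mu'_x),
\end{equation}
uniformly in $\nu_y \in \mathcal{P}(\Yy)$. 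Taking suprema on both sides over $\nu_y$ and applying the elementary fact $|\sup_a F(a) - \sup_a G(a)| \leq \sup_a |F(a) - G(a)|$ gives
\begin{equation}
    \Bigl|\sup_{\nu_y} \Ll(\mu_x,\nu_y) - \sup_{\nu_y} \Ll(\mu'_x,\nu_y)\Bigr| \leq \text{Lip}(\ell)\,\mathcal{W}_1(\mu_x,\mu'_x).
\end{equation}
An identical argument (with $\inf$ in place of $\sup$) controls the second piece of $\text{NI}$ by $\text{Lip}(\ell)\,\mathcal{W}_1(\mu_y,\mu'_y)$.

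Combining the two bounds via the triangle inequality yields
\begin{equation}
    \bigl|\text{NI}(\mu_x,\mu_y) - \text{NI}(\mu'_x,\mu'_y)\bigr| \leq \text{Lip}(\ell)\bigl(\mathcal{W}_1(\mu_x,\mu'_x) + \mathcal{W}_1(\mu_y,\mu'_y)\bigr),
\end{equation}
which is the stated Lipschitz bound. For the weak-continuity claim, I would invoke the fact that on the compact spaces $\Xx,\Yy$ (\autoref{ass:continuous_game}) the $\mathcal{W}_1$ metric metrizes weak convergence of probability measures, so weak convergence $\mu_x^n \rightharpoonup \mu_x$, $\mu_y^n \rightharpoonup \mu_y$ implies $\mathcal{W}_1$-convergence, hence convergence of $\text{NI}$ by the Lipschitz bound.

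The only mild obstacles are bookkeeping: (i) checking that the sup/inf in the definition of $\text{NI}$ are actually attained or at least finite (follows from boundedness of $\ell$ on the compact set $\Xx \times \Yy$), and (ii) extracting a uniform Lipschitz constant for $\ell$ from \autoref{ass:continuous_game}, which only directly asserts Lipschitz gradients; this is immediate because continuously differentiable functions on a compact Riemannian manifold are automatically Lipschitz, and the constant $\text{Lip}(\ell)$ is understood here as the sup of per-coordinate Lipschitz constants.
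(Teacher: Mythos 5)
Your proof is correct and follows essentially the same route as the paper's: the key observation in both is that $x \mapsto \int \ell(x,y)\,d\nu_y(y)$ is $\text{Lip}(\ell)$-Lipschitz uniformly in $\nu_y$, which combined with Kantorovich–Rubinstein duality gives the per-term bound, and the sup (resp. inf) is then controlled and the two pieces combined by the triangle inequality. The only cosmetic difference is that you invoke the elementary inequality $|\sup_a F(a) - \sup_a G(a)| \leq \sup_a |F(a) - G(a)|$ directly, whereas the paper reaches the same conclusion by an add-and-subtract manipulation inside the suprema followed by symmetrizing in $\mu_x, \mu'_x$; these are the same argument expressed slightly differently.
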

\begin{proof}
For any $\mu_y$, the function $V_x(\mu_y, \cdot) : \mathcal{X} \rightarrow \mathbb{R}$ defined as $x \mapsto \int \ell(x, y) \ d\mu_y$ is continuous and it has the same Lipschitz constant $\text{Lip}(\ell)$ as $\ell$. Hence, for any $\mu_x,\mu'_x \in \mathcal{P}(\mathcal{X})$, 
\begin{align}
\begin{split}
    &\sup_{\mu_y \in \mathcal{P}(\mathcal{Y})} \mathcal{L}(\mu_x,\mu_y) - \sup_{\mu_y \in \mathcal{P}(\mathcal{Y})} \mathcal{L}(\mu'_x,\mu_y) = \sup_{\mu_y \in \mathcal{P}(\mathcal{Y})} \int V_x(\mu_y,x) d\mu_x - \sup_{\mu_y \in \mathcal{P}(\mathcal{Y})} \int V_x(\mu_y,x) d\mu'_x \\&\leq \sup_{\mu_y \in \mathcal{P}(\mathcal{Y})} \int V_x(\mu_y,x) d\mu'_x + \sup_{\mu_y \in \mathcal{P}(\mathcal{Y})} \int V_x(\mu_y,x) d(\mu_x - \mu'_x) - \sup_{\mu_y \in \mathcal{P}(\mathcal{Y})} \int V_x(\mu_y,x) d\mu'_x \\&= \sup_{\mu_y \in \mathcal{P}(\mathcal{Y})} \int V_x(\mu_y,x) d(\mu_x - \mu'_x) \leq \text{Lip}(\ell) \mathcal{W}_1(\mu_x,\mu'_x)
\end{split}
\end{align}
The same inequality interchanging the roles of $\mu_x, \mu'_x$ shows that $|\sup_{\mu_y \in \mathcal{P}(\mathcal{Y})} \mathcal{L}(\mu_x,\mu_y) - \sup_{\mu_y \in \mathcal{P}(\mathcal{Y})} \mathcal{L}(\mu'_x,\mu_y)| \leq \text{Lip}(\ell) \mathcal{W}_1(\mu_x,\mu'_x)$ holds.
An analogous reasoning for $\ell(\mu_x, \cdot) : \mathcal{Y} \rightarrow \mathbb{R}$ and the triangle inequality complete the proof.
\end{proof}
\begin{lemma} \label{lem:conv_ni}
    Suppose that $(\mu_x^n)_{n \in \mathbb{N}}$ is a sequence of random elements valued in $\mathcal{P}(\mathcal{X})$ such that 
    \begin{equation} 
    \mathbb{E}[\mathcal{W}_{2}^2(\mu_x^n, \mu_x)] \xrightarrow{n \rightarrow \infty} 0,
    \end{equation}
    where $\mu_x \in \mathcal{P}(X)$. Analogously, suppose that $(\mu_y^n)_{n \in \mathbb{N}}$ is a sequence of random elements valued in $\mathcal{P}(\mathcal{Y})$ such that 
    \begin{equation}
        \mathbb{E}[\mathcal{W}_{2}^2(\mu_y^n, \mu_y)] \xrightarrow{n \rightarrow \infty} 0,
    \end{equation}
    where $\mu_y \in \mathcal{P}(Y)$. 
    
    Then,
    \begin{equation}
        \mathbb{E}[|\text{NI}(\mu_x^n,\mu_y^n)-\text{NI}(\mu_x,\mu_y)|] \xrightarrow{n \rightarrow \infty} 0
    \end{equation}
\end{lemma}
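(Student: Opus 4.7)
The strategy is to combine the Lipschitz continuity of the NI error established in Lemma~\ref{lem:NI_continuity} with the standard inequality $\mathcal{W}_1 \leq \mathcal{W}_2$ and Jensen's inequality to convert convergence in expected squared $\mathcal{W}_2$ into convergence in expected absolute NI error.

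First, I would invoke Lemma~\ref{lem:NI_continuity}, which gives the deterministic pointwise bound
\begin{equation}
|\mathrm{NI}(\mu_x^n,\mu_y^n) - \mathrm{NI}(\mu_x,\mu_y)| \leq \mathrm{Lip}(\ell)\bigl(\mathcal{W}_1(\mu_x^n,\mu_x) + \mathcal{W}_1(\mu_y^n,\mu_y)\bigr).
\end{equation}
Next, I would use the fact that $\mathcal{W}_p$ is nondecreasing in $p$ (a consequence of Jensen's inequality applied to the coupling), so $\mathcal{W}_1(\mu_x^n,\mu_x) \leq \mathcal{W}_2(\mu_x^n,\mu_x)$ and similarly for the $y$ marginal. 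Combining, we get
\begin{equation}
|\mathrm{NI}(\mu_x^n,\mu_y^n) - \mathrm{NI}(\mu_x,\mu_y)| \leq \mathrm{Lip}(\ell)\bigl(\mathcal{W}_2(\mu_x^n,\mu_x) + \mathcal{W}_2(\mu_y^n,\mu_y)\bigr).
\end{equation}

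Taking expectations of both sides and applying Jensen's inequality to the concave function $\sqrt{\cdot}$ gives
\begin{equation}
\mathbb{E}\bigl[|\mathrm{NI}(\mu_x^n,\mu_y^n) - \mathrm{NI}(\mu_x,\mu_y)|\bigr] \leq \mathrm{Lip}(\ell)\Bigl(\sqrt{\mathbb{E}[\mathcal{W}_2^2(\mu_x^n,\mu_x)]} + \sqrt{\mathbb{E}[\mathcal{W}_2^2(\mu_y^n,\mu_y)]}\Bigr),
\end{equation}
and by hypothesis both terms on the right-hand side tend to zero as $n \to \infty$, yielding the conclusion.

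The proof is essentially mechanical once Lemma~\ref{lem:NI_continuity} is in hand; there is no real obstacle. The only point worth double-checking is the validity of $\mathcal{W}_1 \leq \mathcal{W}_2$ on the compact manifolds $\mathcal{X}, \mathcal{Y}$, which follows from applying Jensen to any optimal $\mathcal{W}_2$-coupling, and the passage from $\mathbb{E}[\mathcal{W}_2^2] \to 0$ to $\mathbb{E}[\mathcal{W}_2] \to 0$, which is Jensen again.
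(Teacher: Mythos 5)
Your proof is correct and follows essentially the same route as the paper's: both invoke Lemma~\ref{lem:NI_continuity} for the Lipschitz bound, pass from $\mathcal{W}_1$ to $\mathcal{W}_2$, and then use $\mathbb{E}[\mathcal{W}_2] \leq \sqrt{\mathbb{E}[\mathcal{W}_2^2]}$ (you cite Jensen; the paper cites Cauchy--Schwarz, but these are interchangeable here). No gaps.
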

\begin{proof}
First,
\begin{equation} \label{eq:2_CS}
    \mathbb{E}[\mathcal{W}_{1}(\mu_x^n, \mu_x)] \leq \mathbb{E}[\mathcal{W}_{2}(\mu_x^n, \mu_x)] \leq \left(\mathbb{E}[\mathcal{W}_{2}^2(\mu_x^n, \mu_x)]\right)^{1/2},
\end{equation}
which results from two applications of the Cauchy-Schwarz inequality on the appropriate scalar products. An analogous inequality holds for $\mathbb{E}[\mathcal{W}_{1}(\mu_y^n, \mu_y)]$. Hence, by \autoref{lem:NI_continuity},
\begin{align}
    \begin{split}
        \mathbb{E}[|\text{NI}(\mu_x^n,\mu_y^n)-\text{NI}(\mu_x,\mu_y)|] &\leq \text{Lip}(\ell) \mathbb{E}[\mathcal{W}_1(\mu_x^n,\mu_x) + \mathcal{W}_1(\mu_y^n,\mu_y)] \\ &\leq \text{Lip}(\ell) \left( \left(\mathbb{E}[\mathcal{W}_{2}^2(\mu_x^n, \mu_x)]\right)^{1/2} + \left(\mathbb{E}[\mathcal{W}_{2}^2(\mu_x^n, \mu_x)]\right)^{1/2}\right) \\ &\leq \text{Lip}(\ell) \sqrt{2} \left( \mathbb{E}[\mathcal{W}_{2}^2(\mu_x^n, \mu_x)] + \mathbb{E}[\mathcal{W}_{2}^2(\mu_x^n, \mu_x)]\right)^{1/2},
    \end{split}
\end{align}
where the second inequality uses \eqref{eq:2_CS} and the third inequality is another application of the Cauchy-Schwarz inequality. Since the right hand side converges to 0 by assumption, this concludes the proof.
\end{proof}
\section{Proof of \autoref{thm:langevin_unique}} \label{sec:pf_thm1}
We restate 
\autoref{thm:langevin_unique}
for convenience.

\begingroup
\def\thetheorem{\ref{thm:langevin_unique}}
\begin{theorem} 
Suppose that \autoref{ass:continuous_game} holds, that $\ell \in C^{2,\alpha}(\mathcal{X} \times \mathcal{Y})$ for some $\alpha \in (0,1)$ and that the initial measures $\mu_{x,0}, \mu_{y,0}$ have densities in $L^1(\mathcal{X}), L^1(\mathcal{Y})$. If a solution $(\mu_x(t), \mu_y(t))$ of the ERIWGF \eqref{eq:entropic_interacting_flow} converges in time, it must converge to the point $(\hmu_x,\hmu_y)$ which is the unique fixed point of the problem 
\begin{equation} 
        \rho_x(x) = \frac{1}{Z_x} e^{- \beta \int \ell(x,y) \ d\mu_y(y)}, \quad
        \rho_y(y) = \frac{1}{Z_y} e^{\beta \int \ell(x,y) \ d\mu_x(x)}.
\end{equation}
$(\hmu_x,\hmu_y)$ is an $\epsilon$-Nash equilibrium of the game given by $\mathcal{L}$ when
$    \beta \geq \frac{4}{\epsilon} \log \left(2 \frac{1-V_{\delta}}{V_{\delta}} (2 K_{\ell}/\epsilon -1) \right),$
where $K_{\ell} := \max_{\xx,\yy} \ell(\xx,\yy) - \min_{\xx,\yy} \ell(\xx,\yy)$ is the length of the range of $\ell$, $\delta := \epsilon/(2\text{Lip}(\ell))$ 
and $V_{\delta}$ is a lower bound on the volume of a ball of radius $\delta$ in $\mathcal{X},\mathcal{Y}$.
\end{theorem}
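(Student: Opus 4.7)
The plan is to decouple the two parts of the theorem: first identify the time-stationary points of the ERIWGF and prove they are unique, and then quantify how close any such stationary point is to a true Nash equilibrium in terms of $\beta$.

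\textbf{Stationary points are Gibbs measures.} I would rewrite the ERIWGF \eqref{eq:entropic_interacting_flow} in free-energy form,
\begin{equation*}
    \partial_t \mu_x = \nabla_x \cdot \bigl(\mu_x \nabla_x (V_x(\mu_y,\cdot) + \beta^{-1}\log \mu_x)\bigr), \qquad
    \partial_t \mu_y = -\nabla_y \cdot \bigl(\mu_y \nabla_y (V_y(\mu_x,\cdot) - \beta^{-1}\log \mu_y)\bigr),
\end{equation*}
using the assumed $L^1$-density of the initial data (which, together with $\ell\in C^2$ and standard parabolic regularity for Fokker-Planck systems, propagates smoothness). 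A time-stationary solution then satisfies $\nabla_x(V_x(\hat\mu_y,\cdot)+\beta^{-1}\log\hat\mu_x)=0$ on the support of $\hat\mu_x$; because $\mathcal{X}$ is a connected compact manifold this forces $V_x(\hat\mu_y,\cdot)+\beta^{-1}\log\hat\mu_x\equiv c_x$, which is exactly the Gibbs relation $\hat\mu_x\propto \exp(-\beta V_x(\hat\mu_y,\cdot))$ in \eqref{eq:exp}. The same argument on the $y$-equation gives the companion relation. The rigorous version computes the entropy-production functional and observes that its vanishing is equivalent to the above.

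\textbf{Uniqueness of the fixed point.} The Gibbs equations are precisely the first-order optimality conditions for the saddle points of the entropy-regularized game
\begin{equation*}
    \mathcal{L}_\beta(\mu_x,\mu_y) = \mathcal{L}(\mu_x,\mu_y) + \beta^{-1}H(\mu_y) - \beta^{-1}H(\mu_x).
\end{equation*}
Since $\mathcal{L}$ is bilinear and $H$ is strictly concave on $\mathcal{P}(\mathcal{X})$ and $\mathcal{P}(\mathcal{Y})$, $\mathcal{L}_\beta$ is strictly convex in $\mu_x$ and strictly concave in $\mu_y$. Sion's minimax theorem gives existence of a saddle, and the strict convex-concave structure gives uniqueness via the standard chain $\mathcal{L}_\beta(\mu_x^1,\mu_y^2)\geq \mathcal{L}_\beta(\mu_x^2,\mu_y^2)\geq \mathcal{L}_\beta(\mu_x^2,\mu_y^1)\geq \mathcal{L}_\beta(\mu_x^1,\mu_y^1)\geq \mathcal{L}_\beta(\mu_x^1,\mu_y^2)$ for two candidate saddles. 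Hence the system \eqref{eq:exp} admits exactly one solution $(\hat\mu_x,\hat\mu_y)$, which is the only possible time-limit.

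\textbf{$\epsilon$-Nash bound via Gibbs concentration.} Since $\mathcal{L}$ is linear in each marginal, the sup/inf in \eqref{eq:nikaido_def} are attained at Diracs: with $y^\star\in\argmax_y V_y(\hat\mu_x,y)$ and $x^\star\in\argmin_x V_x(\hat\mu_y,x)$,
\begin{equation*}
    \mathrm{NI}(\hat\mu_x,\hat\mu_y) = \int \bigl(V_y(\hat\mu_x,y^\star)-V_y(\hat\mu_x,y)\bigr) d\hat\mu_y(y) + \int \bigl(V_x(\hat\mu_y,x)-V_x(\hat\mu_y,x^\star)\bigr) d\hat\mu_x(x),
\end{equation*}
where the identity $\mathcal{L}(\hat\mu_x,\hat\mu_y)=\int V_y\,d\hat\mu_y=\int V_x\,d\hat\mu_x$ has been used. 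Focus on the $y$-term: the integrand $f(y):=V_y(\hat\mu_x,y^\star)-V_y(\hat\mu_x,y)$ is non-negative, bounded by $K_\ell$, and $\mathrm{Lip}(\ell)$-Lipschitz with $f(y^\star)=0$. By \eqref{eq:exp}, $d\hat\mu_y\propto e^{-\beta f}\,dy$ (the constant $e^{\beta V_y(\hat\mu_x,y^\star)}$ cancels). With $\delta=\epsilon/(2\mathrm{Lip}(\ell))$ the ball $B(y^\star,\delta)$ is contained in $\{f\leq \epsilon/2\}$, so the partition function is lower bounded by $V_\delta e^{-\beta\epsilon/2}$, yielding the tail estimate $\hat\mu_y(\{f>t\})\leq \frac{1-V_\delta}{V_\delta}e^{-\beta(t-\epsilon/2)}$ for every $t\geq \epsilon/2$. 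Integrating this via the layer-cake formula $\int f\,d\hat\mu_y=\int_0^{K_\ell}\hat\mu_y(\{f>t\})\,dt$ and optimizing the split radius produces a bound of the form $\frac{\epsilon}{2} + C(V_\delta)e^{-\beta\epsilon/4}(K_\ell-\epsilon/2)$; the symmetric argument on $\mathcal{X}$ doubles this, and the stated threshold on $\beta$ is exactly what makes the exponential remainder $\leq \epsilon/2$.

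\textbf{Main obstacle.} Steps 1 and 2 are quite standard once the regularity of $\hat\mu_x,\hat\mu_y$ is secured. The real work is in Step 3: choosing the splitting radius and $\delta$ so that the prefactor $4/\epsilon$ and the logarithmic argument $2\tfrac{1-V_\delta}{V_\delta}(2K_\ell/\epsilon-1)$ come out exactly as stated. A secondary issue is that the theorem only characterizes limits when convergence occurs; proving that convergence in time actually happens is explicitly not attempted (and, as remarked in the surrounding text, available McKean-Vlasov results only cover the high-temperature regime $\beta^{-1}\gtrsim\mathrm{Lip}(\ell)$ that is too weak for arbitrary $\epsilon$-NE).
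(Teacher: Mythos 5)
Your three-step decomposition (stationary $\Rightarrow$ Gibbs, uniqueness of the Gibbs fixed point, Gibbs concentration $\Rightarrow$ $\epsilon$-Nash) mirrors the paper's structure, but each step is argued somewhat differently, and the two places where you depart are worth noting.

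For uniqueness, you pass through strict convex-concavity of $\mathcal{L}_\beta$ and the standard minimax chain; the paper instead adapts Rosen's (1965) relative-entropy argument, subtracting the stationarity conditions at two candidate saddle points and observing that the cross terms cancel, leaving a sum of nonnegative relative entropies that must vanish. Both are correct, and yours is arguably cleaner once one knows $H$ is strictly concave, while the paper's avoids invoking a minimax theorem for existence (it uses Kakutani-Glicksberg-Fan on the continuous best-response maps $m_x,m_y$ instead of Sion). These are genuinely interchangeable routes with similar prerequisites.

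The one place where I'd push back is your first step. You derive the Gibbs identity from the free-energy/entropy-production form, concluding $V_x + \beta^{-1}\log\hat\mu_x \equiv c_x$, but this presupposes that a weakly stationary measure $\hat\mu_x$ already has a differentiable, everywhere-positive density on which $\nabla_x\log\hat\mu_x$ makes pointwise sense. The paper addresses this head-on: it first uses the regularity theory for measure-valued stationary Fokker-Planck equations (Bogachev et al.) to upgrade a weak stationary measure to one with a density, and then invokes Schauder estimates (this is where the $C^{2,\alpha}$ hypothesis on $\ell$ is actually used, and why it appears in the statement) to get a classical stationary solution before reading off the Gibbs form. Waving at ``standard parabolic regularity'' skips the measure-valued-to-classical step, which is the nontrivial part; without it, the claim that the \emph{only} stationary points are Gibbs is not justified. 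For your third step the tail-estimate/layer-cake route is in the same spirit as the paper's three-level-set decomposition and plausibly lands on the same threshold, though you correctly flag that you haven't tracked the constants to produce the stated $\frac{4}{\epsilon}\log(2\frac{1-V_\delta}{V_\delta}(2K_\ell/\epsilon-1))$; the paper's version derives the bound for a single player with a $\frac{2}{\epsilon}\log(\cdot)$ and $K_\ell/\epsilon-1$ form and then substitutes $\epsilon\to\epsilon/2$ to cover both players, which is exactly where the 4 and the $2K_\ell/\epsilon-1$ come from.
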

\addtocounter{theorem}{-1}
\endgroup

\autoref{thm:langevin_unique} is a consequence of the following three results, which we prove separately.

\begin{theorem} \label{thm:fixed_point_problem}
Assume $\mathcal{X}, \mathcal{Y}$ are compact Polish metric spaces equipped with canonical Borel measures, and that $\ell$ is a continuous function on $\mathcal{X} \times \mathcal{Y}$. Let us consider the fixed point problem 
\begin{align} 
    \begin{split} 
    \begin{cases} 
        \rho_x(x) &= \frac{1}{Z_x} e^{- \beta \int \ell(x,y) \ d\mu_y(y)}, \\
        \rho_y(y) &= \frac{1}{Z_y} e^{\beta \int \ell(x,y) \ d\mu_x(x)},
    \end{cases}
    \end{split} 
\end{align}
where $Z_x$ and $Z_y$ are normalization constants and $\rho_x, \rho_y$ are the densities of $\mu_x, \mu_y$. This fixed point problem has a unique solution $(\hmu_x,\hmu_y)$ that is also the unique Nash equilibrium of the game given by $\mathcal{L}_{\beta}(\mu_x,\mu_y) \triangleq \mathcal{L}(\mu_x,\mu_y) +\beta^{-1}(H(\mu_y) -H(\mu_x))$.
\end{theorem}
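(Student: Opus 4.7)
The plan is to convert the fixed point problem into a saddle-point characterization for $\mathcal{L}_\beta$, establish existence by a fixed-point theorem, and then use strict convexity/concavity for uniqueness.

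First I would observe that $\mathcal{L}_\beta(\mu_x,\mu_y)$ is strictly convex in $\mu_x$ and strictly concave in $\mu_y$: the bilinear part $\mathcal{L}$ is linear in each argument, while $-H$ is strictly convex (equivalently, relative entropy against the canonical Borel measure is strictly convex) and $H$ is strictly concave. For fixed $\mu_y$, the inner minimization problem $\min_{\mu_x \in \mathcal{P}(\mathcal{X})} \int V_x(\mu_y,x)\, d\mu_x - \beta^{-1} H(\mu_x)$ is therefore strictly convex over a convex set. Writing a Lagrange multiplier $\lambda$ for $\int d\mu_x = 1$ and taking the first variation yields $V_x(\mu_y,x) + \beta^{-1}(\log \rho_x(x) + 1) = \lambda$, whose unique solution is the Gibbs density $\rho_x(x) = Z_x^{-1} e^{-\beta V_x(\mu_y,x)}$. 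The analogous calculation for $\mu_y$ gives $\rho_y(y) = Z_y^{-1} e^{\beta V_y(\mu_x,y)}$. Thus a pair $(\hat\mu_x,\hat\mu_y)$ satisfies the fixed point equations if and only if each coordinate is the (unique) best response to the other under $\mathcal{L}_\beta$, i.e.\ $(\hat\mu_x,\hat\mu_y)$ is a Nash equilibrium of $\mathcal{L}_\beta$.

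For existence, I would define the self-map $T:\mathcal{P}(\mathcal{X}) \times \mathcal{P}(\mathcal{Y}) \to \mathcal{P}(\mathcal{X}) \times \mathcal{P}(\mathcal{Y})$ sending $(\mu_x,\mu_y)$ to the pair of Gibbs measures on the right-hand side of the fixed-point equations. The product space, equipped with the weak topology, is convex and compact (as $\mathcal{X},\mathcal{Y}$ are compact Polish), and it sits inside the locally convex TVS of pairs of signed measures. Continuity of $T$ in the weak topology follows from the fact that $\ell$ is continuous on the compact product $\mathcal{X}\times\mathcal{Y}$: if $\mu_y^n \rightharpoonup \mu_y$ then $V_x(\mu_y^n,x) \to V_x(\mu_y,x)$ uniformly in $x$ by equicontinuity, so the densities $e^{-\beta V_x(\mu_y^n,\cdot)}/Z_x(\mu_y^n)$ converge uniformly and their normalizations converge accordingly, giving weak convergence of the image (same argument on the $\mu_x$ coordinate). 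Schauder's fixed-point theorem then yields a fixed point, and by the equivalence established above this is a Nash equilibrium of $\mathcal{L}_\beta$.

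For uniqueness, suppose $(\hat\mu_x,\hat\mu_y)$ and $(\tilde\mu_x,\tilde\mu_y)$ are both Nash equilibria. The standard saddle-point chain
\begin{equation}
\mathcal{L}_\beta(\hat\mu_x,\tilde\mu_y) \leq \mathcal{L}_\beta(\hat\mu_x,\hat\mu_y) \leq \mathcal{L}_\beta(\tilde\mu_x,\hat\mu_y) \leq \mathcal{L}_\beta(\tilde\mu_x,\tilde\mu_y) \leq \mathcal{L}_\beta(\hat\mu_x,\tilde\mu_y)
\end{equation}
forces equality throughout. In particular $\tilde\mu_x$ also attains $\min_{\mu_x} \mathcal{L}_\beta(\mu_x,\hat\mu_y)$, and by strict convexity in $\mu_x$ this minimizer is unique, giving $\hat\mu_x = \tilde\mu_x$. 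The analogous argument with strict concavity in $\mu_y$ yields $\hat\mu_y = \tilde\mu_y$. The main obstacle is the existence step: verifying that the Gibbs normalization map is genuinely weakly continuous requires the uniform convergence of $V_x(\mu_y^n,\cdot)$, which is where compactness and uniform continuity of $\ell$ are essential; the calculus of variations step is the other point demanding care, since one must argue that the first-order condition actually identifies the (necessarily absolutely continuous) minimizer among all probability measures, using that any minimizer must have finite entropy and hence a density.
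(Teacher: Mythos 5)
Your proof is correct, and it tracks the paper closely on existence but diverges on uniqueness. For existence, you and the paper both build the single-valued Gibbs best-response map, characterize it via the Euler--Lagrange condition (noting a minimizer must be absolutely continuous to have finite entropy), verify weak continuity, and invoke a fixed-point theorem on the compact convex set $\mathcal{P}(\Xx)\times\mathcal{P}(\Yy)$; the paper phrases this with Kakutani--Glicksberg--Fan but explicitly remarks that Tychonoff/Schauder for single-valued maps would do, which is precisely your route. Your continuity argument (uniform convergence of $V_x(\mu_y^n,\cdot)$ via equicontinuity of $\{\ell(\cdot,y)\}_{y\in\Yy}$) is cleaner than the paper's Portmanteau-plus-dominated-convergence computation, though equivalent. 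For uniqueness, however, the two arguments are genuinely different. You use the classical saddle-point interchangeability chain together with strict convexity of $\mu_x\mapsto -\beta^{-1}H(\mu_x)$ (and strict concavity in $\mu_y$), which forces any two equilibria to share each other's best responses and hence coincide. The paper instead follows Rosen's diagonal strict monotonicity argument: it assumes two distinct solutions, sums the four pairings of first-variation differences against measure differences, shows the bilinear $\ell$ contributions cancel, and is left with a strictly positive sum of symmetrized relative entropies, contradicting the stationarity conditions. Both are valid; your route is more elementary and leans directly on the convex-concave minimax structure, while the paper's Rosen-style argument is the natural one to reach for in general (non-zero-sum) $N$-player settings where the interchangeability chain is unavailable and one must instead verify a monotonicity/diagonal-concavity condition. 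One small precision worth adding if you wrote this up: to apply strict convexity in the uniqueness step you should note, as you do for existence, that any two minimizers of $\mathcal{L}_\beta(\cdot,\hat\mu_y)$ are necessarily absolutely continuous with finite entropy, so the segment joining them lies in the domain where strict convexity holds.
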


\begin{theorem} \label{thm:epsilon_thm}
Let $K_{\ell} := \max_{\xx,\yy} \ell(\xx,\yy) - \min_{\xx,\yy} \ell(\xx,\yy)$ be the length of the range of $\ell$. Let $\epsilon > 0$, 
$\delta := \epsilon/(2\text{Lip}(\ell))$ 
and $V_{\delta}$ be a lower bound on the volume of a ball of radius $\delta$ in $\mathcal{X},\mathcal{Y}$. Then the solution $(\hmu_x,\hmu_y)$ of \eqref{eq:exp} is an $\epsilon$-Nash equilibrium of the game given by $\mathcal{L}$ when
\begin{equation}
    \beta \geq \frac{4}{\epsilon} \log \left(2 \frac{1-V_{\delta}}{V_{\delta}} (2 K_{\ell}/\epsilon -1) \right).
\end{equation}
\end{theorem}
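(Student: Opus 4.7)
The plan is to use \autoref{thm:fixed_point_problem}, which identifies $(\hat{\mu}_x, \hat{\mu}_y)$ as the unique Nash equilibrium of the entropy-regularized functional $\mathcal{L}_\beta$, and to translate this characterization into a Nikaido--Isoda bound for the un-regularized $\mathcal{L}$. First I would use the bilinearity of $\mathcal{L}$ in $(\mu_x, \mu_y)$ to split the NI error into the sum of two non-negative ``regrets''
\begin{equation}
\mathrm{NI}(\hat{\mu}_x, \hat{\mu}_y) = \mathbb{E}_{\hat{\mu}_y}\bigl[v^* - V_y(\hat{\mu}_x, \cdot)\bigr] + \mathbb{E}_{\hat{\mu}_x}\bigl[V_x(\hat{\mu}_y, \cdot) - u^*\bigr],
\end{equation}
where $v^* = \max_y V_y(\hat{\mu}_x, y)$ and $u^* = \min_x V_x(\hat{\mu}_y, x)$. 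By symmetry it suffices to bound the $y$-regret; the $x$-regret is handled identically after swapping $\ell \mapsto -\ell$.

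Next I would set $\delta = \epsilon/(2\,\mathrm{Lip}(\ell))$ and fix a maximizer $y^*$ of $V_y(\hat{\mu}_x, \cdot)$. Lipschitz continuity of $\ell$ ensures $v^* - V_y \leq \epsilon/2$ on the ball $B(y^*, \delta)$, so splitting the expectation at the threshold $\epsilon/2$ gives
\begin{equation}
\mathbb{E}_{\hat{\mu}_y}[v^* - V_y] \leq \tfrac{\epsilon}{2} + \bigl(K_\ell - \tfrac{\epsilon}{2}\bigr)\, \hat{\mu}_y\bigl(\{v^* - V_y > \epsilon/2\}\bigr).
\end{equation}

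The heart of the argument is a Gibbs concentration inequality showing that the tail mass $\hat{\mu}_y(\{v^* - V_y > \epsilon/2\})$ decays exponentially in $\beta$. Using the explicit form $\hat{\rho}_y = e^{\beta V_y(\hat{\mu}_x, \cdot)}/Z_y$ from \eqref{eq:exp}, I would bound the integral of $e^{\beta V_y}$ over the tail set by $(1 - V_\delta)\,e^{\beta(v^* - \epsilon/2)}$, noting that this set is disjoint from $B(y^*, \delta)$ and therefore has Lebesgue measure at most $1 - V_\delta$. The key subtlety, which is the hard part of the proof, is that the naive lower bound $Z_y \geq V_\delta\,e^{\beta(v^* - \epsilon/2)}$ obtained by applying Lipschitz continuity uniformly on $B(y^*, \delta)$ exactly cancels the numerator and produces no $\beta$-decay; instead one must integrate $e^{\beta V_y}$ over a strict interior sub-region of $B(y^*, \delta)$ on which $V_y$ is strictly larger than $v^* - \epsilon/2$ in order to extract the extra exponential gain. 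Doing so carefully will yield a concentration estimate of the form $\hat{\mu}_y(\mathrm{tail}) \leq \tfrac{1 - V_\delta}{V_\delta}\,e^{-\beta \epsilon/4}$.

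Finally I would combine this tail estimate with the error split, add the symmetric $x$-regret bound, require that the total is at most $\epsilon$, and invert the exponential; elementary algebra then produces the logarithmic condition on $\beta$ stated in the theorem, with the factor $2K_\ell/\epsilon - 1$ arising from the coefficient $K_\ell - \epsilon/2$ multiplying the tail mass. The main obstacle is the Gibbs concentration step: the ball radius used to confine the tail set and the (smaller) radius used to lower-bound the normalizer $Z_y$ must be balanced so that the final bound involves precisely $(1-V_\delta)/V_\delta$ rather than a weaker expression involving $V_{\delta/2}$ or other dimension-dependent ball volumes, and so that the exponential decay rate lines up with the target rate $\beta \epsilon/4$ dictated by the theorem.
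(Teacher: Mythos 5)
Your outline correctly reduces the theorem to bounding the two Gibbs-measure regrets $\mathbb{E}_{\hmu_x}[\tilde V_x]$ and $\mathbb{E}_{\hmu_y}[\tilde V_y]$ and correctly identifies the point where a naive two-region split fails: with the tail set and the ball used to lower-bound $Z_y$ both living at the same level $\epsilon/2$, the factors $e^{\beta(v^*-\epsilon/2)}$ cancel and there is no $\beta$-decay. However, the fix you propose — shrinking the ball to a strictly interior sub-region of $B(y^*,\delta)$ so that $V_y$ is strictly above $v^*-\epsilon/2$ there — does not produce the claimed estimate $\hmu_y(\text{tail}) \leq \frac{1-V_\delta}{V_\delta}e^{-\beta\epsilon/4}$. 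If you shrink to $B(y^*,\delta')$ with $\text{Lip}(\ell)\delta'=\epsilon/4$ (so $\delta'=\delta/2$), the Lipschitz bound gives $Z_y\geq V_{\delta/2}e^{\beta(v^*-\epsilon/4)}$ while the tail integral is $\leq(1-V_\delta)e^{\beta(v^*-\epsilon/2)}$, and the ratio is $\frac{1-V_\delta}{V_{\delta/2}}e^{-\beta\epsilon/4}$ — the constant is $V_{\delta/2}$, not $V_\delta$, and no choice of inner radius removes this mismatch: Lipschitz continuity simply forces the sub-region where $V_y$ exceeds $v^*-\epsilon/4$ to be a ball of radius at most $\delta/2$. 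You flagged this as the remaining obstacle but did not resolve it, so the proposal has a genuine gap at its central step.

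The paper resolves it by a different decomposition. It keeps a single ball of radius $\delta$ but partitions $\mathcal{X}$ into three level sets of $\tilde V_x$: a core $\{\tilde V_x\leq\epsilon/2\}$ (which contains $B(\xx_{\min},\delta)$), a middle $\{\epsilon/2<\tilde V_x\leq\epsilon\}$, and a tail $\{\tilde V_x>\epsilon\}$. The expectation is written as a convex combination of the conditional expectation over the middle (trivially $\leq\epsilon$) and the conditional expectation over core $\cup$ tail. For the latter, the exponential gap between the core threshold $\epsilon/2$ and the tail threshold $\epsilon$ yields $q_{\text{core}}/q_{\text{tail}} \geq \frac{V_\delta}{1-V_\delta}e^{\beta\epsilon/2}$ with a single ball volume $V_\delta$, and the condition $(K_\ell-\epsilon)q_{\text{tail}}\leq\frac{\epsilon}{2}q_{\text{core}}$ gives the logarithmic requirement on $\beta$; substituting $\epsilon\mapsto\epsilon/2$ (to cover both halves of the NI error) yields the stated bound. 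The exponential gain therefore comes from the gap between two sub-level thresholds, not from a pair of nested ball radii; that is the idea your proposal is missing.
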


\begin{theorem} \label{thm:stationary_point}
Suppose that \autoref{ass:continuous_game} holds and $\ell \in C^{2,\alpha}(\mathcal{X} \times \mathcal{Y})$ for some $\alpha \in (0,1)$, i.e. the second derivatives of $\ell$ are $\alpha$-Hölder.
Then, there exists only one stationary solution of the ERIWGF \eqref{eq:entropic_interacting_flow} and it is the solution of the fixed point problem \eqref{eq:exp}.
\end{theorem}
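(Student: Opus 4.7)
The plan is to prove that any stationary density pair $(\mu_x,\mu_y)$ of the ERIWGF must solve the Gibbs fixed-point system \eqref{eq:exp}, and then invoke \autoref{thm:fixed_point_problem} to conclude existence and uniqueness. The proof relies on a standard energy identity for the stationary Fokker--Planck equation, made rigorous using the $C^{2,\alpha}$ regularity of $\ell$.

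First I would set $\partial_t \mu_x = 0$ in \eqref{eq:entropic_interacting_flow} and write $\rho_x$ for the density of $\mu_x$, obtaining the stationary elliptic equation
\begin{equation}
\nabla_x \cdot (\rho_x \nabla_x V_x(\mu_y,x)) + \beta^{-1} \Delta_x \rho_x = 0 \quad \text{on } \mathcal{X},
\end{equation}
and the analogous equation (with flipped sign on the drift) for $\rho_y$ on $\mathcal{Y}$. Because $\ell \in C^{2,\alpha}(\mathcal{X}\times\mathcal{Y})$ and $\mu_y \in \mathcal{P}(\mathcal{Y})$, the drift $\nabla_x V_x(\mu_y,\cdot) = \int \nabla_x \ell(\cdot,y)\,d\mu_y(y)$ lies in $C^{1,\alpha}(\mathcal{X})$. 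Schauder theory for linear elliptic equations on the compact manifold without boundary $\mathcal{X}$ then gives $\rho_x \in C^{2,\alpha}(\mathcal{X})$, and the strong maximum principle forces $\rho_x > 0$ everywhere, since $\rho_x$ is a nonnegative, nontrivial solution of a linear elliptic PDE.

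Next I would introduce the potential $U_x \triangleq \log \rho_x + \beta V_x(\mu_y,\cdot) \in C^{2,\alpha}(\mathcal{X})$, well-defined thanks to the strict positivity of $\rho_x$, and rewrite the stationary equation in conservative form $\nabla_x \cdot (\rho_x \nabla_x U_x) = 0$. Multiplying by $U_x$, integrating over $\mathcal{X}$ and using that $\mathcal{X}$ has no boundary, integration by parts yields
\begin{equation}
\int_{\mathcal{X}} \rho_x \, |\nabla_x U_x|^2 \, dx = 0,
\end{equation}
so $\nabla_x U_x \equiv 0$ on $\mathcal{X}$. Connectedness of $\mathcal{X}$ then forces $U_x$ to be a constant, giving $\rho_x \propto e^{-\beta V_x(\mu_y,\cdot)}$; the normalization $\int \rho_x\,dx = 1$ fixes the constant and recovers the first line of \eqref{eq:exp}. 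The identical argument on $\mathcal{Y}$ (whose sign flip is absorbed into the Gibbs exponent) yields the second line. Uniqueness of the resulting fixed point is exactly \autoref{thm:fixed_point_problem}, and the Gibbs pair constructed there is directly checked to be a stationary solution, giving existence as well.

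The main technical obstacle is justifying the energy identity: I must know that $\rho_x$ is smooth and strictly positive on all of $\mathcal{X}$ so that $\log \rho_x$ and $\nabla_x U_x$ are genuine $C^1$ objects and the integration by parts is legal. This is precisely where the $C^{2,\alpha}$ regularity assumption on $\ell$ (and hence of the elliptic coefficients) enters, via Schauder estimates and the strong maximum principle. A minor bookkeeping point is that the computation takes place on a Riemannian manifold rather than on $\mathbb{R}^{d_x}$; this is handled by working in local charts and using that the Laplace--Beltrami operator is self-adjoint with respect to the Riemannian volume on a compact manifold without boundary, so that the integration-by-parts identity holds globally without boundary contributions.
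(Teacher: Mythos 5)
Your strategy shadows the paper's at a high level: reduce to showing that stationary solutions satisfy the Gibbs system \eqref{eq:exp}, then invoke \autoref{thm:fixed_point_problem} for uniqueness of the resulting fixed point, with existence coming from directly verifying that the Gibbs pair is stationary. The energy argument --- rewriting the stationary equation as $\nabla_x\cdot(\rho_x\nabla_x U_x)=0$ with $U_x=\log\rho_x+\beta V_x(\mu_y,\cdot)$, integrating against $U_x$ on a boundaryless manifold, and concluding $\int_{\mathcal X}\rho_x\,|\nabla_x U_x|^2\,dx=0$ --- is a clean and self-contained way to deduce the Gibbs form once a classical, strictly positive density is in hand; this is actually a useful addition, since the paper's proof stops after reducing measure-valued stationary solutions to classical solutions of \eqref{eq:fixed_is_stationary} and leaves the ``classical $\Rightarrow$ Gibbs'' step to cited references.

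The gap is upstream of that computation. The ERIWGF is interpreted weakly, so a priori a stationary solution $\hmu_x$ is only a probability measure satisfying the integrated identity \eqref{eq:weak_stationary_measures}; you cannot directly ``write $\rho_x$ for the density of $\mu_x$'' and then claim ``Schauder theory \ldots gives $\rho_x\in C^{2,\alpha}(\mathcal X)$.'' Schauder estimates bootstrap from an existing function-class solution (at least H\"older or $L^p$), not from a bare measure, so as stated your argument never gets started. The nontrivial step is to show that a measure-valued solution of the double-divergence-form equation $\nabla_x\cdot(\mu_x\nabla_x V_x)+\beta^{-1}\Delta_x\mu_x=0$, with drift in $C^{1,\alpha}$, is absolutely continuous with a continuous density. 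This is precisely the regularity theory the paper invokes: Theorem 2.2 of \cite{bogachev2001onregularity} for the passage from measure-valued stationary solutions to weak solutions with densities, followed by Schauder theory for ``weak $\Rightarrow$ classical.'' Once that $C^{2,\alpha}$ positive density is established, the strong maximum principle, your energy identity, and the appeal to \autoref{thm:fixed_point_problem} all carry through.
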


\subsection{Proof of \autoref{thm:fixed_point_problem}: Preliminaries}
\begin{definition}[Upper hemicontinuity]
A set-valued function $\phi: X \rightarrow 2^{Y}$ is upper hemicontinuous if for every open set $W \subset Y$, the set $\{x|\phi(x) \subset W \}$ is open.
\end{definition}
Alternatively, set-valued functions can be seen as correspondences $\Gamma: X \rightarrow Y$. The graph of $\Gamma$ is $\text{Gr}(\Gamma) = \{(a,b) \in X \times Y | b \in \Gamma(a)\}$. If $\Gamma$ is upper hemicontinuous, then $\text{Gr}(\Gamma)$ is closed. If $Y$ is compact, the converse is also true. 

\begin{definition}[Kakutani map]
Let $X$ and $Y$ be topological vector spaces and $\phi: X \rightarrow 2^Y$ be a set-valued function. If $Y$ is convex, then $\phi$ is termed a Kakutani map if it is upper hemicontinuous and $\phi(x)$ is non-empty, compact and convex for all $x \in X$.
\end{definition}

\begin{theorem}[Kakutani-Glicksberg-Fan] \label{thm:kakutani}
Let $S$ be a non-empty, compact and convex subset of a Hausdorff locally convex topological vector space. Let $\phi: S \rightarrow 2^S$ be a Kakutani map. Then $\phi$ has a fixed point.
\end{theorem}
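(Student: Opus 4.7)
I will prove the Kakutani--Glicksberg--Fan theorem by the classical three-step reduction: take Brouwer's fixed point theorem on a simplex as the sole analytical input; upgrade it to the finite-dimensional Kakutani theorem for upper-hemicontinuous convex-valued correspondences; then extend to compact convex subsets of an arbitrary Hausdorff locally convex topological vector space (LCTVS) via finite-dimensional approximation. Throughout, since $S$ is compact, I will freely use the equivalence between upper hemicontinuity of $\phi$ and closedness of its graph $\mathrm{Gr}(\phi) \subset S \times S$, as noted in the excerpt.

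\textbf{Finite-dimensional step.} Suppose first $S \subset \mathbb{R}^n$ is compact convex. For each integer $k \geq 1$, fix a simplicial triangulation of $S$ with mesh at most $1/k$, pick an arbitrary $f_k(v) \in \phi(v)$ at every vertex $v$, and extend $f_k : S \to S$ affinely on each simplex. Convexity of $S$ ensures $f_k$ is a self-map, continuity is automatic, so Brouwer yields $x_k$ with $f_k(x_k) = x_k$. Extract a subsequence $x_{k_j} \to x^\ast$ by compactness of $S$. Each $x_{k_j}$ is a convex combination of at most $n+1$ values $f_{k_j}(v_{j,i}) \in \phi(v_{j,i})$ at vertices $v_{j,i}$ within distance $1/k_j$ of $x_{k_j}$; since $\phi$ is convex-valued and $\mathrm{Gr}(\phi)$ is closed, passing to the limit of these convex combinations forces $x^\ast \in \phi(x^\ast)$.

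\textbf{Extension to LCTVS.} Let $S$ be compact convex in a Hausdorff locally convex space $E$, and let $\mathcal{U}$ denote the directed set of balanced convex open neighborhoods of $0 \in E$, ordered by reverse inclusion. For each $U \in \mathcal{U}$, compactness yields $x_1, \ldots, x_N \in S$ with $S \subset \bigcup_{i=1}^N (x_i + U)$; choose a continuous partition of unity $\{\lambda_i\}_{i=1}^N$ subordinate to this cover and define the continuous approximation $\pi_U(x) \triangleq \sum_i \lambda_i(x)\, x_i$, which maps $S$ into the finite-dimensional compact convex set $T_U \triangleq \mathrm{conv}\{x_1,\ldots,x_N\} \subset S$. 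Define $\phi_U : T_U \to 2^{T_U}$ by $\phi_U(y) \triangleq \mathrm{conv}\bigl(\pi_U(\phi(y))\bigr)$; continuity of $\pi_U$ and compactness of $\phi(y)$ guarantee that $\phi_U$ is upper hemicontinuous with nonempty compact convex values (the outer $\mathrm{conv}$ being necessary because $\pi_U$ is nonlinear). The finite-dimensional step then delivers $x_U \in \phi_U(x_U)$.

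\textbf{Main obstacle: limit passage via nets.} By compactness of $S$, the net $(x_U)_{U \in \mathcal{U}}$ admits a convergent subnet $x_{U_\alpha} \to x^\ast$. The key estimate is that $\pi_U(z) - z \in U$ for every $z \in S$, so from $x_U \in \phi_U(x_U)$ one can extract $y_U \in \phi(x_U)$ with $x_U - y_U$ lying in a controlled neighborhood of $0$ (shrinking with $U$); along the subnet, $y_{U_\alpha} \to x^\ast$ as well, and closedness of $\mathrm{Gr}(\phi)$ forces $x^\ast \in \phi(x^\ast)$. The principal difficulties are (i) working with nets rather than sequences since the LCTVS topology need not be metrizable, (ii) constructing the partition of unity in a non-metrizable setting (which relies on $E$ being Tychonoff and $S$ compact), and (iii) verifying that the nonlinear retraction $\pi_U$ does not destroy the Kakutani structure of $\phi_U$, which is precisely why I pass to convex hulls when defining $\phi_U$.
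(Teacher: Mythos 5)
The paper does not prove this theorem: it is stated in the appendix as a classical background result (Kakutani--Glicksberg--Fan) and used as a black box in the proof of existence for the fixed-point problem \eqref{eq:exp}, with no proof offered (the result is standard, going back to \citet{glicksberg1952further}). Your proof is therefore supplying an argument the paper deliberately omits, so there is no "paper's own proof" to compare against.

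That said, your proof is correct and follows the standard textbook route: Brouwer on a simplex, upgrade to the finite-dimensional Kakutani theorem by triangulation and affine interpolation of a selection, then extend to a compact convex subset of a Hausdorff locally convex space via Schauder projections $\pi_U$ and a net argument. A few points worth making explicit if this were to be written out in full. In the finite-dimensional step, "passing to the limit of these convex combinations" hides a further diagonal extraction: one must pass to subsequences so that each barycentric coefficient $\lambda_{j,i}$ and each value $f_{k_j}(v_{j,i})$ converge, then invoke closedness of the graph on each pair $(v_{j,i}, f_{k_j}(v_{j,i})) \to (x^\ast, y_i)$ to get $y_i \in \phi(x^\ast)$, and finally convexity of $\phi(x^\ast)$; your sketch is right but compressed. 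In the LCTVS step, the upper hemicontinuity of $\phi_U = \mathrm{conv}(\pi_U \circ \phi)$ is not entirely automatic from "continuity of $\pi_U$ and compactness of $\phi(y)$": one needs Carath\'eodory to bound the number of terms in the convex hull, another extraction, and compactness of the convex hull of a compact set in finite dimension; this is routine but nonzero work. Your "key estimate" $\pi_U(z) - z \in U$ correctly uses that $U$ is balanced (so $x_i - z \in U$ whenever $z \in x_i + U$) and convex (so the barycentric combination stays in $U$); that is the crux that lets the subnet argument close. The remark that compact Hausdorff spaces are normal, hence admit partitions of unity subordinate to finite open covers, is the right justification for the existence of the $\lambda_i$. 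In short: you have given a complete classical proof of a theorem the paper treats as known, and the argument is sound modulo a couple of compressed but standard limit-passage details.
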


\begin{definition}[Lower semi-continuity]
Suppose $X$ is a topological space, $x_{0}$ is a point in $X$ and 
$f:X \rightarrow \mathbb{R} \cup \{-\infty, \infty\}$ is an extended real-valued function. We say that $f$ is lower semi-continuous (l.s.c.) at 
$x_{0}$ if for every 
$\epsilon >0$ there exists a neighborhood $U$ of $x_{0}$ such that $f(x)\geq f(x_{0})-\epsilon$ for all $x$ in $U$ when $f(x_{0})<+\infty$, and $f(x)$ tends to $+\infty$  as $x$ tends towards $x_{0}$ when $f(x_{0})=+\infty$. 
\end{definition}

We can also characterize lower-semicontinuity in terms of level sets. A function is lower semi-continuous if and only if all of its lower level sets $\{x\in X:~f(x)\leq \alpha \}$ are closed. This property will be useful.

\begin{theorem}[Weierstrass theorem for l.s.c. functions] \label{thm:weierstrass}
Let $f : T \rightarrow (-\infty, +\infty]$ be a l.s.c. function on a compact Hausdorff topological space $T$. Then $f$ attains its infimum over $T$, i.e. there exists a minimum of $f$ in $T$.
\end{theorem}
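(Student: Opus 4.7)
The plan is a standard compactness argument using the characterization of lower semi-continuity via closed sublevel sets, which is already highlighted in the excerpt just above the statement. I will let $m \triangleq \inf_{t \in T} f(t) \in [-\infty, +\infty]$ (noting $m < +\infty$ unless $T$ is empty, which we may exclude). The goal is to exhibit some $t^\star \in T$ with $f(t^\star) = m$.

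First I would form the family of sublevel sets $L_\alpha \triangleq \{t \in T : f(t) \leq \alpha\}$ for $\alpha > m$. Two facts about this family are immediate: by the l.s.c. characterization recalled just before the theorem statement, each $L_\alpha$ is closed in $T$; and by the definition of infimum, each $L_\alpha$ is non-empty for every $\alpha > m$. Moreover, for any finite collection $\alpha_1, \ldots, \alpha_n > m$, one has $L_{\alpha_1} \cap \cdots \cap L_{\alpha_n} = L_{\min_i \alpha_i}$, which is non-empty, so $\{L_\alpha\}_{\alpha > m}$ enjoys the finite intersection property.

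The main step is then to invoke compactness of $T$: in a compact Hausdorff space, any family of closed sets with the finite intersection property has non-empty intersection. Hence there exists
\begin{equation}
    t^\star \in \bigcap_{\alpha > m} L_\alpha .
\end{equation}
By construction $f(t^\star) \leq \alpha$ for every $\alpha > m$, so $f(t^\star) \leq m$, while $f(t^\star) \geq m$ holds trivially by definition of $m$. Therefore $f(t^\star) = m$, and this simultaneously forces $m > -\infty$ since the codomain of $f$ excludes $-\infty$.

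There is no real obstacle here: the only subtlety is making sure the case $m = -\infty$ is handled, which the argument above does automatically — if it occurred, the intersection point would satisfy $f(t^\star) = -\infty$, contradicting the codomain. If one prefers a sequential formulation (available, e.g., when $T$ is first countable), one could instead pick a minimizing sequence $(t_n)$ with $f(t_n) \to m$, extract a convergent subnet $t_{n_k} \to t^\star$ by compactness, and use $f(t^\star) \leq \liminf_k f(t_{n_k}) = m$ from l.s.c.; but the closed-sublevel-set route is cleaner in the general Hausdorff setting used here.
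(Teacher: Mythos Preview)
Your proof is correct and follows essentially the same route as the paper: form the closed non-empty sublevel sets $\{f \leq \alpha\}$ for $\alpha$ above the infimum, observe the finite intersection property, and invoke compactness to conclude that their intersection is non-empty. The only minor discrepancy is that your parenthetical ``$m < +\infty$ unless $T$ is empty'' should instead read ``unless $f \equiv +\infty$'' (which the paper handles explicitly as the trivial case), but this does not affect the argument.
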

\begin{proof}
Proof. Let $\alpha_0 = \inf f(T)$. If $\alpha_0 = +\infty$, then $f$ is infinite and the assertion trivially holds. Let $\alpha_0 < +\infty$. Then, for each real $\alpha > \alpha_0$, the set $\{f \leq \alpha\}$ is closed and nonempty. Any finite collection of such sets has a nonempty intersection. By compactness, also the set $\bigcap_{\alpha > \alpha_0} \{ f \leq \alpha \} = \{f \leq \alpha_0 \} = f^{-1}(\alpha_0)$ is nonempty. (In particular, this implies that $\alpha_0$ is
finite.)
\end{proof}

\begin{remark}
By Prokhorov's theorem, since $\mathcal{X}$ and $\mathcal{Y}$ are compact separable metric spaces, $\mathcal{P}(\mathcal{X})$ and $\mathcal{P}(\mathcal{Y})$ are compact in the topology of weak convergence. 
\end{remark}

\subsection{Proof of \autoref{thm:fixed_point_problem}: Existence}
\autoref{prop:p1} and \ref{prop:continuity_m1} are intermediate results, and \autoref{prop:existence} shows existence of the solution.
\begin{lemma} \label{prop:p1}
For any $\mu_y \in \mathcal{P}(\mathcal{Y})$, $\mathcal{L}_{\beta}(\cdot, \mu_y) : \mathcal{P}(\mathcal{X}) \rightarrow \mathbb{R}$ is lower semicontinuous, and it achieves a unique minimum in $\mathcal{P}(\mathcal{X})$. Moreover, the minimum $m_x(\mu_y)$ is absolutely continuous with respect to the Borel measure, it has full support and its density takes the form
\begin{equation} \label{eq:def_m_x}
     \frac{dm_x(\mu_y)}{d\xx}(\xx) = \frac{1}{Z_{\mu_y}} e^{-\beta \int L(\xx,\yy) d\mu_y},
\end{equation}
where $Z_{\mu_y}$ is a normalization constant.

Analogously, for any $\mu_x \in \mathcal{P}(\mathcal{X})$, $- \mathcal{L}_{\beta}(\mu_x, \cdot) : \mathcal{P}(\mathcal{Y}) \rightarrow \mathbb{R}$ is lower semicontinuous, and it achieves a unique minimum in $\mathcal{P}(\mathcal{Y})$. The minimum $m_y(\mu_x)$ is absolutely continuous with respect to the Borel measure, it has full support and its density takes the form
\begin{equation}
     \frac{dm_y(\mu_x)}{d\yy}(\yy) = \frac{1}{Z_{\mu_x}} e^{\beta \int L(\xx,\yy) d\mu_x},
\end{equation}
where $Z_{\mu_x}$ is a normalization constant.
\end{lemma}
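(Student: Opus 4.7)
\textbf{Proof plan for \autoref{prop:p1}.}

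The plan is to reduce the minimization over $\mathcal{P}(\mathcal{X})$ to a standard free-energy minimization, and then combine lower semicontinuity with compactness to get existence, strict convexity to get uniqueness, and a first variation computation to obtain the Gibbs form of the minimizer. Write
\begin{equation}
\mathcal{L}_\beta(\mu_x,\mu_y) = \int_\mathcal{X} V_x(\mu_y,x)\,d\mu_x(x) - \beta^{-1} H(\mu_x) + C(\mu_y),
\end{equation}
where $V_x(\mu_y,x)=\int \ell(x,y)\,d\mu_y(y)$ is continuous and bounded on the compact set $\mathcal{X}$ (by \autoref{ass:continuous_game}), and $C(\mu_y)=\beta^{-1}H(\mu_y)$ is independent of $\mu_x$. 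Following the convention in the notation section, I extend $-H(\mu_x) := +\infty$ whenever $\mu_x$ is not absolutely continuous w.r.t.\ the canonical Borel measure, which is the standard lower semicontinuous extension.

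First I would verify lower semicontinuity of $F(\mu_x) := \mathcal{L}_\beta(\mu_x,\mu_y)$ on $\mathcal{P}(\mathcal{X})$ endowed with the weak topology. The linear functional $\mu_x \mapsto \int V_x(\mu_y,x)\,d\mu_x$ is weakly continuous since $V_x(\mu_y,\cdot) \in C(\mathcal{X})$ is bounded. The negative differential entropy $\mu_x \mapsto -H(\mu_x)$ equals the relative entropy with respect to the (rescaled) canonical volume measure on $\mathcal{X}$ up to an additive constant, which is a well-known weakly lower semicontinuous and strictly convex functional on $\mathcal{P}(\mathcal{X})$. Hence $F$ is l.s.c.\ and strictly convex. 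Since $\mathcal{P}(\mathcal{X})$ is compact (Prokhorov, as noted in the preceding remark), \autoref{thm:weierstrass} yields existence of a minimizer, and strict convexity upgrades this to uniqueness. Denote it by $m_x(\mu_y)$.

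To identify $m_x(\mu_y)$, I would use a first-variation/Lagrange-multiplier argument. Let $\rho = dm_x(\mu_y)/dx$; the finiteness of $F$ at the optimum forces $m_x(\mu_y)$ to be absolutely continuous. For any admissible perturbation $\nu - m_x(\mu_y)$ with $\nu \in \mathcal{P}(\mathcal{X})$ having a smooth positive density, compute
\begin{equation}
\frac{d}{d\epsilon}\bigg|_{\epsilon=0} F\bigl((1-\epsilon)m_x(\mu_y)+\epsilon\nu\bigr) = \int \bigl(V_x(\mu_y,x) + \beta^{-1}\log\rho(x)\bigr)\,(d\nu-dm_x(\mu_y)) \geq 0.
\end{equation}
Since $\int(d\nu - dm_x(\mu_y)) = 0$, the optimality condition forces $V_x(\mu_y,x)+\beta^{-1}\log\rho(x)$ to be constant on the support, giving $\rho(x) = Z_{\mu_y}^{-1} e^{-\beta V_x(\mu_y,x)}$. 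Positivity and continuity of this density on the compact $\mathcal{X}$ then immediately give full support. The analogous argument with $\ell$ replaced by $-\ell$ handles $m_y(\mu_x)$.

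The main technical obstacle is making the variational argument rigorous despite $\log\rho$ being unbounded in general: I would handle this by perturbing with $\nu$ of the form $\rho_\epsilon \propto \rho \cdot e^{\epsilon\phi}$ for bounded smooth $\phi$, which stays absolutely continuous, keeps $-H$ finite along the path, and leads to the Euler--Lagrange equation $V_x(\mu_y,x) + \beta^{-1}\log\rho(x) = \text{const}$ without boundary issues. The weak lower semicontinuity of $-H$ on $\mathcal{P}(\mathcal{X})$ with the topology of weak convergence is the one external fact I would invoke without reproving.
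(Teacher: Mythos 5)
Your proof is correct and follows essentially the same template as the paper's: decompose $\mathcal{L}_\beta(\cdot,\mu_y)$ into a linear term plus (a constant multiple of) relative entropy with respect to the normalized canonical measure, use weak lower semicontinuity of relative entropy together with Prokhorov compactness and the Weierstrass theorem (\autoref{thm:weierstrass}) to get existence, and then identify the minimizer via the Euler--Lagrange first-variation computation. The one genuine structural difference is in how uniqueness is obtained. You derive it from strict convexity of the relative-entropy functional, \emph{before} computing the first variation; the paper instead observes a posteriori that the Euler--Lagrange necessary condition pins down a unique measure (the Gibbs form), and since a minimizer exists, that measure must be it. Both arguments are sound and neither buys much over the other here, but your strict-convexity route is arguably cleaner since it decouples uniqueness from the computation of the explicit minimizer. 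You are also more explicit than the paper about handling the unboundedness of $\log\rho$ in the first variation, by proposing the multiplicative perturbation $\rho_\epsilon \propto \rho\, e^{\epsilon\phi}$. One small caveat: that perturbation preserves the support of $\rho$, so it only yields the Euler--Lagrange identity \emph{on} $\supp(\rho)$ and cannot by itself show $\supp(\rho)=\mathcal{X}$. To establish full support you still need an argument like the paper's: if $\rho$ vanished on a set of positive volume, a perturbation shifting mass into that region from its complement would give a direction of strictly negative (indeed $-\infty$) directional derivative, because $\beta^{-1}\log\rho = -\infty$ there while $V_x$ is bounded. Your sentence that positivity of the resulting Gibbs density ``immediately gives full support'' is circular as stated (you've only identified $\rho$ with the Gibbs form on its support, with normalization over the support); adding the contradiction argument closes that gap.
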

\begin{proof}
We will prove the result for $\mathcal{L}_{\beta}(\cdot, \mu_y)$, as the other one is analogous. Let $dx$ denote the canonical Borel measure on $\mathcal{X}$, and let $\tilde{p}$ be the probability measure proportional to the canonical Borel measure, i.e. $\frac{d\tilde p}{dx} = \frac{1}{\text{vol}(\mathcal{X})}$. Notice that $\text{vol}(\mathcal{X})$ is by definition the value of the canonical Borel measure on the whole $\mathcal{X}$. We rewrite 
\begin{align}
\begin{split}
    \mathcal{L}_{\beta}(\mu_x,\mu_y) &= \iint \ell(\xx,\yy) d\mu_y d\mu_x + \beta^{-1} \int \log \left(\frac{d\mu_x}{dx}\right) d\mu_x + \beta^{-1} H(\mu_y)\\ &= \iint \ell(\xx,\yy) d\mu_y d\mu_x + \beta^{-1} \int \log \left(\frac{d\mu_x}{d\tilde{p}} \frac{d\tilde{p}}{dx}\right) d\mu_x + \beta^{-1} H(\mu_y)\\
    &= \iint \left(\ell(\xx,\yy) - \beta^{-1} \log \left(\text{vol}(\mathcal{X})\right)\right) d\mu_y d\mu_x + \beta^{-1} \int \log \left(\frac{d\mu_x}{d\tilde{p}}\right) d\mu_x + \beta^{-1} H(\mu_y)
\end{split}
\end{align}
Notice that the first term in the right hand side is a lower semi-continuous (in weak convergence topology) functional in $\mu_x$ when $\mu_y$ is fixed. That is because it is a linear functional in $\mu_x$ with a continuous integrand, which implies that it is continuous in the weak convergence topology. The second to last term can be seen as the relative entropy (or Kullback-Leibler divergence) between $\mu_x$ and $\tilde{p}$:
\begin{align}
    H_{\tilde{p}}(\mu_x) := \int \log \left(\frac{d\mu_x}{d\tilde{p}}\right) d\mu_x
\end{align}
The relative entropy $H_{\tilde{p}}(\mu_x)$ is a lower semi-continuous functional with respect to $\mu_x$ (see Theorem 1 of \citet{posner_random}, which proves a stronger statement: joint semi-continuity with respect to both measures). 

Therefore, we conclude that $\mathcal{L}_{\beta}(\cdot, \mu_y)$ (with $\mu_y \in \mathcal{P}(\mathcal{Y})$ fixed) is a l.s.c. functional on $\mathcal{P}(\mathcal{X})$. By Theorem \ref{thm:weierstrass} and using the compactness of $\mathcal{P}(\mathcal{X})$, there exists a minimum of $\mathcal{L}_\beta(\cdot, \mu_y)$ in $\mathcal{P}(\mathcal{X})$. 

Denote a minimum of $\mathcal{L}_\beta(\cdot, \mu_y)$ by $\hmu_x$. $\hmu_x$ must be absolutely continuous, because otherwise $-\beta^{-1}H(\hmu_x)$ would take an infinite value. By the Euler-Lagrange equations for functionals on probability measures, a necessary condition for $\hmu_x$ to be a minimum of $\mathcal{L}_\beta(\cdot, \mu_y)$ is that the first variation $\frac{\delta \mathcal{L}_\beta(\cdot, \mu_y)}{\delta \mu_x}(\hmu_x)(\xx)$ must take a constant value for all $\xx \in \text{supp}(\hmu_x)$ and values larger or equal outside of $\text{supp}(\hmu_x)$. The intuition behind this is that otherwise a zero-mean signed measure with positive mass on the minimizers of $\frac{\delta \mathcal{L}_\beta(\cdot, \mu_y)}{\delta \mu_x}(\hmu_x)$ and negative mass on the maximizers would provide a direction of decrease of the functional. We compute the first variation at $\hmu_x$:
\begin{align}
\begin{split}
    \frac{\delta \mathcal{L}_\beta(\cdot, \mu_y)}{\delta \mu_x}(\hat{\mu}_x) (\xx) &= \frac{\delta}{\delta \mu_x} \left(\int L(\xx,\yy) d\mu_y d\mu_x - \beta^{-1} H(\hat{\mu}_x) + \beta^{-1} H(\mu_y) \right) \\ &= \int L(\xx,\yy) d\mu_y + \beta^{-1} \log \left(\frac{d\hat{\mu}_x}{d\xx}(\xx) \right),
\end{split}
\end{align}
We equate $\int \ell(\xx,\yy) d\mu_y + \beta^{-1} \log(\frac{d\hat{\mu}_x}{dx}(\xx)) = K, \ \forall \xx \in \supp(\hmu_x)$, where $K$ is a constant. The first variation must take values larger or equal than $K$ outside of $\supp(\hmu_x)$, but since $\log(\frac{d\hat{\mu}_x}{dx}(\xx)) = -\infty$ outside of $\supp(\hmu_x)$, we obtain that $\supp(\hmu_x) = \mathcal{X}$. Then, for all $\xx \in \mathcal{X}$,
\begin{align} \label{eq:}
    \frac{d\hat{\mu}_x}{d\xx}(\xx) = e^{- \beta \int L(\xx,\yy) d\mu_y  + \beta K} = \frac{1}{Z_{\mu_y}}e^{- \beta \int L(\xx,\yy) d\mu_y}
\end{align}
where $Z_{\mu_y}$ is a normalization constant obtained from imposing $\int \frac{d\hat{\mu}_x}{d\xx}(\xx) \ d\xx = \int 1 \ d\hat{\mu}_x = 1$.
Since the necessary condition for optimality specifies a unique measure and the minimum exists, we obtain that $m_x(\mu_y) = \hmu_x$ is the unique minimum. An analogous argument holds for $m_y(\hmu_x)$ 
\end{proof}

\begin{lemma} \label{prop:continuity_m1}
Suppose that the measures ${(\mu_{y,n})}_{n \in \mathbb{N}}$ and $\mu_y$ are in $\mathcal{P}(\mathcal{Y})$. Recall the definition of $m_x : \mathcal{P}(\mathcal{Y}) \rightarrow \mathcal{P}(\mathcal{X})$ in equation \eqref{eq:def_m_x}. If ${(\mu_{y,n})}_{n \in \mathbb{N}}$ converges weakly to $\mu_y$, then ${(m_x(\mu_{y,n}))}_{n \in \mathbb{N}}$ converges weakly to $m_x(\mu_{y})$, i.e. $m_x$ is a continuous mapping when we endow $\mathcal{P}(\mathcal{Y})$ and $\mathcal{P}(\mathcal{X})$ with their weak convergence topologies.

The same thing holds for $m_y$ and measures ${(\mu_{x,n})}_{n \in \mathbb{N}}$ and $\mu_x$ on $\mathcal{X}$.
\end{lemma}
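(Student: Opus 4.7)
The plan is to show that if $\mu_{y,n} \rightharpoonup \mu_y$ weakly, then the densities of $m_x(\mu_{y,n})$ converge uniformly to the density of $m_x(\mu_y)$ on $\mathcal{X}$, which immediately gives weak convergence of $m_x(\mu_{y,n})$ to $m_x(\mu_y)$. Since $\mathcal{P}(\mathcal{Y})$ equipped with the weak topology is metrizable (because $\mathcal{Y}$ is a compact Polish space), continuity is equivalent to sequential continuity, so it suffices to work with sequences.

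The first step is to establish that the potentials $V_n(x) \triangleq \int \ell(x,y)\,d\mu_{y,n}(y)$ converge to $V(x) \triangleq \int \ell(x,y)\,d\mu_y(y)$ uniformly on $\mathcal{X}$. Pointwise convergence for each fixed $x$ is immediate from weak convergence applied to the continuous test function $y \mapsto \ell(x,y)$. To upgrade pointwise to uniform convergence, I would use uniform continuity of $\ell$ on the compact product $\mathcal{X}\times\mathcal{Y}$ (\autoref{ass:continuous_game}) to deduce that the family $\{y \mapsto \ell(x,y) : x \in \mathcal{X}\}$ is uniformly bounded and uniformly equicontinuous. A standard fact states that weak convergence of probability measures on a compact metric space yields convergence of integrals that is uniform over any such equicontinuous, uniformly bounded family of test functions; this gives $\sup_{x\in\mathcal{X}} |V_n(x) - V(x)| \to 0$.

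The second step is to transfer uniform convergence of potentials to uniform convergence of the exponential densities. Since $\ell$ is bounded on the compact set $\mathcal{X}\times\mathcal{Y}$, the potentials $V_n, V$ take values in a common bounded interval, so on this interval the map $u \mapsto e^{-\beta u}$ is Lipschitz; therefore $e^{-\beta V_n} \to e^{-\beta V}$ uniformly on $\mathcal{X}$. Integrating over $\mathcal{X}$ (which has finite volume) yields $Z_{\mu_{y,n}} \to Z_{\mu_y}$, and the normalizers are uniformly bounded away from $0$ by $\mathrm{vol}(\mathcal{X})\, e^{-\beta\|\ell\|_\infty}$, so the normalized densities $\frac{dm_x(\mu_{y,n})}{dx}$ converge uniformly to $\frac{dm_x(\mu_y)}{dx}$ on $\mathcal{X}$.

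Finally, uniform convergence of densities on a compact set implies convergence in total variation, and hence weak convergence of $m_x(\mu_{y,n})$ to $m_x(\mu_y)$. The same argument, with the sign flipped in the exponent and the roles of $\mathcal{X}$ and $\mathcal{Y}$ exchanged, handles $m_y$. The only nontrivial point is the upgrade from pointwise to uniform convergence of the potentials; this is where the compactness of $\mathcal{X}\times\mathcal{Y}$ and the continuity (hence uniform continuity) of $\ell$ are essential.
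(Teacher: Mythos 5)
Your proof is correct, and it takes a genuinely different (and in fact stronger) route than the paper's. The paper works pointwise: for each fixed $x$, weak convergence gives $\int \ell(x,y)\,d\mu_{y,n} \to \int \ell(x,y)\,d\mu_y$; the dominated convergence theorem (with the constant dominating function $e^{-\beta\min\ell}$) then yields convergence of the normalizing constants $Z_{\mu_{y,n}} \to Z_{\mu_y}$ and of the partial integrals $\int_B e^{-\beta V_n}\,dx \to \int_B e^{-\beta V}\,dx$ over continuity sets $B$, and the Portmanteau characterization of weak convergence closes the argument. You instead upgrade the pointwise convergence of the potentials to \emph{uniform} convergence using the equicontinuity of the family $\{y \mapsto \ell(x,y)\}_{x\in\mathcal{X}}$ (the relevant Arzel\`a--Ascoli-type fact about weak convergence being uniform over precompact families of test functions), then push this through the Lipschitz exponential and the uniformly-bounded-below normalizers to get uniform convergence of densities and hence total-variation convergence. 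Your conclusion is strictly stronger than the lemma requires, and it avoids Portmanteau entirely; the cost is invoking the slightly less elementary uniform-over-equicontinuous-family refinement of weak convergence rather than dominated convergence. One small note: you cite the Lipschitz assumption, but only continuity of $\ell$ on the compact product is needed for the uniform continuity step (the paper's \autoref{thm:fixed_point_problem} assumes only continuity), so the argument is slightly more general than you let on.
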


\begin{proof}
Given $x \in \mathcal{X}$, we have $\int \ell(\xx,\yy) d\mu_{y,n} \rightarrow \int \ell(\xx,\yy) d\mu_{y}$, because $\ell(\xx,\cdot)$ is a continuous bounded function on $\mathcal{Y}$. By continuity of the exponential function, we have that for all $x \in \mathcal{X}$, $e^{-\beta \int \ell(\xx,\yy) d\mu_{y,n}} \rightarrow e^{-\beta \int \ell(\xx,\yy) d\mu_{y}}$.
Using the dominated convergence theorem,
\begin{align}
    \int_{\mathcal{X}} e^{-\beta \int \ell(\xx,\yy) d\mu_{y,n}} d\xx \rightarrow \int_{\mathcal{X}} e^{-\beta \int \ell(\xx,\yy)  d\mu_{y}} d\xx
\end{align}
We need to find a dominating function. It is easy, because 
$\forall n \in \mathbb{N}, \ \forall \xx \in \mathcal{X}$,  $e^{-\beta \int \ell(\xx,\yy) d\mu_{y,n}} \leq e^{-\beta \min_{(\xx,\yy) \in \mathcal{X} \times \mathcal{Y}} \ell(\xx,\yy)}$. And $\int_{\mathcal{X}} e^{-\beta \min_{(\xx,\yy) \in \mathcal{X} \times \mathcal{Y}} \ell(\xx,\yy)} d\xx = e^{-\beta \min_{(\xx,\yy) \in \mathcal{X} \times \mathcal{Y}} \ell(\xx,\yy)} \text{vol}(\mathcal{X}) < \infty$.
By the Portmanteau theorem, we just need to prove that for all continuity sets $B$ of $m_x(\mu_y)$, we have $m_x(\mu_{y,n})(B) \rightarrow m_x(\mu_{y})(B)$. This translates to
\begin{align}
    \frac{\int_{B} e^{-\beta \int \ell(\xx,\yy) d\mu_{y,n}} d\xx}{\int_{\mathcal{X}} e^{-\beta \int \ell(\xx,\yy) d\mu_{y,n}} d\xx} \rightarrow \frac{\int_{B} e^{-\beta \int \ell(\xx,\yy)  d\mu_{y}} d\xx}{\int_{\mathcal{X}} e^{-\beta \int \ell(\xx,\yy)  d\mu_{y}} d\xx}
\end{align}
We have proved that the denominators converge appropriately, and the numerator converges as well using the same reasoning with dominated convergence. And both the numerators and the denominators are positive and the numerator is always smaller denominator, the quotient must converge. 
\end{proof}

\begin{lemma} \label{prop:existence}
There exists a solution of \eqref{eq:exp}, which is the Nash equilibrium of the game given by $\mathcal{L}_{\beta}$. 
\end{lemma}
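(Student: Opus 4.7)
\textbf{Proof plan for \autoref{prop:existence}.} The plan is to apply the Kakutani-Glicksberg-Fan theorem (\autoref{thm:kakutani}) to the best-response map built from $m_x$ and $m_y$, and then read off the Nash equilibrium property from the variational characterisation of $m_x, m_y$ given in \autoref{prop:p1}.

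First I would set $S \triangleq \mathcal{P}(\mathcal{X}) \times \mathcal{P}(\mathcal{Y})$, equipped with the product weak convergence topology, and view it as a subset of the Hausdorff locally convex TVS of pairs of finite signed Borel measures with the weak-$*$ topology. The set $S$ is convex (convex combinations of probability measures are probability measures), non-empty, and compact by Prokhorov's theorem applied to each factor (using compactness of $\mathcal{X}$ and $\mathcal{Y}$). I would then define the map
\begin{equation}
    \Phi \colon S \longrightarrow S, \qquad \Phi(\mu_x,\mu_y) \triangleq \bigl( m_x(\mu_y),\, m_y(\mu_x) \bigr),
\end{equation}
where $m_x, m_y$ are the best-response maps constructed in \autoref{prop:p1}.

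Next I would verify that $\Phi$ is a Kakutani map. Since $m_x(\mu_y)$ and $m_y(\mu_x)$ are uniquely defined elements of $\mathcal{P}(\mathcal{X})$ and $\mathcal{P}(\mathcal{Y})$ (\autoref{prop:p1}), the values $\Phi(\mu_x,\mu_y)$ are singletons, hence non-empty, compact and convex in $S$. Upper hemicontinuity for a single-valued map is exactly continuity, which follows directly from the continuity of $m_x$ and $m_y$ with respect to the weak topologies (\autoref{prop:continuity_m1}) and the product topology on $S$. Applying \autoref{thm:kakutani} yields a fixed point $(\hmu_x, \hmu_y) \in S$ with $\hmu_x = m_x(\hmu_y)$ and $\hmu_y = m_y(\hmu_x)$; by the explicit Gibbs form of $m_x$ and $m_y$ in \autoref{prop:p1}, this pair satisfies \eqref{eq:exp}.

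Finally, to identify $(\hmu_x,\hmu_y)$ as a Nash equilibrium of $\mathcal{L}_\beta$, I would invoke the minimisation characterisation from \autoref{prop:p1}: since $\hmu_x = m_x(\hmu_y)$ minimises $\mathcal{L}_\beta(\cdot,\hmu_y)$, we have
\begin{equation}
    \mathcal{L}_\beta(\hmu_x,\hmu_y) \leq \mathcal{L}_\beta(\mu_x,\hmu_y) \qquad \forall \mu_x \in \mathcal{P}(\mathcal{X}),
\end{equation}
and symmetrically $\mathcal{L}_\beta(\hmu_x,\hmu_y) \geq \mathcal{L}_\beta(\hmu_x,\mu_y)$ for all $\mu_y \in \mathcal{P}(\mathcal{Y})$, which is precisely the MNE inequality \eqref{def:mixed_nash} applied to $\mathcal{L}_\beta$. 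The only delicate point in the argument is the continuity of $m_x$ and $m_y$, but that is already handled by \autoref{prop:continuity_m1} via the dominated convergence argument; compactness and convexity of $S$ are standard, so no further obstacle is expected.
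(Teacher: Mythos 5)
Your argument is correct and follows the same route as the paper: both apply the Kakutani–Glicksberg–Fan theorem to the best-response map $(\mu_x,\mu_y)\mapsto(m_x(\mu_y),m_y(\mu_x))$, with upper hemicontinuity supplied by \autoref{prop:continuity_m1}. Your write-up is somewhat more complete than the paper's — you explicitly verify compactness and convexity of $S$, the singleton-valued Kakutani conditions, and the Nash-equilibrium property of the fixed point via the minimisation characterisation in \autoref{prop:p1}, none of which the paper spells out — but the underlying argument is the same.
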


\begin{proof}
We use \autoref{thm:kakutani} on the set $\mathcal{P}(\mathcal{X}) \times \mathcal{P}(\mathcal{Y})$, with the map $m : \mathcal{P}(\mathcal{X}) \times \mathcal{P}(\mathcal{Y}) \rightarrow \mathcal{P}(\mathcal{X}) \times \mathcal{P}(\mathcal{Y})$ given by $m(\mu_x,\mu_y) = (m_x(\mu_y), m_y(\mu_x))$. The only condition to check is upper hemicontinuity of $m$. By \autoref{prop:continuity_m1} we know that $m_x, m_y$ are continuous, and since continuous functions are upper hemicontinuous as set valued functions, this concludes the argument. Indeed, we could have used Tychonoff's theorem, which is similar to \autoref{thm:kakutani} but for single-valued functions. 
\end{proof}

\subsection{Proof of \autoref{thm:fixed_point_problem}: Uniqueness}
\begin{lemma}
The solution of \eqref{eq:exp} is unique. 
\end{lemma}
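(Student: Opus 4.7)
My strategy is to exploit strict convexity--concavity of $\mathcal{L}_\beta$ via the standard saddle-chain argument, combined with the uniqueness of the best responses already established in \autoref{prop:p1}. By \autoref{prop:existence} the fixed points of \eqref{eq:exp} are exactly the Nash equilibria of $\mathcal{L}_\beta$. So it suffices to show that if $(\mu_x^1,\mu_y^1)$ and $(\mu_x^2,\mu_y^2)$ are two such equilibria, they coincide.

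\textbf{Step 1: Saddle chain.} Apply the saddle-point inequality for the first equilibrium with test measures $(\mu_x^2,\mu_y^2)$, then for the second with test measures $(\mu_x^1,\mu_y^1)$. This yields the four inequalities
\begin{equation*}
\mathcal{L}_\beta(\mu_x^1,\mu_y^2)\leq \mathcal{L}_\beta(\mu_x^1,\mu_y^1)\leq \mathcal{L}_\beta(\mu_x^2,\mu_y^1), \qquad
\mathcal{L}_\beta(\mu_x^2,\mu_y^1)\leq \mathcal{L}_\beta(\mu_x^2,\mu_y^2)\leq \mathcal{L}_\beta(\mu_x^1,\mu_y^2),
\end{equation*}
and chaining them forces all four values to be equal.

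\textbf{Step 2: Use uniqueness of best response.} In particular $\mathcal{L}_\beta(\mu_x^1,\mu_y^1)=\mathcal{L}_\beta(\mu_x^2,\mu_y^1)$, so $\mu_x^2$ is also a minimizer of $\mathcal{L}_\beta(\cdot,\mu_y^1)$. By \autoref{prop:p1}, that minimizer is unique and equals $m_x(\mu_y^1)=\mu_x^1$, so $\mu_x^1=\mu_x^2$. An identical argument using $-\mathcal{L}_\beta(\mu_x^1,\cdot)$ gives $\mu_y^1=\mu_y^2$.

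\textbf{Obstacle.} There is essentially no technical obstacle once \autoref{prop:p1} is in hand, since the strict convexity of the relative entropy (which underlies uniqueness of $m_x$ and $m_y$) has already been invoked there. The only thing to be mildly careful about is making sure the saddle-point inequality is applied with admissible test measures in $\mathcal{P}(\mathcal{X})\times\mathcal{P}(\mathcal{Y})$, which is immediate since each $\mu_x^i,\mu_y^i$ lies in these spaces. An alternative but equivalent route would be to directly verify strict convexity of $\mu_x\mapsto\mathcal{L}_\beta(\mu_x,\mu_y)$ via strict convexity of $-H$, then invoke a general saddle-point uniqueness lemma; the fixed-point formulation makes the more concrete argument above cleaner.
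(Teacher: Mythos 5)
Your proof is correct, and it takes a genuinely different route from the paper's. The paper proves uniqueness with a Rosen-style argument: it writes the first-order optimality (Euler--Lagrange) conditions for both putative equilibria, integrates each first variation against the difference of the two measures, and shows that the resulting sum both vanishes (by the constancy of the first variations on the supports) and equals $\beta^{-1}$ times a sum of relative entropies, which is strictly positive unless the two pairs coincide. Your argument instead works directly at the level of the game: the saddle chain forces the four mixed values to agree, so $\mu_x^2$ is another minimizer of $\mathcal{L}_\beta(\cdot,\mu_y^1)$, and then the strict uniqueness of the best response established in \autoref{prop:p1} (which is where the strict convexity of $-H$ is actually cashed in) gives $\mu_x^1=\mu_x^2$, and symmetrically $\mu_y^1=\mu_y^2$. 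Both proofs ultimately rest on strict convexity of relative entropy, but you invoke it only once, indirectly through the already-proved uniqueness of $m_x,m_y$, whereas the paper re-derives it via the relative-entropy identity; your version is shorter, avoids the first-variation bookkeeping, and makes the bilinear saddle structure of $\mathcal{L}_\beta$ do the work. One small precision: the equivalence between fixed points of \eqref{eq:exp} and Nash equilibria of $\mathcal{L}_\beta$ is really a consequence of \autoref{prop:p1} (which identifies $m_x,m_y$ as the unique best responses), not of \autoref{prop:existence} (which only asserts existence); this does not affect the validity of the argument.
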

\begin{proof}
The argument is analogous to the proof of Theorem 2 of \citet{rosen_existence_1965}. Suppose $(\mu_{x,1},\mu_{y,1})$ and $(\mu_{x,2},\mu_{y,2})$ are two different solutions of \eqref{eq:exp}. We use the notation $F_1(\mu_x,\mu_y) = \mathcal{L}_\beta(\mu_x,\mu_y), F_2(\mu_x,\mu_y) = -\mathcal{L}_\beta(\mu_x,\mu_y)$. Hence, there exist constants $K_{x,1}, K_{y,1}, K_{x,2}, K_{y,2}$ such that
\begin{align}
    \begin{split}
        \frac{\delta F_1}{\delta \mu_x}(\mu_{x,1},\mu_{y,1})(\xx) + K_{x,1} &= 0,
        \frac{\delta F_2}{\delta \mu_y}(\mu_{x,1},\mu_{y,1})(\yy) + K_{y,1} = 0, \\
        \frac{\delta F_1}{\delta \mu_x}(\mu_{x,2},\mu_{y,2})(\xx) + K_{x,2} &= 0,
        \frac{\delta F_2}{\delta \mu_y}(\mu_{x,2},\mu_{y,2})(\yy) + K_{y,2} = 0
    \end{split}
\end{align}
On the one hand, we know that
\begin{align}
    \begin{split} \label{eq:is_0}
    &\int \frac{\delta F_1}{\delta \mu_x}(\mu_{x,1},\mu_{y,1})(\xx) \ d(\mu_{x,2}-\mu_{x,1}) + \int \frac{\delta F_2}{\delta \mu_y}(\mu_{x,1},\mu_{y,1})(\yy) \ d(\mu_{y,2}-\mu_{y,1}) \\
    &+\int \frac{\delta F_1}{\delta \mu_x}(\mu_{x,2},\mu_{y,2})(\xx) \ d(\mu_{x,1}-\mu_{x,2}) + \int \frac{\delta F_2}{\delta \mu_y}(\mu_{x,2},\mu_{y,2})(\yy) \ d(\mu_{y,1}-\mu_{y,2}) \\ = &- \int K_{x,1} \ d(\mu_{x,2}-\mu_{x,1}) - \int K_{y,1} \ d(\mu_{y,2}-\mu_{y,1}) \\ &- \int K_{x,2} \ d(\mu_{x,1}-\mu_{x,2}) - \int K_{y,2} \ d(\mu_{y,1}-\mu_{y,2}) = 0
    \end{split}
\end{align}
We will now prove that the left hand side of \eqref{eq:is_0} must be strictly larger than 0, reaching a contradiction.
We can write
\begin{align}
\begin{split}
    \frac{\delta F_1}{\delta \mu_x}(\mu_{x,2},\mu_{y,2})(\xx) - \frac{\delta F_1}{\delta \mu_x}(\mu_{x,1},\mu_{y,1})(\xx) &= \int L(\xx,\yy) \ d(\mu_{y,2}-\mu_{y,1}) \\ &+ \beta^{-1} (\log(\mu_{x,2}(\xx)) - \log(\mu_{x,1}(\xx))),\\
    \frac{\delta F_2}{\delta \mu_y}(\mu_{x,2},\mu_{y,2})(\xx) - \frac{\delta F_2}{\delta \mu_y}(\mu_{x,1},\mu_{y,1})(\xx) &= -\int L(\xx,\yy) \ d(\mu_{x,2}-\mu_{x,1}) \\ &+ \beta^{-1} (\log(\mu_{y,2}(\xx)) - \log(\mu_{y,1}(\xx)))
\end{split}
\end{align}
Hence, we rewrite the left hand side of \eqref{eq:is_0} as
\begin{align}
    \begin{split} \label{eq:rel_entropy}
        &\iint L(\xx,\yy) \ d(\mu_{y,2}-\mu_{y,1}) d(\mu_{x,2} - \mu_{x,1}) + \beta^{-1} \int (\log(\mu_{x,2}(\xx)) - \log(\mu_{x,1}(\xx))) \ d(\mu_{x,2} - \mu_{x,1}) \\
        -&\iint L(\xx,\yy) \ d(\mu_{x,2}-\mu_{x,1}) d(\mu_{y,2}-\mu_{y,1}) + \beta^{-1} \int (\log(\mu_{y,2}(\xx)) - \log(\mu_{y,1}(\xx))) \ d(\mu_{y,2}-\mu_{y,1}) \\ &= \beta^{-1} (H_{\mu_{x,1}}(\mu_{x,2}) + H_{\mu_{x,2}}(\mu_{x,1}) + H_{\mu_{y,1}}(\mu_{y,2}) + H_{\mu_{y,1}}(\mu_{y,2})).
    \end{split}
\end{align}
Since the relative entropy is always non-negative and zero only if the two measures are equal, we have reached the desired contradiction. 
\end{proof}
\subsection{Proof of \autoref{thm:epsilon_thm}} \label{sec:pf_thm2}

We will use the shorthand $V_x(\xx) = V_x(\hmu_y)(\xx) = \int \mathcal{L}(\xx,\yy) d\hmu_y$, $V_y(\yy) = V_y(\hmu_x)(\yy) = \int \mathcal{L}(\xx,\yy) d\hmu_x$. Since $\ell: \mathcal{X} \times \mathcal{Y} \rightarrow \mathbb{R}$ is a continuous function on a compact metric space, it is uniformly continuous. Hence,
\begin{align}
    \forall \epsilon > 0, \exists \delta > 0 \text{ st. } \sqrt{d(\xx,\xx')^2+d(\yy,\yy')^2} < \delta \implies |\ell(\xx,\yy)-\ell(\xx',\yy')| < \epsilon 
\end{align}
Which means that 
\begin{align}
    d(\xx,\xx') < \delta \implies |V_x(\xx)-V_x(\xx')| = \bigg|\int (\ell(\xx,\yy)-\ell(\xx',\yy)) d\yy \bigg| < \epsilon 
\end{align}
This proves that $V_x$ is uniformly continuous on $\mathcal{X}$ (and $V_y$ is uniformly continuous on $\mathcal{Y}$ using the same argument). 

We can write the Nikaido-Isoda function of the game with loss $\mathcal{L}$ (equation \eqref{eq:nikaido_def}) evaluated at $(\hmu_x,\hmu_y)$ as
\begin{align}
\begin{split} \label{eq:NI_development}
    \text{NI}(\hmu_x,\hmu_y) &:= \mathcal{L}(\hmu_x,\hmu_y) - \min_{\mu_x'} \{ \mathcal{L}(\mu_x',\hmu_y)\} + (-\mathcal{L}(\hmu_x,\hmu_y) + \max_{\mu_y'} \{ \mathcal{L}(\hmu_x,\mu_y')\}) \\ &= \frac{\int V_x(\xx) e^{-\beta V_x(\xx)} d\xx}{\int e^{-\beta V_x(\xx)} d\xx} - \min_{\xx \in \mathcal{C}_1} V_x(\xx) + \frac{-\int V_y(\yy) e^{\beta V_y(\yy)} d\yy}{\int e^{\beta V_y(\yy)} d\yy} + \max_{\yy \in \mathcal{C}_2} V_y(\yy)
\end{split}
\end{align}
The second equality follows from the definitions of $\mathcal{L}, V_x, V_y$. We observe that in the right-most expression the first two terms and the last two terms are analogous. Let us show the first two terms can be made smaller than an arbitrary $\epsilon > 0$ by taking $\beta$ large enough; the last two will be dealt with in an analogous manner. Let us define $\tilde{V}_x(\xx) = V_x(\xx)-\min_{\xx' \in \mathcal{C}_1} V_x(\xx')$. 
\begin{align}
\begin{split} \label{eq:epsilon_beta0}
    & \frac{\int V_x(\xx) e^{-\beta V_x(\xx)} d\xx}{\int e^{-\beta V_x(\xx)} d\xx} - \min_{\xx \in \mathcal{C}_1} V_x(\xx) = \frac{\int (V_x(\xx)-\min_{\xx' \in \mathcal{C}_1} V_x(\xx')) e^{-\beta V_x(\xx)} d\xx}{\int e^{-\beta V_x(\xx)} d\xx} \\ &= \frac{\int \tilde{V}_x(\xx) e^{-\beta V_x(\xx)} \left(\mathds{1}_{\{\tilde{V}_x(\xx) \leq \epsilon/2 \}}  + \mathds{1}_{\{\epsilon/2 < \tilde{V}_x(\xx) \leq \epsilon \}} + \mathds{1}_{\{\epsilon < \tilde{V}_x(\xx)\}} \right) d\xx}{\int e^{-\beta V_x(\xx)} \mathds{1}_{\{\tilde{V}_x(\xx) \leq \epsilon/2 \}} d\xx + \int e^{-\beta V_x(\xx)} \mathds{1}_{\{\epsilon/2 < \tilde{V}_x(\xx) \leq \epsilon \}} d\xx + \int e^{-\beta V_x(\xx)} \mathds{1}_{\{\epsilon < \tilde{V}_x(\xx)\}} d\xx}
\end{split}
\end{align}
Let us define 
\begin{align}
    q_{\{\tilde{V}_x(\xx) \leq \epsilon/2 \}} = \int e^{-\beta V_x(\xx)} \mathds{1}_{\{\tilde{V}_x(\xx) \leq \epsilon/2 \}} d\xx,
\end{align}
and $q_{\{\epsilon/2 < \tilde{V}_x(\xx) \leq \epsilon \}}$ and $q_{\{\epsilon < \tilde{V}_x(\xx)\}}$ analogously.

Similarly, let 
\begin{align}
        r_{\{\tilde{V}_x(\xx) \leq \epsilon/2 \}} &= \int \tilde{V}_x(\xx) e^{-\beta V_x(\xx)} \mathds{1}_{\{\tilde{V}_x(\xx) \leq \epsilon/2 \}} d\xx, 
\end{align}
and $r_{\{\epsilon/2 < \tilde{V}_x(\xx) \leq \epsilon \}}$ and $r_{\{\epsilon < \tilde{V}_x(\xx)\}}$ analogously.

Let 
\begin{align}
\tilde{p} = \frac{q_{\{\epsilon/2 < \tilde{V}_x(\xx) \leq \epsilon \}}}{q_{\{\tilde{V}_x(\xx) \leq \epsilon/2 \}} + q_{\{\epsilon/2 < \tilde{V}_x(\xx) \leq \epsilon \}} + q_{\{\epsilon < \tilde{V}_x(\xx)\}}}
\end{align}
Then, we can rewrite the right-most expression of \eqref{eq:epsilon_beta0} as
\begin{align} 
\begin{split} \label{eq:epsilon_beta1}
    &\frac{r_{\{\tilde{V}_x(\xx) \leq \epsilon/2 \}} + r_{\{\epsilon/2 < \tilde{V}_x(\xx) \leq \epsilon \}} + r_{\{\epsilon < \tilde{V}_x(\xx)\}}}{q_{\{\tilde{V}_x(\xx) \leq \epsilon/2 \}} + q_{\{\epsilon/2 < \tilde{V}_x(\xx) \leq \epsilon \}} + q_{\{\epsilon < \tilde{V}_x(\xx)\}}} \\ &= \tilde{p} \frac{r_{\{\epsilon/2 < \tilde{V}_x(\xx) \leq \epsilon \}}}{q_{\{\epsilon/2 < \tilde{V}_x(\xx) \leq \epsilon \}}} + (1-\tilde{p}) \frac{r_{\{\tilde{V}_x(\xx) \leq \epsilon/2 \}} + r_{\{\epsilon < \tilde{V}_x(\xx)\}}}{q_{\{\tilde{V}_x(\xx) \leq \epsilon/2 \}} + q_{\{\epsilon < \tilde{V}_x(\xx)\}}}
\end{split}
\end{align}
Since $\tilde{V}(\xx) \leq \epsilon$ in the set $\{\xx|\epsilon/2 < \tilde{V}_x(\xx) \leq \epsilon \}$, $r_{\{\epsilon/2 < \tilde{V}_x(\xx) \leq \epsilon \}}/q_{\{\epsilon/2 < \tilde{V}_x(\xx) \leq \epsilon \}} \leq \epsilon$. 

Let $\xx_{\text{min}}$ be such that $V(\xx_{\text{min}}) = \min_{\xx \in C_1} V(\xx)$ (possibly not unique). By uniform continuity of $V_x$, we know there exists $\delta > 0$ (dependent only on $\epsilon$) such that $B(\xx_{\text{min}}, \delta) \subseteq \{\xx | \tilde{V}_x(\xx) \leq \epsilon/2 \}$. The following inequalities hold:
\begin{align} \label{eq:epsilon_beta2}
    \begin{split}
        r_{\{\tilde{V}_x(\xx) \leq \epsilon/2\}} & \leq \frac{\epsilon}{2} q_{\{\tilde{V}_x(\xx) \leq \epsilon/2\}}, \\
        r_{\{\epsilon < \tilde{V}_x(\xx)\}} & \leq (\max_{\xx \in \mathcal{C}_1} V_x(\xx) - \min_{\xx \in \mathcal{C}_1} V_x(\xx)) q_{\{\epsilon < \tilde{V}_x(\xx)\}} \leq (\max_{\xx,\yy} L(\xx,\yy) - \min_{\xx,\yy} L(\xx,\yy)) q_{\{\epsilon < \tilde{V}_x(\xx)\}} \\&= K_L q_{\{\epsilon < \tilde{V}_x(\xx)\}}.
    \end{split}
\end{align}
where we define $K_\ell = \max_{\xx,\yy} \ell(\xx,\yy) - \min_{\xx,\yy} \ell(\xx,\yy)$. Using \eqref{eq:epsilon_beta2}, we obtain
\begin{align}
    \frac{r_{\{\tilde{V}_x(\xx) \leq \epsilon/2 \}} + r_{\{\epsilon < \tilde{V}_x(\xx)\}}}{q_{\{\tilde{V}_x(\xx) \leq \epsilon/2 \}} + q_{\{\epsilon < \tilde{V}_x(\xx)\}}} \leq \frac{\frac{\epsilon}{2} q_{\{\tilde{V}_x(\xx) \leq \epsilon/2 \}} + K_L q_{\{\epsilon < \tilde{V}_x(\xx)\}}}{q_{\{\tilde{V}_x(\xx) \leq \epsilon/2 \}} + q_{\{\epsilon < \tilde{V}_x(\xx)\}}}. 
\end{align}
If the right-hand side is smaller or equal than $\epsilon$, then equation \eqref{eq:epsilon_beta1} would be smaller than $\epsilon$ and the proof would be concluded. For that to happen, we need $(K_\ell - \epsilon) q_{\{\epsilon < \tilde{V}_x(\xx)\}} \leq \frac{\epsilon}{2} q_{\{\tilde{V}_x(\xx) \leq \epsilon/2 \}} \iff q_{\{\tilde{V}_x(\xx) \leq \epsilon/2 \}}/q_{\{\epsilon < \tilde{V}_x(\xx)\}} \geq 2(K_\ell/\epsilon -1)$. The following bounds hold:
\begin{align}
    \begin{split}
        q_{\{\tilde{V}_x(\xx) \leq \epsilon/2 \}} &\geq \text{Vol}(B(\xx_{\text{min}}, \delta)) e^{-\beta (\min_{\xx \in \mathcal{C}_1} V_x(\xx) + \epsilon/2)},\\
        q_{\{\epsilon < \tilde{V}_x(\xx)\}} &\leq (1-\text{Vol}(B(\xx_{\text{min}}, \delta))) e^{-\beta (\min_{\xx \in \mathcal{C}_1} V_x(\xx) + \epsilon)}. 
    \end{split}
\end{align}
Thus, the following condition is sufficient:
\begin{align}
    \frac{\text{Vol}(B(\xx_{\text{min}}, \delta))}{1-\text{Vol}(B(\xx_{\text{min}}, \delta))} e^{\beta \epsilon/2} \geq 2(K_L/\epsilon -1). 
\end{align}
Hence, if we take 
\begin{align} \label{eq:beta_x}
    \beta \geq \frac{2}{\epsilon} \log \left(2 \frac{1-\text{Vol}(B(\xx_{\text{min}}, \delta))}{\text{Vol}(B(\xx_{\text{min}}, \delta))} (K_L/\epsilon -1) \right)
\end{align}
then $(\hmu_x,\hmu_y)$ is an $\epsilon$-Nash equilibrium. Since we have only bound the first two terms in the right hand side of \eqref{eq:NI_development} and the other two are bounded in the same manner, the statement of the theorem results from setting $\epsilon = \epsilon/2$ in \eqref{eq:beta_x}.
\subsection{Proof of \autoref{thm:stationary_point}} \label{sec:pf_thm3}

First, we show that any pair $\hmu_x, \hmu_y$ such that 
\begin{equation} 
        \frac{d\hmu_x}{d\xx}(\xx) = \frac{1}{Z_x} e^{- \beta \int \ell(\xx,\yy) \ d\hmu_y(\yy)}, \quad \frac{d\hmu_y}{d\yy}(\yy) = \frac{1}{Z_y} e^{\beta \int \ell(\xx,\yy) \ d\hmu_x(\xx)}
\end{equation}
is a stationary solution of \eqref{eq:entropic_interacting_flow}. Denoting the Radon-Nikodym derivatives $\frac{d\hmu_x}{d\xx}, \frac{d\hmu_y}{d\yy}$ by $\hat{\rho}_x, \hat{\rho}_y$, it is sufficient to see that
\begin{align}
    \begin{split} \label{eq:fixed_is_stationary}
        \begin{cases}
        0 = \nabla_x \cdot (\hat{\rho}_x \nabla_x V_x(\mu_y,x)) + \beta^{-1} \Delta_x \hat{\rho}_x\\
        0 = -\nabla_y \cdot (\hat{\rho}_y \nabla_y V_y(\mu_x,y)) + \beta^{-1} \Delta_y \hat{\rho}_y
        \end{cases}
    \end{split}
\end{align}
holds weakly. And
\begin{align}
\begin{split}
    \nabla_x \hat{\rho}_x = \frac{1}{Z_x} e^{- \beta \int \ell(\xx,\yy) \ d\hmu_y(\yy)} \left(-\beta \nabla_x \int \ell(\xx,\yy) \ d\hmu_y(\yy) \right) = -\hat{\rho}_x \nabla_x V_x(\hmu_y,x), \\
    \nabla_y \hat{\rho}_y = \frac{1}{Z_y} e^{\beta \int \ell(\xx,\yy) \ d\hmu_x(\xx)} \left(\beta \nabla_y \int \ell(\xx,\yy) \ d\hmu_x(\xx) \right) = \hat{\rho}_y \nabla_y V_y(\hmu_x,y),
\end{split}
\end{align}
implies that \eqref{eq:fixed_is_stationary} holds.

Now we will prove the converse. Suppose that $\hmu_x, \hmu_y$ are (weak) stationary solutions of \eqref{eq:entropic_interacting_flow}. That is, if $\phi_x \in C^2(\mathcal{X}), \phi_y \in C^2(\mathcal{Y})$ are arbitrary twice continuously differentiable functions, the following holds
\begin{align} 
\begin{split} \label{eq:weak_stationary_measures}
    0 & = \int_{\mathcal{X}} \left( -\int_{\mathcal{Y}} \nabla_x \phi_x(\xx) \cdot \nabla_x \ell(\xx,\yy) \ d\hmu_y  + \beta^{-1} \Delta_{x} \phi_x(\xx) \right) \ d\hmu_x \\
    0 & = \int_{\mathcal{Y}} \left( \int_{\mathcal{X}} - \nabla_y \phi_y(\yy) \cdot \nabla_y \ell(\xx,\yy) \ d\hmu_x  - \beta^{-1} \Delta_{y} \phi_y(\xx,\yy) \right) \ d\hmu_y 
\end{split}
\end{align}
\eqref{eq:weak_stationary_measures} can be seen as two measure-valued stationary Fokker-Planck equations. We want to see that they have densities and that the densities satisfy the corresponding classical stationary Fokker-Planck equations \eqref{eq:fixed_is_stationary}. Works in the theory of PDEs have studied sufficient conditions for measure-valued stationary Fokker-Planck equations to correspond to weak stationary Fokker-Planck equations, and further to classical stationary Fokker-Planck equations. See page 3 of \citet{huang2015steady} for a more detailed explanation on the two steps. That measure-valued stationary correspond to weak stationary solutions is shown in Theorem 2.2 of \cite{bogachev2001onregularity}. That weak stationary solutions are classical stationary solutions requires that the drift term is in $C^{1,\alpha}_{\text{loc}}$ (locally $\alpha$-Hölder continuous with exponent 1), meaning that it is in $C^1$ and that its derivatives are $\alpha$-Hölder in compact sets. The result follows from the theory of Schauder estimates. Differentiating under the integral sign, the drift terms $-\int_{\mathcal{Y}} \nabla_x \ell(\xx,\yy) \ d\hmu_y, \int_{\mathcal{X}} \nabla_y \ell(\xx,\yy) \ d\hmu_x$ fulfill the condition if $\ell \in C^{2,\alpha}$.
\section{Proof of \autoref{thm:wfr_main_thm}} \label{sec:proof_thm4}
Recall the expression of an \textit{Interacting Wasserstein-Fisher-Rao Gradient Flow (IWFRGF)} in \eqref{eq:WFR_eq}:
\begin{align}
    \begin{split} 
        \begin{cases}
        \partial_t \mu_x &= \gamma \nabla \cdot (\mu_x \nabla_x V_x(\mu_y,x)) \\ &- \alpha \mu_x(V_x(\mu_y,x) - \mathcal{L}(\mu_x,\mu_y)), \quad \mu_x(0) = \mu_{x,0}\\
        \partial_t \mu_y &= -\gamma \nabla \cdot (\mu_y \nabla_y V_y(\mu_x,y))\\ &+ \alpha \mu_y(V_y(\mu_x,y) -  \mathcal{L}(\mu_x,\mu_y)), \quad \mu_y(0) = \mu_{y,0}
        \end{cases}
    \end{split}
\end{align}

The aim is to obtain a global convergence result like the one in Theorem 3.8 of \citet{chizat2019sparse}. First, we will rewrite Lemma 3.10 of \citet{chizat2019sparse} in our case.

\begin{lemma} \label{thm:wfr_ni_error}
Let $\mu_{x}, \mu_{y}$ be the solution of the IWFRGF in \eqref{eq:WFR_eq}. Let $\mu^\star_x, \mu^\star_y$ be arbitrary measures on $\mathcal{X}, \mathcal{Y}$. Let $\bar{\mu}_{x}(t) = \frac{1}{t} \int_{0}^t \mu_{x}(s) \ ds$ and $\bar{\mu}_{y}(t) = \frac{1}{t} \int_{0}^t \mu_{y}(s) \ ds$.
Let $\|\cdot\|_{\text{BL}}$ be the bounded Lipschitz norm, i.e. $\|f\|_\text{BL} = \|f\|_{\infty} + \text{Lip}(f)$. Let
\begin{align} \label{eq:calQ_def}
    \mathcal{Q}_{\mu^\star,\mu_0}(\tau) = \inf_{\mu \in \mathcal{P}(\Theta)} \|\mu^\star - \mu\|_{\text{BL}}^* + \frac{1}{\tau} \mathcal{H}(\mu,\mu_0)
\end{align}
with $\Theta = \mathcal{X}$ or $\mathcal{Y}$. Let
\begin{align} \label{eq:B_def}
    B = \frac{1}{2}\left(\max_{\xx \in \mathcal{X},\yy \in \mathcal{Y}} \ell(\xx,\yy) - \min_{\xx \in \mathcal{X},\yy \in \mathcal{Y}} \ell(\xx,\yy) \right) + \text{Lip}(\ell)
\end{align}
Then,
\begin{align} \label{eq:thm3.10}
    \mathcal{L}(\bar{\mu}_{x}(t), \mu^\star_y) - \mathcal{L}(\mu^\star_x, \bar{\mu}_{y}(t)) \leq B \mathcal{Q}_{\mu^\star_x,\mu_{x,0}}(\alpha B t) + B \mathcal{Q}_{\mu^\star_y,\mu_{y,0}}(\alpha B t) + \gamma B^2t
\end{align}
\end{lemma}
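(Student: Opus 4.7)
The plan is to adapt the duality-gap analysis of mirror descent on measure spaces (Lemma 3.10 in \citet{chizat2019sparse}) to the two-player coupled WFR setting, treating the Wasserstein transport term as a perturbation of pure entropic mirror descent. Because a Nash equilibrium $(\mu_x^\star, \mu_y^\star)$ need not have finite KL divergence with respect to the initialisations, I would first introduce auxiliary absolutely continuous references $(\mu_x', \mu_y')$ to which a KL-based regret bound applies, pay the bounded-Lipschitz distance $B\|\mu^\star - \mu'\|_{\mathrm{BL}}^*$ for the approximation, and only at the end optimise over $(\mu_x', \mu_y')$ to recover the $\mathcal{Q}$ quantity in \eqref{eq:calQ_def}.

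For the core estimate, fix smooth positive references $(\mu_x', \mu_y')$ with densities $\rho_x', \rho_y'$ and differentiate the potential $G(t) \triangleq \mathcal{H}(\mu_x', \mu_x(t)) + \mathcal{H}(\mu_y', \mu_y(t))$ along \eqref{eq:WFR_eq}. Using $\partial_t \log \rho_x = -\alpha(V_x(\mu_y,x) - \mathcal{L}) + \gamma \nabla\!\cdot\!(\rho_x \nabla_x V_x)/\rho_x$ and the symmetric identity for $\rho_y$, the Fisher--Rao pieces combine into the usual mirror-descent term, while the Wasserstein pieces form a remainder $\mathcal{R}(t)$:
\begin{equation}
    G'(t) = \alpha\bigl[\mathcal{L}(\mu_x', \mu_y(t)) - \mathcal{L}(\mu_x(t), \mu_y')\bigr] + \mathcal{R}(t).
\end{equation}
Integration by parts in $\mathcal{R}(t)$ shifts the gradient onto $V_x, V_y$, whose sup-norm gradients are bounded by $\mathrm{Lip}(\ell) \leq B$. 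Interpreting $\mathcal{R}(s)$ as the cost of the WFR drift relative to the pure Fisher--Rao flow on bounded-Lipschitz test functions, the displacement accumulated by the transport over time $s$ is of order $\gamma B s$, which multiplied by the effective mirror-descent weight $\alpha B$ yields the pointwise bound $|\mathcal{R}(s)| \lesssim \alpha \gamma B^2 s$ and hence $\int_0^t |\mathcal{R}(s)|\,ds \leq \alpha \gamma B^2 t^2$. Integrating $G'$ on $[0, t]$, discarding the nonnegative $G(t)$, dividing by $\alpha t$ and using linearity of $\mathcal{L}$ in each argument to move time averaging inside, I obtain
\begin{equation}
    \mathcal{L}(\bar\mu_x(t), \mu_y') - \mathcal{L}(\mu_x', \bar\mu_y(t)) \leq \frac{\mathcal{H}(\mu_x', \mu_{x,0}) + \mathcal{H}(\mu_y', \mu_{y,0})}{\alpha t} + \gamma B^2 t.
\end{equation}

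To reach the actual references $(\mu_x^\star, \mu_y^\star)$, split the gap as
\begin{equation}
    \mathcal{L}(\bar\mu_x, \mu_y^\star) - \mathcal{L}(\mu_x^\star, \bar\mu_y) = \mathcal{L}(\bar\mu_x, \mu_y^\star - \mu_y') + \bigl[\mathcal{L}(\bar\mu_x, \mu_y') - \mathcal{L}(\mu_x', \bar\mu_y)\bigr] + \mathcal{L}(\mu_x' - \mu_x^\star, \bar\mu_y).
\end{equation}
After recentring $\ell$ by a constant (which leaves the game invariant and the WFR dynamics unchanged), $x \mapsto V_x(\mu_y, x)$ and $y \mapsto V_y(\mu_x, y)$ have bounded-Lipschitz norm at most $B$, so the outer two terms are bounded by $B\|\mu_y^\star - \mu_y'\|_{\mathrm{BL}}^*$ and $B\|\mu_x^\star - \mu_x'\|_{\mathrm{BL}}^*$ respectively. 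Taking the infimum over $(\mu_x', \mu_y')$ on both sides identifies the right-hand side with $B\,\mathcal{Q}_{\mu_x^\star, \mu_{x,0}}(\alpha B t) + B\,\mathcal{Q}_{\mu_y^\star, \mu_{y,0}}(\alpha B t) + \gamma B^2 t$, which is \eqref{eq:thm3.10}. The main obstacle is the control of $\mathcal{R}(t)$: a direct estimate would bring in a relative Fisher information $\int |\nabla \log(\rho_x'/\rho_x)|^2 \, d\mu_x'$, which is not available for arbitrary $\mu_x'$. Pushing derivatives onto $V_x$ and comparing to pure Fisher--Rao evolution on bounded-Lipschitz test functions avoids this, and is the only step where the transport parameter $\gamma$ enters the final bound.
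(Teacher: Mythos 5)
The high-level outline (mirror-descent regret against a smoothed comparator plus a $\mathcal{Q}$-type price for smoothing, with transport as a perturbation) is the right shape, but the central estimate in your plan does not go through, and the paper uses a different device precisely to avoid it.

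You fix smooth references $\mu_x',\mu_y'$ and differentiate $G(t)=\mathcal{H}(\mu_x',\mu_x(t))+\mathcal{H}(\mu_y',\mu_y(t))$. With a \emph{fixed} $\mu_x'$ this yields
\begin{equation}
\frac{d}{dt}\mathcal{H}(\mu_x',\mu_x)=\alpha\int(V_x(\mu_y,\cdot)-\mathcal{L})\,d\mu_x'\;-\;\gamma\int\frac{\nabla\cdot(\rho_x\nabla V_x)}{\rho_x}\,d\mu_x',
\end{equation}
and the second term is your $\mathcal{R}_x(t)$. Integrating by parts gives
\begin{equation}
\mathcal{R}_x(t)=-\gamma\int\rho_x'\,\Delta V_x\,dx\;-\;\gamma\int\nabla\log\rho_x\cdot\nabla V_x\,d\mu_x',
\end{equation}
so you cannot ``push the derivative onto $V_x$'' without either (a) retaining a $\nabla\log\rho_x$ term that is a relative-Fisher-information type quantity with no a priori bound in terms of $B$, or (b) paying for $\Delta V_x$, i.e.\ second derivatives of $\ell$, which do not appear in $B$ at all. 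Neither gives the claimed pointwise bound $|\mathcal{R}(s)|\lesssim\alpha\gamma B^2 s$ — and that inequality is also suspicious on dimensional grounds, since $\mathcal{R}$ is a pure transport term carrying one $\gamma$ and no $\alpha$, so an extra factor of $\alpha$ and an extra factor of $s$ have to be manufactured, not derived.

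The paper's proof avoids this entirely by \emph{not} using a fixed comparator: it introduces an auxiliary measure $\mu_x^\epsilon$ that moves along the same velocity field as the transport part of the WFR flow, $\partial_t\mu_x^\epsilon=\gamma\nabla\cdot(\mu_x^\epsilon\nabla V_x(\mu_y,\cdot))$. Because relative entropy is invariant under a common pushforward, the transport contribution to $\frac{d}{dt}\mathcal{H}(\mu_x^\epsilon,\mu_x)$ cancels \emph{exactly}, and one gets the clean identity $\frac{1}{\alpha}\frac{d}{dt}\mathcal{H}(\mu_x^\epsilon,\mu_x)=\int\frac{\delta\mathcal{L}}{\delta\mu_x}\,d(\mu_x^\epsilon-\mu_x)$ with no remainder. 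The parameter $\gamma$ then enters only through the drift of the comparator: $\|\mu_x^\epsilon(t)-\mu_{x,0}^\epsilon\|_{\mathrm{BL}}^*\leq\gamma B t$, which upon time averaging contributes the $\gamma B^2 t$ term. Your ``pay $B\|\mu^\star-\mu'\|_{\mathrm{BL}}^*$ and optimize'' step, and the use of linearity of $\mathcal{L}$ to pull the time average inside, do match the paper; the missing ingredient is replacing the fixed reference by this co-evolving one.
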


\begin{proof}
The proof is as in Lemma 3.10 of \citet{chizat2019sparse}, but in this case we have to do everything twice. Namely, we define the dynamics
\begin{align}
    \begin{split}
       \frac{d\mu_x^{\epsilon}}{dt} & = \gamma \nabla \cdot(\mu_x^{\epsilon} \nabla V_x(\mu_y,\xx)) \\
       \frac{d\mu_y^{\epsilon}}{dt} & = -\gamma \nabla \cdot(\mu_y^{\epsilon} \nabla V_y(\mu_x,\yy))
    \end{split}
\end{align}
initialized at $\mu_x^{\epsilon}(0) = \mu_{x,0}^{\epsilon}, \mu_y^{\epsilon}(0) = \mu_{y,0}^{\epsilon}$ arbitrary such that $\mu_{x,0}^{\epsilon}$ and $\mu_{y,0}^{\epsilon}$ are absolutely continuous with respect to $\mu_{x,0}$ and $\mu_{y,0}$ respectively.

Let us show that
\begin{align} \label{eq:derivative_entropy0}
        \frac{1}{\alpha}\frac{d}{dt} \mathcal{H}(\mu_x^{\epsilon}, \mu_x) = \int \frac{\delta \mathcal{L}}{\delta \mu_x}(\mu_x,\mu_y)(\xx) \ d(\mu_x^{\epsilon} - \mu_x) 
\end{align}
where $\mathcal{H}(\mu_x^{\epsilon}, \mu_x)$ is the relative entropy, i.e.
\begin{align}
    \begin{split} \label{eq:derivative_entropy}
        \frac{d}{dt} \mathcal{H}(\mu_x^{\epsilon}, \mu_x) = \frac{d}{dt} \int \log \left(\rho_x^{\epsilon}\right) \ d\mu_x^{\epsilon},
    \end{split}
\end{align}
$\rho_x^{\epsilon}$ being the Radon-Nikodym derivative $d\mu_x^{\epsilon}/d\mu_x$. 

Assume to begin with that $\mu_x^{\epsilon}$ remains absolutely continuous with respect to $\mu_x$ through time. We can write
\begin{align}
    \frac{d}{dt} \int \phi_x(\xx) \rho_x^{\epsilon}(\xx) d\mu_x(\xx) = \frac{d}{dt} \int \phi(\xx) d\mu^{\epsilon}_x(\xx)  
\end{align}
We can develop the left hand side into
\begin{align}
\begin{split}
    \frac{d}{dt} \int \phi_x(\xx) \rho_x^{\epsilon}(\xx) d\mu_x(\xx) &= \int - \gamma \nabla (\phi_x(\xx) \rho_x^{\epsilon}(\xx)) \cdot \nabla V_x(\mu_y,\xx) d\mu_x(\xx) \\ &+ \int - \alpha \phi_x(\xx) \rho_x^{\epsilon}(\xx) (V_x(\mu_y,x) - \mathcal{L}(\mu_x,\mu_y)) d\mu_x(\xx) \\ &+ \int \phi_x(\xx) \frac{\partial \rho_x^{\epsilon}}{\partial t}(\xx) d\mu_x(\xx) \\&= \int - \gamma (\nabla \phi_x(\xx) \rho_x^{\epsilon}(\xx) + \phi_x(\xx) \nabla \rho_x^{\epsilon}(\xx)) \cdot \nabla V_x(\mu_y,\xx) \ d\mu_x(\xx) \\&+ \int - \alpha \phi_x(\xx) \rho_x^{\epsilon}(\xx) (V_x(\mu_y,x) - \mathcal{L}(\mu_x,\mu_y)) d\mu_x(\xx) \\&+ \int \phi_x(\xx) \frac{\partial \rho_x^{\epsilon}}{\partial t}(\xx) d\mu_x(\xx)
\end{split}
\end{align}
and the right hand side into
\begin{align}
    \begin{split}
        \frac{d}{dt} \int \phi(\xx) d\mu^{\epsilon}_x(\xx) = \int - \gamma \nabla \phi_x(\xx) \cdot \nabla V_x(\mu_y,\xx) d\mu^{\epsilon}_x(\xx)
    \end{split}
\end{align}
Note that comparing terms, we obtain
\begin{align}
\begin{split}
    &\int - \gamma \phi_x(\xx) \nabla \rho_x^{\epsilon}(\xx) \cdot \nabla V_x(\mu_y,\xx) \ d\mu_x(\xx) \\ &= \int \alpha \phi_x(\xx) \rho_x^{\epsilon}(\xx) (V_x(\mu_y,x) - \mathcal{L}(\mu_x,\mu_y)) - \phi_x(\xx) \frac{\partial \rho_x^{\epsilon}}{\partial t}(\xx) \ d\mu_x(\xx)
\end{split}
\end{align}
Since $\phi_x$ is arbitrary, it must be that 
\begin{align} \label{eq:rho_epsilon_def}
-\gamma \nabla \rho_x^{\epsilon}(\xx) \cdot \nabla V_x(\mu_y,\xx)  = \alpha \rho_x^{\epsilon}(\xx) (V_x(\mu_y,x) - \mathcal{L}(\mu_x,\mu_y)) - \frac{\partial}{\partial t}  \rho_x^{\epsilon}(\xx)
\end{align}
holds $\mu_x$-almost everywhere. Now, 
\begin{align}
\begin{split}
    \frac{d}{dt} \int \log \left(\rho_x^{\epsilon}\right) \ d\mu_x^{\epsilon} &= - \gamma \int \nabla \left(\log \left(\rho_x^{\epsilon}(\xx)\right) \right) \cdot \nabla V_x(\mu_y,\xx) \ d\mu_x^{\epsilon}(\xx) \\ &= - \gamma \int \frac{1}{\rho_x^{\epsilon}(\xx)} \nabla \left(\rho_x^{\epsilon}(\xx)\right) \cdot \nabla V_x(\mu_y,\xx) \ d\mu_x^{\epsilon}(\xx) \\&= \alpha \int (V_x(\mu_y,x) - \mathcal{L}(\mu_x,\mu_y)) \ d\mu_x^{\epsilon}(\xx) - \int \frac{1}{\rho_x^{\epsilon}(\xx)} \frac{\partial}{\partial t}  \rho_x^{\epsilon}(\xx) d\mu_x^{\epsilon}(\xx)
\end{split}
\end{align}
Here,
\begin{align}
    \int \frac{1}{\rho_x^{\epsilon}(\xx)} \frac{\partial}{\partial t}  \rho_x^{\epsilon}(\xx) d\mu_x^{\epsilon}(\xx) = \int \frac{\partial}{\partial t}  \rho_x^{\epsilon}(\xx) d\mu_x(\xx) = 0
\end{align}
And since
\begin{align}
    \mathcal{L}(\mu_x,\mu_y) = \int \frac{\delta \mathcal{L}}{\delta \mu_{x}}(\mu_x,\mu_y)(\xx) \ d\mu_x,
\end{align}
the first term yields \eqref{eq:derivative_entropy0}. We assumed that $\rho_x^\epsilon$ existed and was regular enough. To make the argument precise, we can define the density of $\mu_x^\epsilon$ with respect to $\mu_x$ to be a solution $\rho_x^\epsilon$ of \eqref{eq:rho_epsilon_def}, and thus specify $\mu_x^\epsilon$. 

Now, recall that $\mu_x^\star$ is an arbitrary measure in $\mathcal{P}(\mathcal{X})$. By linearity of $\mathcal{L}$ with respect to $\mu_x$,
\begin{align}
\begin{split} \label{eq:bound_bl}
    \int \frac{\delta \mathcal{L}}{\delta \mu_x}(\mu_x,\mu_y)(\xx) \ d(\mu_x^{\epsilon} - \mu_x) = \int \frac{\delta \mathcal{L}}{\delta \mu_x}(\mu_x,\mu_y)(\xx) \ d(\mu_x^\star - \mu_x) + \int \frac{\delta \mathcal{L}}{\delta \mu_x}(\mu_x,\mu_y)(\xx) \ d(\mu_x^{\epsilon} - \mu_x^\star) \\ \leq - (\mathcal{L}(\mu_x,\mu_y)-\mathcal{L}(\mu_x^\star,\mu_y)) + \|\frac{\delta \mathcal{L}}{\delta \mu_x}(\mu_x,\mu_y)\|_{\text{BL}} \|\mu_x^{\epsilon} - \mu_x^\star\|_\text{BL}^*
\end{split}
\end{align}
Notice that we can take $\|\frac{\delta \mathcal{L}}{\delta \mu_x}(\mu_x,\mu_y)\|_{\text{BL}}$ to be smaller than $B$ (defined in \eqref{eq:B_def}). If we integrate \eqref{eq:derivative_entropy0} and \eqref{eq:bound_bl} from 0 to $t$ and divide by $t$, we obtain
\begin{align}
\begin{split} \label{eq:integrated_eq}
    &\frac{1}{t}\int_0^t \mathcal{L}(\mu_x(s),\mu_y(s)) \ ds- \frac{1}{t}\int_0^t \mathcal{L}(\mu_x^\star,\mu_y(s)) \ ds \\ &\leq \frac{1}{\alpha t}(\mathcal{H}(\mu_{x,0}^{\epsilon}, \mu_{x,0}) -\mathcal{H}(\mu_{x}^{\epsilon}(t), \mu_{x}(t))) + \frac{B}{t} \int_0^t \|\mu_x^{\epsilon} - \mu_x^\star\|_\text{BL}^* \ ds
\end{split}
\end{align}
We bound the last term on the RHS:
\begin{align} \label{eq:bound_l_integral}
    \frac{B}{t} \int_0^t \|\mu_x^{\epsilon} - \mu_x^\star\|_\text{BL}^* \ ds \leq B\|\mu_{x,0}^{\epsilon} - \mu_x^\star\|_\text{BL}^* + \frac{B}{t} \int_0^t \|\mu_{x,0}^{\epsilon} - \mu_x^\epsilon\|_\text{BL}^* \ ds
\end{align}
And
\begin{align}
\begin{split} \label{eq:bound_lipschitz}
    \|\mu_{x}^{\epsilon}(t) - \mu_{x,0}^\epsilon\|_\text{BL}^* = \sup_{\|f\|_\text{BL} \leq 1, f \in C^2(\mathcal{X})} \int f \ d(\mu_{x}^{\epsilon}(t) - \mu_{x,0}^\epsilon) = \sup_{\|f\|_\text{BL} \leq 1, f \in C^2(\mathcal{X})} \int_0^t \frac{d}{ds} \int f \ d\mu_{x}^{\epsilon}(s) \ ds\\ = \sup_{\|f\|_\text{BL} \leq 1, f \in C^2(\mathcal{X})} - \int_0^t \int \gamma \nabla f(\xx) \cdot \nabla \frac{\delta \mathcal{L}}{\delta \mu_x}(\mu_x^{\epsilon},\mu_y)(\xx) \ d\mu_{x}^{\epsilon}(s) \ ds \leq \int_0^t \int \gamma B \ d\mu_{x}^{\epsilon}(s) \ ds = \gamma B t
\end{split}
\end{align}
Also, by linearity of $\mathcal{L}$ with respect to $\mu_y$, 
\begin{align} \label{eq:concavity_L}
    - \frac{1}{t}\int_0^t \mathcal{L}(\mu_x^\star,\mu_y(s)) \ ds = - \mathcal{L}(\mu_x^\star,\bar{\mu}_y(t))
\end{align}
If we use \eqref{eq:bound_l_integral}, \eqref{eq:bound_lipschitz} and \eqref{eq:concavity_L} and the non-negativeness of the relative entropy on \eqref{eq:integrated_eq}, we obtain:
\begin{align} \label{eq:eq_for_x}
    \frac{1}{t}\int_0^t \mathcal{L}(\mu_x(s),\mu_y(s)) \ ds- \mathcal{L}(\mu_x^\star,\bar{\mu}_y(t)) \leq \frac{\mathcal{H}(\mu_{x,0}^{\epsilon}, \mu_{x,0})}{4\alpha t} + B\|\mu_{x,0}^{\epsilon} - \mu_x^\star\|_\text{BL}^* + \frac{B^2 \gamma}{2} t \\
    -\frac{1}{t}\int_0^t \mathcal{L}(\mu_x(s),\mu_y(s)) \ ds + \mathcal{L}(\bar{\mu}_x(t),\mu_y^\star) \leq \frac{\mathcal{H}(\mu_{y,0}^{\epsilon}, \mu_{y,0})}{4\alpha t} + B\|\mu_{y,0}^{\epsilon} - \mu_y^\star\|_\text{BL}^* + \frac{B^2 \gamma}{2} t \label{eq:same_for_y}
\end{align}
Equation \eqref{eq:same_for_y} is obtained by performing the same argument switching the roles of $x$ and $y$, and $\mathcal{L}$ by $-\mathcal{L}$. By adding equations \eqref{eq:eq_for_x} and \eqref{eq:same_for_y} and considering the definition of $\mathcal{Q}$ in \eqref{eq:calQ_def}, we obtain the inequality \eqref{eq:thm3.10}.

\end{proof}

Notice that by taking the supremum wrt $\mu^\star_x, \mu^\star_y$ on \eqref{eq:thm3.10} we obtain a bound on the Nikaido-Isoda error of $(\bar{\mu}_{x}(t),\bar{\mu}_{y}(t))$ (see \eqref{eq:nikaido_def}).

Next, we will obtain a result like Lemma E.1 from \citet{chizat2019sparse} in which we bound $\mathcal{Q}$. The proof is a variation of the argument in Lemma E.1 from \citet{chizat2019sparse}, as in our case no measures are necessarily sparse. 

\begin{lemma} \label{thm:q_bound}
Let $\Theta$ be a Riemannian manifold of dimension $d$. Assume that $\text{Vol}(B_{\theta,\epsilon}) \geq e^{-K} \epsilon^d$ for all $\theta \in \Theta$, where the volume is defined of course in terms of the Borel measure\footnote{The metric of the manifold gives a natural choice of a Borel (volume) measure, the one given by integrating the canonical volume form.} of $\Theta$. If $\rho := \frac{d \mu_0}{d \theta}$ is the Radon-Nikodym derivative of $\mu_0$ with respect to the Borel measure of $\Theta$, assume that $\rho(\theta) \geq e^{-K'}$ for all $\theta \in \Theta$. The function $\mathcal{Q}_{\mu^\star,\mu_0}(\tau)$ defined in \eqref{eq:calQ_def} can be bounded by
\begin{align}
    \mathcal{Q}_{\mu^\star,\mu_0}(\tau) \leq \frac{d}{\tau} (1- \log d + \log \tau) + \frac{1}{\tau}(K + K')
\end{align}
\end{lemma}

\begin{proof}
We will choose $\mu^{\epsilon}$ in order to bound the infimum.
For $\theta \in \Theta, \epsilon > 0$, let $\xi_{\theta,\epsilon}$ be a probability measure on $\Theta$ with support on the ball $B_{\theta,\epsilon}$ of radius $\epsilon$ centered at $\theta$ and proportional to the Borel measure for all subsets of the ball. Let us define the measure
\begin{align}
    \mu^{\epsilon}(A) = \int_{\Theta} \xi_{\theta,\epsilon}(A) \ d\mu^\star(\theta)
\end{align}
for all Borel sets $A$ of $\mathcal{X}$. Now, we can bound $\|\mu^{\epsilon} - \mu^\star\|_{\text{BL}}^* \leq W_1(\mu^{\epsilon},\mu^\star)$. Let us consider the coupling $\gamma$ between $\mu^{\epsilon}$ and $\mu^\star$ defined as:
\begin{align}
    \gamma(A \times B) = \int_{A} \xi_{\theta,\epsilon}(B) \ d\mu^\star(\theta)
\end{align}
for $A, B$ arbitrary Borel sets of $\Theta$. Notice that $\gamma$ is indeed a coupling between $\mu^{\epsilon}$ and $\mu^\star$, because $\gamma(A \times \Theta) = \mu^\star(A)$ and $\gamma(\Theta \times B) = \mu^{\epsilon}(B)$. Hence,
\begin{align} \label{eq:w1_bound}
W_1(\mu^{\epsilon},\mu^\star) \leq \int_{\Theta \times \Theta} d_\Theta(\theta,\theta') \ d\gamma(\theta,\theta') = \int_{\Theta} \frac{1}{\text{Vol}(B_{\theta',\epsilon})}\int_{B_{\theta',\epsilon}} d_\Theta(\theta,\theta') \ d\theta \ d\mu^\star(\theta')
\end{align}
where the inner integral is with respect to the Borel measure on $\Theta$. Since $d_\Theta(\theta,\theta') \leq \epsilon$ for all $\theta \in B_{\theta',\epsilon}$, we conclude from that \eqref{eq:w1_bound} that $W_1(\mu^{\epsilon},\mu^\star) \leq \epsilon$. 

Next, let us bound the relative entropy term. Define $\rho_\epsilon$ as the Radon-Nikodym derivative of $\mu^{\epsilon}$ with respect to the Borel measure of $\Theta$, i.e.
\begin{align}
\rho_\epsilon(\theta) := \frac{d \mu^{\epsilon}}{d \theta}(\theta) = \int_\Theta \frac{1}{\text{Vol}(B_{\theta',\epsilon})} {\mathds{1}}_{B_{\theta',\epsilon}}(\theta) \ d\mu^\star(\theta').
\end{align}
Also, recall that $\rho := \frac{d \mu_0}{d \theta}$. Then, we write
\begin{align} \label{eq:rel_entropy_decomposition}
    \mathcal{H}(\mu^\epsilon,\mu_0) = \int_\Theta \log \frac{\rho_\epsilon}{\rho} d\mu^{\epsilon} = \int_\Theta \log (\rho_\epsilon) \rho_\epsilon d\theta - \int_\Theta \log (\rho) \rho_\epsilon d\theta.
\end{align}
On the one hand, we use the convexity of the function $x \rightarrow x \log x$:
\begin{align}
\begin{split}
    \rho_\epsilon(\theta) \log \rho_\epsilon(\theta) &= \left(\int_\Theta \frac{1}{\text{Vol}(B_{\theta',\epsilon})} {\mathds{1}}_{B_{\theta',\epsilon}}(\theta) \ d\mu^\star(\theta') \right) \log \left( \int_\Theta \frac{1}{\text{Vol}(B_{\theta',\epsilon})} {\mathds{1}}_{B_{\theta',\epsilon}}(\theta) \ d\mu^\star(\theta') \right) \\ &\leq \int_\Theta \left( \frac{1}{\text{Vol}(B_{\theta',\epsilon})} {\mathds{1}}_{B_{\theta',\epsilon}}(\theta) \right) \log \left( \frac{1}{\text{Vol}(B_{\theta',\epsilon})} {\mathds{1}}_{B_{\theta',\epsilon}}(\theta) \right) d\mu^\star(\theta'). 
\end{split}
\end{align}
We use Fubini's theorem:
\begin{align}
\begin{split} \label{eq:bound_entropy_epsilon}
    &\int_\Theta \rho_\epsilon(\theta) \log \rho_\epsilon(\theta) \ d\theta \leq \int_\Theta \int_\Theta \left( \frac{1}{\text{Vol}(B_{\theta',\epsilon})} {\mathds{1}}_{B_{\theta',\epsilon}}(\theta) \right) \log \left( \frac{1}{\text{Vol}(B_{\theta',\epsilon})} {\mathds{1}}_{B_{\theta',\epsilon}}(\theta) \right) \ d\theta \ d\mu^\star(\theta') \\ &= \int_\Theta \frac{1}{\text{Vol}(B_{\theta',\epsilon})} \int_{B_{\theta',\epsilon}} - \log \left( \text{Vol}(B_{\theta',\epsilon}) \right) \ d\theta \ d\mu^\star(\theta') = - \int_\Theta \log \left( \text{Vol}(B_{\theta',\epsilon}) \right) d\mu^\star(\theta') \\ &\leq - d \log \epsilon + K 
\end{split}
\end{align}
where $d$ is the dimension of $\Theta$ and $K$ is a constant such that $\text{Vol}(B_{\theta',\epsilon}) \geq e^{-K} \epsilon^d$ for all $\theta' \in \Theta$.

On the other hand, 
\begin{align}
\begin{split} \label{eq:bound_cross_entropy}
    - \int_\Theta \log (\rho(\theta)) \rho_\epsilon(\theta) \ d\theta &= \int_\Theta \frac{1}{\text{Vol}(B_{\theta',\epsilon})} \int_{\text{Vol}(B_{\theta',\epsilon})} -\log (\rho(\theta)) \ d\theta \ d\mu^{\star}(\theta') \\&\leq \int_\Theta \frac{1}{\text{Vol}(B_{\theta',\epsilon})} \int_{\text{Vol}(B_{\theta',\epsilon})} K' \ d\theta \ d\mu^{\star}(\theta') = K'
\end{split}
\end{align}
where $K'$ is defined such that $\rho(\theta) \geq e^{-K'}$ for all $\theta \in \Theta$.

By plugging \eqref{eq:bound_entropy_epsilon} and \eqref{eq:bound_cross_entropy} into \eqref{eq:rel_entropy_decomposition} we obtain:
\begin{align}
    \|\mu^\star - \mu^\epsilon\|_{\text{BL}}^* + \frac{1}{\tau} \mathcal{H}(\mu^\epsilon,\mu_0) \leq \epsilon +\frac{1}{\tau} (- d \log \epsilon + K + K').
\end{align}
If we optimize the bound with respect to $\epsilon$ we obtain the final result.
\end{proof}

\begingroup
\def\thetheorem{\ref{thm:wfr_main_thm}}
\begin{theorem}
Let $\epsilon > 0$ arbitrary. Suppose that $\mu_{x,0}, \mu_{y,0}$ are such that their Radon-Nikodym derivatives with respect to the Borel measures of $\mathcal{X},\mathcal{Y}$ are lower-bounded by $e^{-K'_x}, e^{-K'_y}$ respectively. For any $\delta \in (0,1/2)$, there exists a constant $C_{\delta, \mathcal{X},\mathcal{Y},K'_x,K'_y} > 0$ depending on the dimensions of $\mathcal{X}, \mathcal{Y}$, their curvatures and $K'_x, K'_y$, such that if $\gamma/\alpha < 1$ and
\begin{align} \label{eq:thm_bound_alpha_beta}
    \frac{\gamma}{\alpha} \leq \left(\frac{\epsilon}{C_{\delta, \mathcal{X},\mathcal{Y},K'_x,K'_y}}\right)^{\frac{2}{1-\delta}}
\end{align}
Then, at $t_0 = (\alpha \gamma)^{-1/2}$ we have
\begin{align}
    \text{NI}(\bar{\mu}_{x}(t_0),\bar{\mu}_{y}(t_0)) := \sup_{\mu^\star_x,\mu^\star_y} \mathcal{L}(\bar{\mu}_{x}(t_0), \mu^\star_y) - \mathcal{L}(\mu^\star_x, \bar{\mu}_{y}(t_0)) \leq \epsilon
\end{align}
\end{theorem}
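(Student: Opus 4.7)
The plan is to chain the two lemmas already established, in the same spirit as the proof of Theorem 3.8 of \citet{chizat2019sparse}, but doubled (both player-measures need to be handled). First, I take the supremum over $\mu_x^\star \in \mathcal{P}(\mathcal{X})$ and $\mu_y^\star \in \mathcal{P}(\mathcal{Y})$ in the inequality of Lemma \ref{thm:wfr_ni_error}. The right-hand side there is bounded uniformly in $\mu_x^\star, \mu_y^\star$ via Lemma \ref{thm:q_bound}, which gives
\[
\text{NI}(\bar{\mu}_x(t), \bar{\mu}_y(t)) \;\leq\; B \sup_{\mu_x^\star} \mathcal{Q}_{\mu_x^\star, \mu_{x,0}}(\alpha B t) + B \sup_{\mu_y^\star} \mathcal{Q}_{\mu_y^\star, \mu_{y,0}}(\alpha B t) + \gamma B^2 t.
\]
Applying Lemma \ref{thm:q_bound} on each $\mathcal{Q}$-term with $\tau = \alpha B t$, using the density lower bounds $e^{-K_x'}, e^{-K_y'}$ on $\mu_{x,0}, \mu_{y,0}$ and the intrinsic volume constants $K_x, K_y$ coming from the dimensions and curvatures of $\mathcal{X}, \mathcal{Y}$, I obtain a bound of the form
\[
\text{NI}(\bar{\mu}_x(t), \bar{\mu}_y(t)) \;\leq\; \frac{C_1}{\alpha t} + \frac{C_2\,\log(\alpha B t)}{\alpha t} + \gamma B^2 t,
\]
where $C_1, C_2$ depend on $B$, on the dimensions $d_x, d_y$, and on $K_x + K_x' + K_y + K_y'$.

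\textbf{Balancing the time.} The two dominant scales in this bound are $1/(\alpha t)$ and $\gamma t$, which are balanced precisely at $t = t_0 = (\alpha\gamma)^{-1/2}$, where each equals $\sqrt{\gamma/\alpha}$. Substituting $t=t_0$ and collecting all factors of $\sqrt{\gamma/\alpha}$ on the outside, I reach
\[
\text{NI}(\bar{\mu}_x(t_0), \bar{\mu}_y(t_0)) \;\leq\; C' \sqrt{\gamma/\alpha}\;\bigl(1 + \log\sqrt{\alpha/\gamma}\bigr),
\]
with $C'$ depending on $B, d_x, d_y, K_x, K_y, K_x', K_y'$. The hypothesis $\gamma/\alpha<1$ is what lets us write $\log\sqrt{\alpha/\gamma}\geq 0$ and merge the log-free term into this single expression.

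\textbf{Removing the logarithm.} To convert this into the clean power bound stated in the theorem, I use the elementary inequality $u \log(1/u) \leq \frac{1}{e\delta}\,u^{1-\delta}$, valid for all $u \in (0,1]$ and $\delta \in (0,1)$ (attained by maximising $u^\delta \log(1/u)$ at $u=e^{-1/\delta}$). Applied with $u = \sqrt{\gamma/\alpha}\in(0,1)$ and any $\delta \in (0,1/2)$, this turns the logarithmic factor into an additional power of $u^{-\delta}$, so that every term is absorbed into
\[
\text{NI}(\bar{\mu}_x(t_0), \bar{\mu}_y(t_0)) \;\leq\; C_{\delta, \mathcal{X}, \mathcal{Y}, K_x', K_y'} \,(\gamma/\alpha)^{(1-\delta)/2}.
\]
The assumption $\gamma/\alpha \leq (\epsilon / C_{\delta, \mathcal{X}, \mathcal{Y}, K_x', K_y'})^{2/(1-\delta)}$ then gives the conclusion $\text{NI}(\bar{\mu}_x(t_0), \bar{\mu}_y(t_0)) \leq \epsilon$.

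\textbf{Main obstacle.} The conceptual content is contained in Lemmas \ref{thm:wfr_ni_error} and \ref{thm:q_bound}, so the real difficulty here is purely bookkeeping: tracking how the range of $\ell$, its Lipschitz constant (entering through $B$), the geometric constants $K_x, K_y$ from the volume lower bounds on balls in $\mathcal{X}, \mathcal{Y}$, and the density lower bounds $K_x', K_y'$ all aggregate into a single constant $C_{\delta, \mathcal{X}, \mathcal{Y}, K_x', K_y'}$ with the correct $\delta$-dependence. A subtle point worth checking is that Lemma \ref{thm:q_bound} produces a bound on $\mathcal{Q}_{\mu^\star,\mu_0}(\tau)$ that is in fact independent of $\mu^\star$, which is precisely what legitimises exchanging $\sup_{\mu^\star}$ with the infimum defining $\mathcal{Q}$ and closes the argument.
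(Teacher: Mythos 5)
Your proposal is correct and follows essentially the same route as the paper: combine Lemma \ref{thm:wfr_ni_error} with Lemma \ref{thm:q_bound}, evaluate at $t_0=(\alpha\gamma)^{-1/2}$ to balance the $1/(\alpha t)$ and $\gamma t$ scales, and absorb the logarithm via a bound of the form $\log(1/u)\le C_\delta u^{-\delta}$. The only cosmetic difference is that you make the paper's unspecified constant $C_\delta$ explicit as $1/(e\delta)$, and you correctly flag the check (that the bound from Lemma \ref{thm:q_bound} is uniform over $\mu^\star$) that makes the supremum over $\mu_x^\star,\mu_y^\star$ legitimate.
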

\addtocounter{theorem}{-1}
\endgroup

\begin{proof}
We plug the bound of Theorem~\ref{thm:q_bound} into the result of Theorem~\ref{thm:wfr_ni_error}, obtaining
\begin{align} \label{eq:thm3.10_2}
\begin{split}
    \mathcal{L}(\bar{\mu}_{x}(t), \mu^\star_y) - \mathcal{L}(\mu^\star_x, \bar{\mu}_{y}(t)) &\leq \frac{d_x}{\alpha t} (1- \log d_x + \log (\alpha B t)) \\&+\frac{d_y}{\alpha t} (1- \log d_y + \log (\alpha B t)) \\&+ \frac{1}{\alpha t}(K_x + K'_x + K_y + K'_y) + \gamma B^2 t
\end{split}
\end{align}
Now, we set $t = (\alpha \gamma)^{-1/2}$, and thus the right hand side becomes
\begin{align} 
\begin{split}\label{eq:bound_alpha_beta}
    \sqrt{\frac{\gamma}{\alpha}} \left(d_x \left(1- \log \frac{d_x}{B} + \log \sqrt{\frac{\alpha}{\gamma}} \right) + d_y \left(1- \log \frac{d_y}{B} + \log  \sqrt{\frac{\alpha}{\gamma}} \right) + K_x + K'_x + K_y + K'_y + B^2 \right)
\end{split}
\end{align}
Let $\epsilon > 0$ arbitrary. We want \eqref{eq:bound_alpha_beta} to be lower or equal than $\epsilon$. For any $\delta$ such that $0 < \delta < 1/2$, there exists $C_{\delta}$ such that $\log(x) \leq C_\delta x^\delta$. This yields
\begin{align} 
\begin{split} \label{eq:bound_alpha_beta_2}
    &\sqrt{\frac{\gamma}{\alpha}} \left(d_x \left(1- \log \frac{d_x}{B} + C_\delta \left(\frac{\alpha}{\gamma} \right)^{-\delta/2} \right) +d_y \left(1- \log \frac{d_y}{B} + C_\delta \left(\frac{\alpha}{\gamma} \right)^{-\delta/2} \right) \right) \\ &+ \sqrt{\frac{\gamma}{\alpha}} \left(K_x + K'_x + K_y + K'_y + B^2 \right)
\end{split}
\end{align}
If we set $\gamma < \alpha$, $(\gamma/\alpha)^{-\delta/2} > 1$ then \eqref{eq:bound_alpha_beta_2} is upper-bounded by
\begin{align} \label{eq:bound_alpha_beta_3}
    \left(\frac{\gamma}{\alpha}\right)^{\frac{1-\delta}{2}}  \left(d_x (1- \log \frac{d_x}{B} + C_\delta) + d_y (1- \log \frac{d_y}{B} + C_\delta) + K_x + K'_x + K_y + K'_y + B^2 \right)
\end{align}
If we bound this by $\epsilon$, we obtain the bound in \eqref{eq:thm_bound_alpha_beta}.
\end{proof}

\begin{corollary} \label{cor:thm4_dim}
Let ${(\mathcal{X}_{d_x},\mathcal{Y}_{d_y}, l_{d_x,d_y})}_{d_x \in \mathbb{N}, d_y \in \mathbb{N}}$ be a family indexed by $\mathbb{N}^2$. Assume that $\mu_{x,0}, \mu_{y,0}$ are set to be the Borel measures in $\mathcal{X}_{d_x}, \mathcal{Y}_{d_y}$, that $\mathcal{X}_{d_x}, \mathcal{Y}_{d_y}$ are locally isometric to the $d_x,d_y$-dimensional Euclidean spaces, and that the volumes of $\mathcal{X}_{d_x}$, $\mathcal{Y}_{d_y}$ grow no faster than exponentially on the dimensions $d_x, d_y$. Assume that $l_{d_x,d_y}$ are such that $B$ is constant. Then, we can rewrite \eqref{eq:thm_bound_alpha_beta} as 
\begin{align}
    \frac{\gamma}{\alpha} \leq O\left( \left(\frac{\epsilon}{(d_x+d_y)\log(B) + d_x \log(d_x) + d_y \log(d_y) + B^2} \right)^{\frac{2}{1-\delta}}\right)
\end{align}
\end{corollary}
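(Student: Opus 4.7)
The plan is to apply Theorem \ref{thm:wfr_main_thm} directly and to make the implicit constant $C_{\delta,\mathcal{X},\mathcal{Y},K'_x,K'_y}$ explicit in the dimensions $d_x,d_y$ under the additional assumptions of the corollary. Inspecting the proof of the theorem, and specifically \eqref{eq:bound_alpha_beta_3}, this constant equals (up to a factor that may be absorbed into the $O(\cdot)$)
\[
d_x\bigl(1-\log(d_x/B)+C_\delta\bigr)+d_y\bigl(1-\log(d_y/B)+C_\delta\bigr)+K_x+K'_x+K_y+K'_y+B^2,
\]
where $K_x,K_y$ come from the local ball-volume estimate of Lemma \ref{thm:q_bound} and $K'_x,K'_y$ are the lower bounds on the Radon--Nikodym densities of $\mu_{x,0},\mu_{y,0}$. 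The work is therefore to estimate each of these ingredients in terms of $d_x,d_y$ and $B$.

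First I would bound $K_x$ (and symmetrically $K_y$) using local isometry to Euclidean space. A ball of radius $\epsilon$ in $\mathbb{R}^{d_x}$ has volume $\pi^{d_x/2}\epsilon^{d_x}/\Gamma(d_x/2+1)$, and Stirling's formula yields $-\log\bigl(\pi^{d_x/2}/\Gamma(d_x/2+1)\bigr)=\Theta(d_x\log d_x)$. Consequently we may take $K_x=O(d_x\log d_x)$ and $K_y=O(d_y\log d_y)$. Next I would bound $K'_x,K'_y$: since $\mu_{x,0}$ is the normalized Borel measure on $\mathcal{X}_{d_x}$, its density with respect to the Borel measure is the constant $1/\text{Vol}(\mathcal{X}_{d_x})$, so $K'_x=\log\text{Vol}(\mathcal{X}_{d_x})$, and the exponential-growth hypothesis yields $K'_x=O(d_x)$ and analogously $K'_y=O(d_y)$.

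Finally I would assemble the bounds. By the triangle inequality, $|d_x(1-\log(d_x/B)+C_\delta)|\leq O(d_x+d_x\log d_x+d_x\log B)$, and similarly for the $d_y$ term. Adding all contributions and absorbing dominated lower-order terms yields
\[
C_{\delta,\mathcal{X},\mathcal{Y},K'_x,K'_y}=O\bigl((d_x+d_y)\log B+d_x\log d_x+d_y\log d_y+B^2\bigr),
\]
which, substituted into \eqref{eq:thm_bound_alpha_beta}, gives the statement of the corollary. I do not anticipate any real obstacle: the argument is essentially Stirling's formula together with routine bookkeeping. The only mild subtlety is that $1-\log(d_x/B)$ can become negative when $d_x$ is large relative to $B$; this only tightens the raw bound, and for a uniform upper estimate we simply pass to the absolute value, which produces the $d_x\log d_x+d_x\log B$ terms in the final expression.
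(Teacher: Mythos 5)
Your proposal follows the paper's own proof closely: both bound $K_x, K_y = O(d_x\log d_x)$ via the Euclidean ball-volume formula together with Stirling, take $K'_x, K'_y = \log\mathrm{Vol}$, and assemble the terms in \eqref{eq:bound_alpha_beta_3}. You are in fact more careful than the paper on one point: you correctly note that the exponential-volume hypothesis yields $K'_x = O(d_x)$, whereas the paper asserts that $K'_x, K'_y$ are ``constant with respect to the dimensions'' (an inconsequential slip, since $O(d_x)$ is dominated by $d_x\log d_x$ in the final estimate anyway). The aside about passing to absolute values on $1-\log(d_x/B)$ is superfluous—you are bounding $C_{\delta,\mathcal{X},\mathcal{Y},K'_x,K'_y}$ from above, so any terms that become negative can simply be dropped—but it does no harm to the argument.
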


\begin{proof}
The volume of $n$-dimensional ball of radius $r$ in $n$-dimensional Euclidean space is 
\begin{equation}
    V_n(r) = \frac{\pi^{n/2}}{\Gamma(\frac{n}{2} + 1)} R^n,
\end{equation}
and hence, if $\mathcal{X}, \mathcal{Y}$ are locally isometric to the $d_x$ and $d_y$-dimensional Euclidean spaces we can take
\begin{align}
\begin{split}
    K_x &= \log \Gamma \left(\frac{d_x}{2} + 1 \right) - \frac{d_x}{2} \log(\pi) \leq \left(\frac{d_x}{2} + 1 \right) \log \left(\frac{d_x}{2} + 1 \right) - \frac{d_x}{2} \log(\pi) \leq O(d_x \log d_x)\\
    K_y &= \log \Gamma(\frac{d_y}{2} + 1) - \frac{n}{2} \log(\pi) \leq O(d_x \log d_x)
\end{split}
\end{align}
If the volumes of $\mathcal{X}, \mathcal{Y}$ grow no faster than an exponential of the dimensions $d_x, d_y$ and we take $\mu_{x,0}, \mu_{y,0}$ to be the Borel measures, we can take $K'_x = \log(\text{Vol}(\mathcal{X})), K'_y = \log(\text{Vol}(\mathcal{Y}))$ to be constant with respect to the dimensions $d_x, d_y$.
\end{proof}
\section{Proof of \autoref{thm:convergence_mean_field}(i)} \label{sec:LGV_prop}
\subsection{Preliminaries}
Throughout the section we will use the techniques shown in \autoref{sec:sdes_manifolds} to deal with SDEs on manifolds. Effectively, this means that for SDEs we have additional drift terms $\mathbf{\hat{h}}_x$ or $\mathbf{\hat{h}}_x$ induced by the geometry of the manifold, and that we must project the variations of the Brownian motion onto the tangent space. 

Define the processes $\mathbf{X}^n = (X^{1}, \dots, X^{n})$ and $\mathbf{Y}^n = (Y^{1}, \dots, Y^{n})$ such that for all $i \in \{1, \dots, n\},$
\begin{align}
\begin{split} \label{eq:particle_system_LGV2}
    dX_t^{i} &= \left(- \frac{1}{n} \sum_{j=1}^n \nabla_x \ell(X_t^{i}, Y_t^{j}) + \mathbf{\hat{h}}_x(X_t^{i}) \right)\ dt + \sqrt{2 \beta^{-1}} \ \text{Proj}_{T_{X_t^{i}} \mathcal{X}} (dW_t^i), \quad X_0^{n,i} = \xi^{i} \sim \mu_{x,0} \\
    dY_t^{i} &= \left(\frac{1}{n} \sum_{j=1}^n \nabla_y \ell(X_t^{j}, Y_t^{i}) + \mathbf{\hat{h}}_y(Y_t^{i})\right)\ dt + \sqrt{2 \beta^{-1}} \ \text{Proj}_{T_{Y_t^{i}} \mathcal{Y}}(d\bar{W}_t^i), \quad Y_0^{n,i} = \bar{\xi}^{i} \sim \mu_{y,0}\\
\end{split}
\end{align}
where $\mathbf{W}_t = (W_t^1, \dots, W_t^n)$, and
$\mathbf{\bar{W}}_t = (\bar{W}_t^1, \dots, \bar{W}_t^n)$ are Brownian motions on $\mathbb{R}^{n D_x}$ and $\mathbb{R}^{n D_y}$ respectively. Notice that $\mathbf{X}_t$ is valued in $\mathcal{X}^n \subseteq \mathbb{R}^{n D_x}$ and $\mathbf{Y}_t$ is valued in $\mathcal{Y}^n \subseteq \mathbb{R}^{n D_y}$.
\eqref{eq:particle_system_LGV2} can be seen as a system of $2n$ interacting particles in which each particle of one player interacts with all the particles of the other one. It also corresponds to noisy continuous-time mirror descent on parameter spaces for an augmented game in which there are $n$ replicas of each player, choosing $\frac{1}{2}\|\cdot\|_2^2$ for the mirror map.

Now, define $\tilde{\mathbf{X}} = (\tilde{X}^{1}, \dots, \tilde{X}^{n})$ and $\tilde{\mathbf{Y}} = (\tilde{Y}^{1}, \dots, \tilde{Y}^{n})$ for all $i \in \{1, \dots, n\}$ let
\begin{align}
\begin{split} \label{eq:mean_field_syst}
    d\tilde{X}_t^{i} &= \left(- \int_{\mathcal{Y}} \nabla_x \ell(\tilde{X}_t^{i}, \yy) \ d\mu_{y,t} + \mathbf{\hat{h}}_x(\tilde{X}_t^{i})\right)\ dt + \sqrt{2 \beta^{-1}} \ \text{Proj}_{T_{\tilde{X}_t^{i}} \mathcal{X}} (dW_t^i), \\
    d\tilde{Y}_t^{i} &= \left( \int_{\mathcal{X}} \nabla_y \ell(\xx, \tilde{Y}_t^{i}) \ d\mu_{x,t} + \mathbf{\hat{h}}_y(\tilde{Y}_t^{i})\right)\ dt + \sqrt{2 \beta^{-1}} \ \text{Proj}_{T_{\tilde{Y}_t^{i}} \mathcal{Y}}(d\bar{W}_t^i),\\
    \tilde{X}_0^{i} &= \xi^{i} \sim \mu_{x,0}, \quad \mu_{y,t} = \text{Law}(\tilde{Y}_t^{i}), \quad \tilde{Y}_0^{i} = \bar{\xi}^{i} \sim \mu_{y,0}, \quad \mu_{x,t} = \text{Law}(\tilde{X}_t^{i})
\end{split}
\end{align}

\begin{lemma}[Forward Kolmogorov equation] \label{lem:forward_kolmogorov_entropy}
The laws $(\mu_x)_{t \in [0,T]}, (\mu_y)_{t \in [0,T]}$ of a solution $\tilde{X},\tilde{Y}$ of \eqref{eq:mean_field_syst} with $n=1$ (seen as elements of $\mathcal{C}([0,T],\mathcal{P}(\mathcal{X})),\mathcal{C}([0,T],\mathcal{P}(\mathcal{Y}))$) are a solution of \eqref{eq:entropic_interacting_flow2}.
\begin{align}
    \begin{split} \label{eq:entropic_interacting_flow2}
        \begin{cases} 
        \partial_t \mu_x = \nabla_x \cdot (\mu_x \nabla_x V_x(\mu_y,x)) + \beta^{-1} \Delta_x \mu_x, \quad \mu_x(0) = \mu_{x,0} \\
        \partial_t \mu_y = -\nabla_y \cdot (\mu_y \nabla_y V_y(\mu_x,y)) + \beta^{-1} \Delta_y \mu_y, \quad \mu_y(0) = \mu_{y,0}
        \end{cases}
    \end{split}
\end{align}
\end{lemma}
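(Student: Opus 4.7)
The plan is to derive \eqref{eq:entropic_interacting_flow2} as the weak forward Kolmogorov equation associated to \eqref{eq:mean_field_syst} (with $n=1$), by applying Itô's formula on the manifold to a smooth test function, taking expectations, and recognizing the resulting identity as the weak form of a nonlinear Fokker-Planck equation.

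First, fix arbitrary test functions $\phi_x \in C^2(\mathcal{X})$ and $\phi_y \in C^2(\mathcal{Y})$. By the intrinsic/extrinsic reformulation of SDEs on embedded manifolds developed in \autoref{sec:sdes_manifolds}, the drift correction $\mathbf{\hat{h}}_x$ together with the projection $\mathrm{Proj}_{T_{\tilde X_t}\mathcal{X}}(dW_t)$ is precisely designed so that, when one applies the (Euclidean) Itô formula to $\phi_x(\tilde X_t)$ and invokes the identification of the diffusion generator with the Laplace--Beltrami operator $\Delta_\mathcal{X}$, the resulting local martingale compensator is
\begin{equation}
\mathrm{d}\phi_x(\tilde X_t) \;=\; -\nabla_x\phi_x(\tilde X_t)\cdot \nabla_x V_x(\mu_{y,t},\tilde X_t)\,\mathrm{d}t \;+\; \beta^{-1}\Delta_\mathcal{X}\phi_x(\tilde X_t)\,\mathrm{d}t \;+\; \mathrm{d}M_t^{\phi_x},
\end{equation}
where $M_t^{\phi_x}$ is a local martingale and where we used that $\int_\mathcal{Y}\nabla_x\ell(\tilde X_t,y)\,\mathrm{d}\mu_{y,t}(y)=\nabla_x V_x(\mu_{y,t},\tilde X_t)$ by definition of $V_x$ and of $\mu_{y,t}=\mathrm{Law}(\tilde Y_t)$.

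Next, I would take expectations. Compactness of $\mathcal{X}$ together with $\phi_x\in C^2$ and the boundedness of $\nabla_x\ell,\nabla_x V_x$ implies the martingale part is a genuine martingale of zero mean, so
\begin{equation}
\frac{\mathrm{d}}{\mathrm{d}t}\!\int_\mathcal{X}\phi_x\,\mathrm{d}\mu_{x,t}
\;=\;-\!\int_\mathcal{X}\nabla_x\phi_x(x)\cdot\nabla_x V_x(\mu_{y,t},x)\,\mathrm{d}\mu_{x,t}(x)\;+\;\beta^{-1}\!\int_\mathcal{X}\Delta_\mathcal{X}\phi_x(x)\,\mathrm{d}\mu_{x,t}(x),
\end{equation}
with $\mu_{x,0}$ as initial datum. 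This is exactly the weak formulation of the first equation of \eqref{eq:entropic_interacting_flow2}: comparing with the action of $-\nabla_x\!\cdot(\mu_x\nabla_x V_x(\mu_y,\cdot))+\beta^{-1}\Delta_x\mu_x$ against $\phi_x$ (after formal integration by parts) yields the stated PDE in the distributional sense. Running the symmetric argument with $\phi_y\in C^2(\mathcal{Y})$ on $\tilde Y_t$, using the sign flip in the drift, gives the second equation of \eqref{eq:entropic_interacting_flow2}.

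The only non-routine step is the manifold piece: verifying that the additional drift $\mathbf{\hat{h}}_x$ together with the projected Brownian motion yields the Laplace--Beltrami generator $\beta^{-1}\Delta_\mathcal{X}$ acting on $\phi_x$. This is where I would invoke the computation carried out in \autoref{sec:sdes_manifolds}, rather than redo it from scratch. The continuity in $t$ of $\mu_{x,t},\mu_{y,t}$ with values in $\mathcal{P}(\mathcal{X}),\mathcal{P}(\mathcal{Y})$ (endowed with the weak topology) follows from the continuity of the sample paths of $\tilde X,\tilde Y$ and dominated convergence, so $(\mu_x,\mu_y)\in C([0,T],\mathcal{P}(\mathcal{X}))\times C([0,T],\mathcal{P}(\mathcal{Y}))$ as required.
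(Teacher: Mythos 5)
Your proposal follows essentially the same route as the paper: apply Itô's formula to a $C^2$ test function of the mean-field SDE, take expectations to kill the martingale, and recognize the resulting identity as the weak form of \eqref{eq:entropic_interacting_flow2}, with the manifold machinery (projected Brownian motion plus the corrective drift $\mathbf{\hat{h}}_x$ producing the Laplace--Beltrami generator) delegated to \autoref{sec:sdes_manifolds}. The paper packages the same argument through the semigroup/backward-Kolmogorov formalism and localizes with a partition of unity before appealing to \autoref{sec:sdes_manifolds} on each patch; yours is the more direct phrasing of the same computation and in fact keeps the signs cleaner than the paper's displayed equations.
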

\begin{proof}
We sketch the derivation for the forward Kolmogorov equation on manifolds. First, we define the semigroups
\begin{align}
        P_t^x \phi_x(\xx) = \mathbb{E}[\phi_x(\tilde{X}_t) | \tilde{X}_0=x], \quad
        P_t^y \phi_y(\yy) = \mathbb{E}[\phi_y(\tilde{Y}_t) | \tilde{Y}_0=y],
\end{align}
where $\tilde{X},\tilde{Y}$ are solutions of \eqref{eq:mean_field_syst} with $n=1$. We obtain that if 
$\mathcal{L}_{t}^{x}, \mathcal{L}_{t}^{y}$ are the infinitesimal generators (i.e., $\mathcal{L}_{t}^{x} \phi_x(\xx) = \lim_{t \rightarrow 0^+} \frac{1}{t}(P_t^x \phi_x(\xx) -\phi_x(\xx))$), the backward Kolmogorov equations $\frac{d}{dt} P_t^x \phi_x(\xx) = \mathcal{L}_{t}^{x} P_t^x \phi_x(\xx), \frac{d}{dt} P_t^y \phi_y(\yy) = \mathcal{L}_{t}^{y} P_t^y \phi_y(\yy)$ hold for $\phi_x, \phi_y$ in the domains of the generators. Since $\mathcal{L}_{t}^{x}$ and $P_t^x$ commute for these choices of $\phi_x$, we have $\frac{d}{dt} P_t^x \phi_x(\xx) = P_t^x \mathcal{L}_{t}^{x} \phi_x(\xx), \frac{d}{dt} P_t^y \phi_y(\yy) = P_t^y \mathcal{L}_{t}^{y} \phi_y(\yy)$. By integrating these two equations over the initial measures $\mu_{x,0}, \mu_{y,0}$, we get
\begin{align}
    \frac{d}{dt} \int \phi_x(\xx) \ d\mu_{x,t} = \int \mathcal{L}_{t}^{x} \phi_x(\xx) \ d\mu_{x,t}, \quad \frac{d}{dt} \int \phi_y(\yy) \ d\mu_{y,t} = \int \mathcal{L}_{t}^{y} \phi_y(\yy) \ d\mu_{y,t}.  
\end{align}
We can write an explicit form for $\mathcal{L}_{t}^{x} P_t^x \phi_x(\xx)$ by using It\^{o}'s lemma on \eqref{eq:mean_field_syst}:
\begin{align}
    \mathcal{L}_{t}^{x} \phi_x(\xx) = \left( \int_{\mathcal{Y}} \nabla_x \ell(\xx, \yy) \ d\mu_{y,s} \ ds - \mathbf{\hat{h}}_x(\xx)\right) \nabla_x \phi_x(\xx) + \beta^{-1} \text{Tr}\left(\left(\text{Proj}_{T_{\xx} \mathcal{X}} \right)^\top H \phi_x(\xx) \ \text{Proj}_{T_{\xx} \mathcal{X}} \right),
\end{align}
where we use $\text{Proj}_{T_{\tilde{X}_t^{i}} \mathcal{X}}$ to denote its matrix in the canonical basis.

Let 
$\{ \xi_k\}$ be a partition of unity for $\mathcal{X}$ (i.e. a set of functions such that $\sum_{k} \xi_k(x) = 1$) in which each $\xi_k$ is regular enough and supported on a patch of $\mathcal{X}$. We can write
\begin{align}
\begin{split}
    \frac{d}{dt} \int_{\mathcal{X}} \phi_x(\xx) \ d\mu_{x,t}(\xx) &= \frac{d}{dt} \int_{\mathcal{X}} \phi_x(\xx) \ d\mu_{x,t}(\xx) = \sum_{k} \frac{d}{dt} \int_{\mathcal{X}} \xi_k(\xx) \phi_x(\xx) \ d\mu_{x,t}(\xx) \\ &= \sum_{k} \int \mathcal{L}_{t}^{x} (\xi_k(\xx) \phi_x(\xx)) \ d\mu_{x,t}
\end{split}
\end{align}
Now, let $\tilde{\phi}_x^k(\xx) = \xi_k(\xx) \phi_x(\xx)$.
\begin{align}
\begin{split}
    &\int_{\mathcal{X}} \mathcal{L}_{t}^{x} \tilde{\phi}_x^k(\xx) \ d\mu_{x,t} \\ &= \int_{\mathcal{X}} \left( \nabla_x V_x(\mu_{y,s}, \xx) - \mathbf{\hat{h}}_x(\xx)\right) \nabla_x \tilde{\phi}_x^k(\xx) + \beta^{-1} \text{Tr}\left(\left(\text{Proj}_{T_{\xx} \mathcal{X}} \right)^\top H \tilde{\phi}_x^k(\xx) \ \text{Proj}_{T_{\xx} \mathcal{X}} \right) \ d\mu_{x,t}
    \end{split}
\end{align}
Notice that this equation is analogous to \eqref{eq:KFP_manifold}. We reverse the argument made in \autoref{sec:sdes_manifolds}. Using the fact that the support of $\tilde{\phi}_x^k(\xx)$ is contained on some patch of $\mathcal{X}$ given by the mapping $\psi_k : U_{\mathbb{R}^d} \subseteq \mathbb{R}^d \rightarrow U \subseteq \mathcal{X} \subseteq \mathbb{R}^D$, the corresponding Fokker-Planck on $U_{\mathbb{R}^d}$ is 
\begin{align}
\begin{split}
    &\frac{d}{dt} \int_{U_{\mathbb{R}^d}} \tilde{\phi}_x^k(\psi_k(q)) \ d(\psi_k^{-1})_*\mu_{x,t}(q) \\ &= \int_{U_{\mathbb{R}^d}} \nabla V_x(\mu_{y,s}, \psi_k(q)) \cdot \nabla \tilde{\phi}_x^k(\psi_k(q))  + \beta^{-1} \Delta \tilde{\phi}_x^k(\psi_k(q))  \ d(\psi_k^{-1})_*\mu_{x,t}(q),
\end{split}
\end{align}
where the gradients and the Laplacian are in the metric inherited from the embedding (as in \autoref{sec:sdes_manifolds}). The pushforward definition implies
\begin{align}
    \frac{d}{dt} \int_{\mathcal{X}} \tilde{\phi}_x^k(\xx) \ d\mu_{x,t}(\xx) = \int_{U_{\mathbb{R}^d}} \nabla V_x(\mu_{y,s}, \xx) \cdot \nabla \tilde{\phi}_x^k(\xx)  + \beta^{-1} \Delta \tilde{\phi}_x^k(\xx)  \ d\mu_{x,t}(\xx),
\end{align}
By substituting $\tilde{\phi}_x^k(\xx) = \xi_k(\xx) \phi_x(\xx)$, summing for all $k$ and using $\sum_k \xi_k(\xx) = 1$, we obtain:
\begin{align}
\begin{split}
    \frac{d}{dt} \int_{\mathcal{X}} \phi_x(\xx) \ d\mu_{x,t}(\xx) &= \int_{\mathcal{X}} \nabla_x V_x(\mu_{y,s}, \xx) \cdot \nabla_x \phi_x(\xx) + \beta^{-1} \Delta_x \phi_x(\xx) \ d\mu_{x,t}(\xx)
\end{split}
\end{align}
which is the same as the first equation in \eqref{eq:entropic_interacting_flow}. The second equation is obtained analogously.
\end{proof}

Let $\mu_x^n = \frac{1}{n} \sum_{i=1}^n \delta_{X^{i}}$ be a $\mathcal{P}(\mathcal{C}([0,T],\mathcal{X}))$-valued random element that corresponds to the empirical measure of a solution $\mathbf{X}^n$ of \eqref{eq:particle_system_LGV2}. Analogously, let $\mu_y^n = \frac{1}{n} \sum_{i=1}^n \delta_{Y^{i}}$ be a $\mathcal{P}(\mathcal{C}([0,T],\mathcal{Y}))$-valued random element corresponding to the empirical measure of $\mathbf{Y}^n$.

Define the 2-Wasserstein distance on $\mathcal{P}(\mathcal{C}([0,T],\mathcal{X}))$ as 
\begin{equation} \label{eq:2_wass_C}
    \mathcal{W}_{2}^2(\mu, \nu) := \inf_{\pi \in \Pi(\mu, \nu)} \int_{C([0,T],\mathcal{X})^2} d(x,y)^2 \ d\pi(x, y)
\end{equation}
where $d(x,y) = \sup_{t \in [0,T]} d_{\mathcal{X}}(x(t),y(t))$. Define it analogously on $\mathcal{P}(\mathcal{C}([0,T],\mathcal{Y}))$.

We state a stronger version of the law of large numbers in the first statement of \autoref{thm:convergence_mean_field}(i).
\begin{theorem} \label{thm:propagation_chaos_LGV}
There exists a solution of the coupled McKean-Vlasov SDEs \eqref{eq:mean_field_syst}. Pathwise uniqueness and uniqueness in law hold. Let $\mu_x \in \mathcal{P}(\mathcal{C}([0,T],\mathcal{X})), \mu_y \in \mathcal{P}(\mathcal{C}([0,T],\mathcal{Y}))$ be the unique laws of the solutions for $n=1$ (all pairs have the same solutions). Then,
\begin{align}
    \mathbb{E}[\mathcal{W}_{2}^2(\mu_x^n, \mu_x) + \mathcal{W}_{2}^2(\mu_y^n, \mu_y)] \xrightarrow{n \rightarrow \infty} 0
\end{align}
\end{theorem}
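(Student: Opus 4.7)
I would set up a Picard iteration on the space $\mathcal{P}_2(\mathcal{C}([0,T],\mathcal{X}))\times\mathcal{P}_2(\mathcal{C}([0,T],\mathcal{Y}))$ equipped with the path-space 2-Wasserstein distance \eqref{eq:2_wass_C}. For any candidate pair of flows of marginals $(\nu_x,\nu_y)$, the McKean--Vlasov system \eqref{eq:mean_field_syst} decouples into two classical SDEs whose drifts are Lipschitz in the state variable (by the $L$-smoothness of $\ell$ in \autoref{ass:continuous_game}, the smoothness of the embedding, and compactness) and Lipschitz in the measure argument with respect to $\mathcal{W}_1$. Standard SDE theory on compact manifolds gives a unique strong solution, and the resulting Picard map is a contraction in $\mathcal{W}_2$ on path space for $T$ small; iteration on consecutive subintervals extends existence and uniqueness to any finite horizon.

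\textbf{Step 2: Synchronous coupling.} To prove the law-of-large-numbers statement, I would construct $n$ independent copies $(\tilde X^i,\tilde Y^i)_{i=1}^n$ of the mean-field SDE driven by the \emph{same} Brownian motions $W^i,\bar W^i$ and started from the \emph{same} initial samples $\xi^i,\bar\xi^i$ as the interacting system \eqref{eq:particle_system_LGV2}. Writing $\Delta^i_t=d_\mathcal{X}(X^i_t,\tilde X^i_t)^2+d_\mathcal{Y}(Y^i_t,\tilde Y^i_t)^2$ and applying Itô in the ambient Euclidean spaces while controlling the manifold projection/drift corrections using their global Lipschitz property on the compact embedded manifolds, I would obtain an estimate of the form
\[
\mathbb{E}\Big[\sup_{s\leq t}\Delta^i_s\Big]\leq C\int_0^t\frac{1}{n}\sum_{j=1}^n\mathbb{E}[\Delta^j_s]\,ds + C\int_0^t R^i_n(s)\,ds,
\]
where the fluctuation term
\[
R^i_n(s)=\mathbb{E}\Big\|\tfrac{1}{n}\sum_j\nabla_x\ell(\tilde X^i_s,\tilde Y^j_s)-\!\int\!\nabla_x\ell(\tilde X^i_s,y)\,d\mu_{y,s}(y)\Big\|^2 + (\text{analogue for }Y)
\]
is $O(1/n)$ by independence of $(\tilde Y^j)_{j\neq i}$ conditional on $\tilde X^i$ and uniform boundedness of $\nabla\ell$. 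Averaging over $i$ so that the right-hand side depends on $\bar\Delta_s=\tfrac{1}{n}\sum_j\mathbb{E}[\Delta^j_s]$ and applying Grönwall gives $\bar\Delta_T=O(1/n)$. Using the trivial coupling $(X^i,\tilde X^i)_i$ this yields $\mathbb{E}[\mathcal{W}_2^2(\mu_x^n,\tilde\mu_x^n)+\mathcal{W}_2^2(\mu_y^n,\tilde\mu_y^n)]\to 0$, with $\tilde\mu_x^n,\tilde\mu_y^n$ the empirical measures of the synchronous copies. Since those copies are i.i.d. samples from $\mu_x,\mu_y$ valued in the Polish spaces $\mathcal{C}([0,T],\mathcal{X}),\mathcal{C}([0,T],\mathcal{Y})$, Varadarajan's theorem (plus uniform integrability, immediate here by compactness) gives $\mathbb{E}[\mathcal{W}_2^2(\tilde\mu_x^n,\mu_x)+\mathcal{W}_2^2(\tilde\mu_y^n,\mu_y)]\to 0$, and the triangle inequality concludes.

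\textbf{Main obstacle.} The chief subtlety is the two-way cross-coupling between players: the drift of $X^i$ depends on all the $Y^j$'s and vice versa, so one cannot handle the two particle systems in isolation. The remedy is to run the Grönwall estimate on the combined quantity $\Delta^i$ simultaneously for both species, so that the Lipschitz constant from $\nabla_x\ell$ in $y$ and from $\nabla_y\ell$ in $x$ both feed into the same scalar inequality. A secondary, routine issue is the manifold structure: one must check that the tangent-space projection $\mathrm{Proj}_{T_\cdot\mathcal{X}}$ and the geometric drift corrections $\hat h_x,\hat h_y$ are globally Lipschitz in the ambient coordinates, which follows from smoothness of the embedding and compactness of $\mathcal{X},\mathcal{Y}$ without boundary.

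\textbf{Convergence of the NI error and uniformity in $t$.} Once $\mathbb{E}[\mathcal{W}_2^2(\mu_x^n,\mu_x)+\mathcal{W}_2^2(\mu_y^n,\mu_y)]\to 0$ is established on path space, the uniform-in-$t$ Wasserstein and NI statements of \autoref{thm:convergence_mean_field}(i) follow immediately: the path-space 2-Wasserstein dominates the time-$t$ marginal distance by the definition \eqref{eq:2_wass_C}, and convergence of the NI error follows from \autoref{lem:conv_ni} together with $\mathcal{W}_1\leq\mathcal{W}_2$.
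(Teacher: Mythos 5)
Your proposal is correct and follows essentially the same Sznitman-style propagation-of-chaos argument as the paper: Picard contraction for well-posedness of the McKean--Vlasov system, synchronous coupling of the interacting system with i.i.d.\ mean-field copies driven by the same Brownian motions and initial data, a Gr\"onwall estimate on the coupled player distances, and the path-space law of large numbers for i.i.d.\ empirical measures to close via the triangle inequality. The only cosmetic difference is in the intermediate Gr\"onwall step: you split the drift error into a Lipschitz term plus an explicit variance-type fluctuation $R_n = O(1/n)$, which gives the sharper coupling bound $\mathbb{E}[\mathcal{W}_2^2(\mu^n,\tilde\mu^n)] = O(1/n)$, whereas the paper bounds the drift difference directly by $\mathcal{W}_2(\mu_{y}^n,\mu_y)$ and applies the triangle inequality inside the Gr\"onwall bound; since both arguments ultimately bottleneck on the (dimension-dependent) rate of $\mathbb{E}[\mathcal{W}_2^2(\tilde\mu^n,\mu)]$, the final qualitative conclusion is identical.
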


Let us comment on why \autoref{thm:propagation_chaos_LGV} implies the first statement in \autoref{thm:convergence_mean_field}(i). We make use of the mapping $\mathcal{P}(\mathcal{C}([0,T],\mathcal{X})) \ni \mu \mapsto (\mu_t)_{t \in [0,T]} \in \mathcal{C}([0,T],\mathcal{P}(\mathcal{X}))$ into the time marginals. By the definition \eqref{eq:2_wass_C}, $\sup_{t \in [0,t]} \mathcal{W}_{2}^2(\mu_{x,t}^n, \mu_{x,t}) \leq \mathcal{W}_{2}^2(\mu_x^n, \mu_x)$ and the same holds for $\mu_y^n, \mu_y$. At this point, \autoref{lem:forward_kolmogorov_entropy} states that $(\mu_x)_{t \in [0,T]}, (\mu_y)_{t \in [0,T]}$ is a solution of the mean-field ERIWGF \eqref{eq:entropic_interacting_flow2} and concludes the argument.
The proof of \autoref{thm:propagation_chaos_LGV} uses a propagation of chaos argument, originally due to \citet{sznitmantopics1991} in the context of interacting particle systems. Our argument follows Theorem 3.3 of \citet{lackermean2018}.

\subsection{Existence and uniqueness}
We prove existence and uniqueness of the system given by
\begin{align}
\begin{split} \label{eq:mean_field_syst2}
    \tilde{X}_t &= \int_0^t \left(- \int_{\mathcal{Y}} \nabla_x \ell(\tilde{X}_s, \yy) \ d\mu_{y,s} \ ds + \mathbf{\hat{h}}_x(\tilde{X}_s)\right)\ ds + \sqrt{2 \beta^{-1}} \ \int_0^t \text{Proj}_{T_{\tilde{X}_s} \mathcal{X}} (dW_s), \\
    \tilde{Y}_t &= \int_0^t \left( \int_{\mathcal{X}} \nabla_y \ell(\xx, \tilde{Y}_s) \ d\mu_{x,s} + \mathbf{\hat{h}}_y(Y_s^{n,i})\right)\ ds + \sqrt{2 \beta^{-1}} \ \int_0^t \text{Proj}_{T_{\tilde{Y}_s} \mathcal{Y}}(d\bar{W}_s),\\
    \mu_{x,t} &= \text{Law}(\tilde{X}_t^{n}), \quad \mu_{y,t} = \text{Law}(\tilde{Y}_t^{n}), \quad \tilde{X}_0 = \xi \sim \mu_{x,0}, \quad \tilde{Y}_0 = \bar{\xi} \sim \mu_{y,0}.
\end{split}
\end{align}
Path-wise uniqueness means that given $W, \bar{W}, \xi, \bar{\xi}$, two solutions are equal almost surely. Uniqueness in law means that regardless of the Brownian motion and the initialization random variables chosen (as long as they are $\mu_{x,0}$ and $\mu_{y,0}$-distributed), the law of the solution is unique. We prove that both hold for \eqref{eq:mean_field_syst2}.

We have that for all $\xx, \xx' \in \mathcal{X}, \mu, \nu \in \mathcal{P}(\mathcal{Y})$, 
\begin{align} \label{eq:bound_lipschitz_vol}
    \bigg| \int \nabla_x \ell(\xx, \yy) \ d\mu - \int \nabla_x \ell(\xx', \yy) \ d\nu \bigg| \leq L(d(\xx,\xx') + \mathcal{W}_2(\mu, \nu))
\end{align}
This is obtained by adding and subtracting the term $\int \nabla_x \ell(\xx'\yy) \ d\mu$, by using the triangle inequality and the inequality $\mathcal{W}_1(\mu, \nu)) \leq \mathcal{W}_2(\mu, \nu))$ (which is proven using the Cauchy-Schwarz inequality). Hence,
\begin{align} \label{eq:lipschitz_transport}
    \bigg| \int \nabla_x \ell(\xx, \yy) \ d\mu - \int \nabla_x \ell(\xx', \yy) \ d\nu \bigg|^2 \leq 2 L^2 (d(\xx,\xx')^2 + \mathcal{W}^2_2(\mu, \nu))
\end{align}
On the other hand, using the regularity of the manifold, there exists $\mathcal{L}_{\mathcal{X}}$ such that
\begin{align}
\begin{split}
    |\mathbf{\hat{h}}_x(\xx) - \mathbf{\hat{h}}_x(\xx')| \leq L_{\mathcal{X}} d(\xx,\xx'), \\
    |\text{Proj}_{T_\xx \mathcal{X}} - \text{Proj}_{T_{\xx'} \mathcal{X}}| \leq L_{\mathcal{X}} d(\xx,\xx')
\end{split}
\end{align}
where $\text{Proj}_{T_\xx \mathcal{X}}$ denotes its matrix in the canonical basis and the norm in the second line is the Frobenius norm. Also, let $\|\xx-\xx'\|$ be the Euclidean norm of $\mathcal{X}$ in $\mathbb{R}^{D_x}$ (the Euclidean space where $\mathcal{X}$ is embedded) and let $K_{\mathcal{X}} > 1$ be such that $d(\xx, \xx') \leq K_{\mathcal{X}}\|\xx-\xx'\|$.

Let $\mu_y, \nu_y \in \mathcal{P}(\mathcal{C}([0,T],\mathcal{X}))$ and let $X^{\mu_y},X^{\nu_y}$ be the solutions of the first equation of \eqref{eq:mean_field_syst2} when we plug $\mu_y$ ($\nu_y$ resp.) as the measure for the other player. $X^{\mu_y}$ and $X^{\nu_y}$ exist and are unique by the classical theory of SDEs (see Chapter 18 of \citet{kallenberg2002foundations}). 
Following the procedure in Theorem 3.3 of \citet{lackermean2018}, we obtain
\begin{align}
    \begin{split} \label{eq:before_gronwall}
        \mathbb{E}[\|X^{\mu_y}-X^{\nu_y}\|^2_t] &\leq 3t \mathbb{E}\bigg[\int_0^t \bigg| \int \nabla_x \ell(X^{\mu_y}, \yy) \ d\mu_{y,r} - \int \nabla_x \ell(X^{\nu_y}, \yy) \ d\nu_{y,r} \bigg|^2 \ dr \bigg] \\ &+ 3t\mathbb{E}\bigg[\int_0^t |\mathbf{\hat{h}}_x(X^{\mu_y}) - \mathbf{\hat{h}}_x(X^{\nu_y})|^2 \ dr \bigg] \\&+ 12\mathbb{E}\bigg[\int_0^t |\text{Proj}_{T_\xx \mathcal{X}} - \text{Proj}_{T_{\xx'} \mathcal{X}}|^2 \ dr \bigg]\\ &\leq 3(3t + 4) \tilde{L}^2 \mathbb{E}\bigg[\int_0^t (\|X^{\mu_y}-X^{\nu_y}\|^2_r + \mathcal{W}_2^2(\mu_{y,r},\nu_{y,r})) \ dr\bigg],
    \end{split}
\end{align}
where $\tilde{L}^2 = (L^2 + L_{\mathcal{X}}^2) K_{\mathcal{X}}^2$.
Using Fubini's theorem and Gronwall's inequality, we obtain
\begin{align} \label{eq:gron}
    \mathbb{E}[\|X^{\mu_y}-X^{\nu_y}\|^2_t] \leq 3(3T + 4) \tilde{L}^2 \exp(3(3T + 4) \tilde{L}^2) \int_0^t \mathcal{W}_2^2(\mu_{y,r},\nu_{y,r})) \ dr
\end{align}
Let $C_{T} := 3(3T + 4) \tilde{L}^2 \exp(3(3T + 4) \tilde{L}^2)$. 
For $\mu, \nu \in \mathcal{P}(C([0,T],\mathcal{X}))$, define 
\begin{align}
    \mathcal{W}_{2, t}^2(\mu, \nu) := \inf_{\pi \in \Pi(\mu, \nu)} \int_{C([0,T],\mathcal{X})^2} \sup_{r \in [0,t]} d(x(r),y(r)) \ \pi(dx, dy)
\end{align}
Hence, \eqref{eq:gron} and the bound $\mathcal{W}_2^2(\mu_{y,r},\nu_{y,r}) \leq \mathcal{W}_{2, r}^2(\mu_y, \nu_y)$ yield
\begin{align}
    \mathbb{E}[\|X^{\mu_y}-X^{\nu_y}\|^2_t] \leq  C_{T} \int_0^t \mathcal{W}_{2, r}^2(\mu_y, \nu_y) \ dr
\end{align}
Reasoning analogously for the other player, we obtain
\begin{align}
    \mathbb{E}[\|X^{\mu_y}-X^{\nu_y}\|^2_t + \|Y^{\mu_x}-Y^{\nu_x}\|^2_t] \leq  C_{T} \int_0^t \mathcal{W}_{2, r}^2(\mu_y, \nu_y) \ dr + C_{T} \int_0^t \mathcal{W}_{2, r}^2(\mu_x, \nu_x) \ dr
\end{align}
Given $\mu_y \in \mathcal{P}(C([0,T],\mathcal{Y}))$, define $\Phi_x(\mu_y) = \text{Law}(X^{\mu_y}) \in \mathcal{P}(C([0,T],\mathcal{X}))$, and define $\Phi_y$ analogously. Notice that $\mathcal{W}_{2, t}^2(\Phi_x(\mu_y), \Phi_x(\nu_y)) \leq \mathbb{E}[\|X^{\mu_y}-X^{\nu_y}\|^2_t], \mathcal{W}_{2, t}^2(\Phi_y(\mu_x), \Phi_y(\nu_x)) \leq \mathbb{E}[\|X^{\mu_x}-X^{\nu_x}\|^2_t]$. Hence, 
we obtain 
\begin{align}
    \mathcal{W}_{2, t}^2(\Phi_x(\mu_y),\Phi_x(\nu_y)) + \mathcal{W}_{2, t}^2(\Phi_y(\mu_x), \Phi_y(\nu_x)) \leq C_{T} \int_0^t \mathcal{W}_{2, r}^2(\mu_y, \nu_y) + \mathcal{W}_{2, r}^2(\mu_x, \nu_x) \ dr
\end{align}
Observe that $\mathcal{W}_{2, t}^2(\mu_x,\nu_x) + \mathcal{W}_{2, t}^2(\mu_y,\nu_y)$ is the square of a distance between $(\mu_x,\mu_y)$ and $(\nu_x,\nu_y)$ on $\mathcal{P}(C([0,T],\mathcal{X})) \times \mathcal{P}(C([0,T],\mathcal{Y}))$. Hence, we can apply the Piccard iteration argument to obtain the existence result and another application of Gronwall's inequality yields pathwise uniqueness. 

Uniqueness in law (i.e., regardless of the specific Brownian motions and initialization random variables) follows from the typical uniqueness in law result for SDEs (see Chapter 18 of \citet{kallenberg2002foundations} for example). The idea is that when we solve the SDEs with $W', \bar{W}', \xi', \bar{\xi}'$ plugging in the drift the laws of a solution for $W, \bar{W}, \xi, \bar{\xi}$, the solution has the same laws by uniqueness in law of SDEs. Hence, that new solution solves the coupled McKean-Vlasov for $W', \bar{W}', \xi', \bar{\xi}'$. 

\subsection{Propagation of chaos}
Let $\mu_{x}^n = \frac{1}{n} \sum_{i = 1}^n \delta_{X^{i}}, \mu_{y}^n = \frac{1}{n} \sum_{i = 1}^n \delta_{Y^{i}}$. Using the argument from existence and uniqueness on the $i$-th components of $\mathbf{X}, \mathbf{\tilde{X}}$,
\begin{align}
    \begin{split}
        \mathbb{E}[\|X^{i}-\tilde{X}^{i}\|^2_t] \leq 3(3T + 4) \tilde{L}^2 \mathbb{E}\bigg[\int_0^t (\|X^{i}-\tilde{X}^{i}\|^2_r + \mathcal{W}_2^2(\mu_{y,r}^n,\mu_{y,r})) \ dr\bigg]
    \end{split}
\end{align}
Arguing as before, we obtain
\begin{align}
    \mathbb{E}[\|X^{i}-\tilde{X}^{i}\|^2_t] \leq  C_{T} \mathbb{E}\bigg[\int_0^t \mathcal{W}_{2, r}^2(\mu_y^n, \mu_y) \ dr \bigg]
\end{align}
Let $\nu_x^n = \frac{1}{n} \sum_{i=1}^n \delta_{\tilde{X}^i}$ be the empirical measure of the mean field processes in \eqref{eq:mean_field_syst}. Notice that $\frac{1}{n} \sum_{i=1}^n \delta_{(X^{i},\tilde{X}^i)}$ is a coupling between $\nu_x^n$ and $\mu_x^n$, and so
\begin{align}
    \mathcal{W}_{2, t}^2(\mu_x^n, \nu_x^n) \leq \frac{1}{n} \sum_{i=1}^n \|X^{i}-\tilde{X}^{i}\|^2_t
\end{align}
Thus, we obtain
\begin{align}
    \mathbb{E}[\mathcal{W}_{2, t}^2(\mu_x^n, \nu_x^n)] \leq  C_{T} \mathbb{E}\bigg[\int_0^t \mathcal{W}_{2, r}^2(\mu_y^n, \mu_y) \ dr \bigg]
\end{align}
We use the triangle inequality
\begin{align}
\begin{split}
    \mathbb{E}[\mathcal{W}_{2, t}^2(\mu_x^n, \mu_x)] &\leq 2\mathbb{E}[\mathcal{W}_{2, t}^2(\mu_x^n, \nu_x^n)] + 2\mathbb{E}[\mathcal{W}_{2, t}^2(\nu_x^n, \mu_x)] \\ &\leq
    2 C_{T} \mathbb{E}\bigg[\int_0^t \mathcal{W}_{2, r}^2(\mu_y^n, \mu_y) \ dr \bigg] + 2\mathbb{E}[\mathcal{W}_{2, t}^2(\nu_x^n, \mu_x)]
\end{split}
\end{align}
At this point we follow an analogous procedure for the other player and we end up with
\begin{align}
\begin{split}
    \mathbb{E}[\mathcal{W}_{2, t}^2(\mu_x^n, \mu_x) + \mathcal{W}_{2, t}^2(\mu_y^n, \mu_y)] &\leq
    2 C_{T} \mathbb{E}\bigg[\int_0^t \mathcal{W}_{2, r}^2(\mu_y^n, \mu_y) + \mathcal{W}_{2, r}^2(\mu_x^n, \mu_x) \ dr \bigg] \\ &+ 2\mathbb{E}[\mathcal{W}_{2, t}^2(\nu_x^n, \mu_x) + \mathcal{W}_{2, t}^2(\nu_y^n, \mu_y)]
\end{split}
\end{align}
We use Fubini's theorem and Gronwall's inequality again.
\begin{align}
    \mathbb{E}[\mathcal{W}_{2, t}^2(\mu_x^n, \mu_x) + \mathcal{W}_{2, t}^2(\mu_y^n, \mu_y)] \leq 2 \exp(2 C_{T} T) \mathbb{E}[\mathcal{W}_{2, t}^2(\nu_x^n, \mu_x) + \mathcal{W}_{2, t}^2(\nu_y^n, \mu_y)] 
\end{align}
If we set $t=T$ we get
\begin{align}
    \mathbb{E}[\mathcal{W}_{2}^2(\mu_x^n, \mu_x) + \mathcal{W}_{2}^2(\mu_y^n, \mu_y)] \leq 2 \exp(2 C_{T} T) \mathbb{E}[\mathcal{W}_{2}^2(\nu_x^n, \mu_x) + \mathcal{W}_{2}^2(\nu_y^n, \mu_y)] 
\end{align}
and the factor $\mathbb{E}[\mathcal{W}_{2}^2(\nu_x^n, \mu_x) + \mathcal{W}_{2}^2(\nu_y^n, \mu_y)]$ goes to 0 as $n \rightarrow \infty$ by the law of large numbers (see Corollary 2.14 of \citep{lackermean2018}).

\subsection{Convergence of the Nikaido-Isoda error}
\begin{corollary} \label{cor:conv_ni_LGV}
For $t \in [0,T]$, if $\mu_{x,t}^n, \mu_{x,t}, \mu_{y,t}^n, \mu_{y,t}$ are the marginals of $\mu_{x}^n, \mu_{x}, \mu_{y}^n, \mu_{y}$ at time $t$, we have 
\begin{equation}
    \mathbb{E}[|\text{NI}(\mu_{x,t}^n, \mu_{y,t}^n)-\text{NI}(\mu_{x,t}, \mu_{y,t})|] \xrightarrow{n \rightarrow \infty} 0
\end{equation}
\end{corollary}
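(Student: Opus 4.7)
The statement is a direct consequence of two ingredients already established in the paper: the path-space law of large numbers (Theorem \ref{thm:propagation_chaos_LGV}) and the continuity of the Nikaido-Isoda functional with respect to Wasserstein convergence (Lemmas \ref{lem:NI_continuity} and \ref{lem:conv_ni}). The plan is therefore to reduce the time-$t$ marginal statement to the path-space statement, and then quote the continuity lemma.

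First, I would recall that Theorem \ref{thm:propagation_chaos_LGV} gives
\begin{equation}
\mathbb{E}[\mathcal{W}_2^2(\mu_x^n,\mu_x) + \mathcal{W}_2^2(\mu_y^n,\mu_y)] \xrightarrow{n\to\infty} 0,
\end{equation}
where the Wasserstein distances are taken on the path spaces $\mathcal{P}(\mathcal{C}([0,T],\mathcal{X}))$ and $\mathcal{P}(\mathcal{C}([0,T],\mathcal{Y}))$ with cost $d(x,y)=\sup_{t\in[0,T]} d_{\mathcal{X}}(x(t),y(t))$.

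Next, I would observe that if $\pi$ is any coupling between the path measures $\mu_x^n$ and $\mu_x$, then its pushforward under the evaluation map $x\mapsto x(t)$ is a coupling between the time-$t$ marginals $\mu_{x,t}^n$ and $\mu_{x,t}$, with cost dominated by the supremum cost. Taking infima, this yields the pointwise-in-$t$ domination
\begin{equation}
\sup_{t\in[0,T]} \mathcal{W}_2^2(\mu_{x,t}^n,\mu_{x,t}) \leq \mathcal{W}_2^2(\mu_x^n,\mu_x),
\end{equation}
and analogously for $\mu_y$. Combining with the previous display gives convergence in expectation of $\mathcal{W}_2^2(\mu_{x,t}^n,\mu_{x,t}) + \mathcal{W}_2^2(\mu_{y,t}^n,\mu_{y,t})$ to zero, uniformly in $t\in[0,T]$.

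Finally, I would invoke Lemma \ref{lem:conv_ni}, which converts convergence of $\mathcal{W}_2^2$ in expectation into convergence in expectation of the Nikaido-Isoda error, via Lemma \ref{lem:NI_continuity}'s Lipschitz estimate for $\mathrm{NI}$ under $\mathcal{W}_1$ and the Cauchy-Schwarz bound $\mathcal{W}_1\leq \mathcal{W}_2$. There is no real obstacle here; this corollary is a packaging statement, and the only substantive work (propagation of chaos on path space and the continuity/Lipschitz properties of $\mathrm{NI}$) has already been done. I would emphasize that the argument in fact delivers uniform convergence in $t\in[0,T]$, since the domination by the path-space distance holds uniformly.
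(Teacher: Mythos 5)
Your proposal is correct and follows exactly the route the paper takes: the paper's stated proof is simply ``See Lemma~\ref{lem:conv_ni},'' with the reduction from path-space to time-marginal Wasserstein convergence (via the supremum cost and evaluation map) spelled out in the text preceding the corollary, just as you do. Your added remark on uniformity in $t$ is also consistent with the paper's observation $\sup_{t\in[0,T]}\mathcal{W}_2^2(\mu_{x,t}^n,\mu_{x,t})\leq\mathcal{W}_2^2(\mu_x^n,\mu_x)$.
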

\begin{proof}
See \autoref{lem:conv_ni}.
\end{proof}
\section{Proof of \autoref{thm:convergence_mean_field}(ii)} \label{sec:WFR_prop}
\subsection{Preliminaries}
Define the processes $\mathbf{X} = (X^{1}, \dots, X^{n}), \mathbf{w}_{x} = (w_{x}^{1}, \dots, w_{x}^{n})$ and $\mathbf{Y} = (Y^{1}, \dots, Y^{n}), \mathbf{w}_{y} = (w_{y}^{1}, \dots, w_{y}^{n})$ such that for all $i \in \{1, \dots, n\}$
\begin{align}
\begin{split} \label{eq:particle_system}
    \frac{dX_t^{i}}{dt} &= - \gamma \frac{1}{n}\sum_{j=1}^n w_{y,t}^{j} \nabla_x \ell(X_t^{i}, Y_t^{j}), \quad X_0^{i} = \xi^{i} \sim \mu_{x,0} \\
    \frac{dw_{x,t}^{i}}{dt} &= \alpha \left(- \frac{1}{n} \sum_{j=1}^n w_{y,t}^{j} \ell(X_t^{i}, Y_t^{j}) + \frac{1}{n^2} \sum_{k=1}^n \sum_{j=1}^n w_{y,t}^{j} w_{x,t}^{k} \ell(X_t^{i}, Y_t^{j})\right) w_{x,t}^{i}, \quad w_{x,0}^{i} = 1\\
    \frac{dY_t^{i}}{dt} &= \gamma \frac{1}{n} \sum_{j=1}^n w_{x,t}^{j} \nabla_y \ell(X_t^{j}, Y_t^{i}), \quad Y_0^{i} = \bar{\xi}^{i} \sim \mu_{y,0}\\
    \frac{dw_{y,t}^{i}}{dt} &= \alpha \left( \frac{1}{n} \sum_{j=1}^n w_{x,t}^{j} \ell(X_t^{i}, Y_t^{j}) - \frac{1}{n^2} \sum_{k=1}^n \sum_{j=1}^n w_{y,t}^{j} w_{x,t}^{k} \ell(X_t^{i}, Y_t^{j})\right) w_{x,t}^{i}, \quad w_{y,0}^{i} = 1\\
\end{split}
\end{align}
Let $\nu_{x,t}^n = \frac{1}{n}\sum_{i=1}^n \delta_{(X^{i}_t, w_{x,t}^{i})} \in \mathbb{P}(\mathcal{X} \times \mathbb{R}^+), \nu_{y,t}^n = \frac{1}{n}\sum_{i=1}^n \delta_{(Y^{i}_t, r^{n,i}_{y,t})} \in \mathbb{P}(\mathcal{Y} \times \mathbb{R}^+)$. 
Let $\mu_{x,t}^n = \frac{1}{n}\sum_{i=1}^n w_{x,t}^{i} \delta_{X^{i}_t} \in \mathbb{P}(\mathcal{X}), \mu_{y,t}^n = \frac{1}{n}\sum_{i=1}^n w_{y,t}^{i} \delta_{Y^{i}_t} \in \mathbb{P}(\mathcal{Y})$ be the projections of $\nu_{x,t}^n, \nu_{y,t}^n$. Notice that we have changed the notation with respect to the main text, multiplying $w^i_{x}$ by $n$: now $w^i_{x,0} =1$ and $\sum_i w^i_{x,t} =n, \forall t \geq 0$ instead of $w^i_{x,0} =1/n$ and $\sum_i w^i_{x,t} =1, \forall t \geq 0$. 

Let $h_x, h_y$ be the projection operators, i.e. $h_x \nu_x = \int_{\mathcal{R}^{+}} w_x \nu_x(\cdot, w_x)$. We also define the mean field processes $\mathbf{\tilde{X}}, \mathbf{\tilde{Y}}, \mathbf{\tilde{w}}_x, \mathbf{\tilde{w}}_y$ given component-wise by
\begin{align}
\begin{split} \label{eq:lifted_mean_system}
    \frac{d\tilde{X}_t^{i}}{dt} &= - \gamma \nabla_x \int \ell(\tilde{X}_t^{i},\yy) d\mu_{y,t}, \quad \tilde{X}_0^{i} = \xi^{i} \sim \mu_{x,0} \\
    \frac{d\tilde{w}_{x,t}^{i}}{dt} &= \alpha \left(- \int \ell(\tilde{X}_t^{i},\yy) d\mu_{y,t} + \mathcal{L}(\mu_{x,t},\mu_{y,t}) \right) \tilde{w}_{x,t}^{i}, \quad \tilde{w}_{x,0}^{i} = 1\\
    \frac{d\tilde{Y}_t^{i}}{dt} &= \gamma \nabla_y \int \ell(\xx,\tilde{Y}_t^{i}) d\mu_{x,t}, \quad \tilde{Y}_0^{i} = \bar{\xi}^{i} \sim \mu_{y,0}\\
    \frac{d\tilde{w}_{y,t}^{i}}{dt} &= \alpha \left(\int \ell(\xx,\tilde{Y}_t^{i}) d\mu_{x,t} - \mathcal{L}(\mu_{x,t},\mu_{y,t}) \right) \tilde{w}_{x,t}^{i}, \quad \tilde{w}_{y,0}^{i} = 1\\
    \mu_{x,t} &= h_x \text{Law}(\tilde{X}_t^{i},\tilde{w}_{x,t}^{i}), \quad \mu_{y,t} = h_y \text{Law}(\tilde{Y}_t^{i},\tilde{w}_{y,t}^{i})
\end{split}
\end{align}
for $i$ between 1 and $n$.

\begin{lemma}[Forward Kolmogorov equation] \label{lem:forward_kolmogorov_wfr}
If $\tilde{X},\tilde{w}_x,\tilde{Y},\tilde{w}_y$ is a solution of \eqref{eq:lifted_mean_system} with $n=1$, then its laws $\nu_x, \nu_y$ fulfill \eqref{eq:interacting_flow_wfr2}.
\end{lemma}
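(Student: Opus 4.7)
The plan is to observe that \eqref{eq:lifted_mean_system} is a system of \emph{deterministic} ODEs (with randomness entering only through the initial condition $\xi\sim\mu_{x,0}, \bar\xi\sim\mu_{y,0}$), so the forward Kolmogorov equation is just the Liouville continuity equation associated with the flow. Concretely, the law $\nu_{x,t}$ is the pushforward of $\mu_{x,0}\times\delta_{w_x=1}$ by the flow map $(\xi,1)\mapsto(\tilde X_t,\tilde w_{x,t})$, and similarly for $\nu_{y,t}$. I will verify \eqref{eq:interacting_flow_wfr2} in its weak form by differentiating expectations of test functions.

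Concretely, fix a test function $\phi\in C^1_c(\mathcal{X}\times\mathbb{R}^+)$ and compute
\begin{equation*}
\frac{d}{dt}\int\phi\,d\nu_{x,t}=\frac{d}{dt}\mathbb{E}\bigl[\phi(\tilde X_t,\tilde w_{x,t})\bigr]=\mathbb{E}\Bigl[\nabla_x\phi\cdot\tfrac{d\tilde X_t}{dt}+\partial_{w_x}\phi\cdot\tfrac{d\tilde w_{x,t}}{dt}\Bigr],
\end{equation*}
with the interchange of derivative and expectation justified by dominated convergence (using compactness of $\mathcal{X},\mathcal{Y}$ and the $C^1$-regularity and Lipschitzness of $\ell$ in \autoref{ass:continuous_game}, together with the explicit bound $\tilde w_{x,t}=\exp\bigl(\alpha\int_0^t(-V_x(\mu_{y,s},\tilde X_s)+\mathcal L(\mu_{x,s},\mu_{y,s}))\,ds\bigr)$, which stays bounded above and away from zero on $[0,T]$). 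Substituting the right-hand sides of \eqref{eq:lifted_mean_system} and using $\mu_{x,t}=h_x\nu_{x,t}$, $\mu_{y,t}=h_y\nu_{y,t}$ gives
\begin{equation*}
\frac{d}{dt}\int\phi\,d\nu_{x,t}=-\int\bigl(\gamma\nabla_x\phi\cdot\nabla_x V_x(\mu_{y,t},x)+\alpha w_x(V_x(\mu_{y,t},x)-\mathcal L(\mu_{x,t},\mu_{y,t}))\partial_{w_x}\phi\bigr)\,d\nu_{x,t}(x,w_x),
\end{equation*}
which is precisely the weak form of $\partial_t\nu_x=\nabla_{w_x,x}\cdot(\nu_x g_{\mu_y}(x,w_x))$. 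The initial condition $\nu_x(0)=\mu_{x,0}\times\delta_{w_x=1}$ is immediate from $\tilde X_0\sim\mu_{x,0}$, $\tilde w_{x,0}\equiv1$. The equation for $\nu_y$ is obtained by the same derivation with the signs on both the position drift and the weight drift flipped, yielding the minus sign in front of the divergence in the second line of \eqref{eq:interacting_flow_wfr2}.

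There is no real obstacle in this computation: it is the standard derivation of a Liouville equation, and no It\^o correction is needed since the dynamics contain no noise. The only point requiring mild care is the positivity of the $w$-coordinate and the fact that test functions may be taken compactly supported in $\mathcal{X}\times\mathbb{R}^+$ (avoiding the boundary $w=0$), so that no boundary terms arise from the integration by parts implicit in reading off the PDE from its weak form. Well-posedness of \eqref{eq:lifted_mean_system} itself (and the consequent measurability in the initial condition that legitimises taking expectations) is handled by the same McKean--Vlasov fixed-point argument used in \autoref{sec:LGV_prop}, adapted to the lifted state space $\mathcal{X}\times\mathbb{R}^+$; this is also what is required for \autoref{thm:convergence_mean_field}(ii) and is treated separately.
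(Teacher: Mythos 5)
Your proof is correct and takes essentially the same route as the paper: both differentiate the expectation (equivalently, the pushforward by the flow map) of a test function against the ODE dynamics, apply the chain rule, substitute the right-hand sides of \eqref{eq:lifted_mean_system}, and read off the weak form of \eqref{eq:interacting_flow_wfr2}. The paper phrases the computation via $\nu_{x,t}=(\Phi_{x,t})_*\nu_{x,0}$ rather than via $\mathbb{E}\bigl[\phi(\tilde X_t,\tilde w_{x,t})\bigr]$, but the two are identical; your additional remarks on the explicit exponential formula for $\tilde w_{x,t}$ and on the absence of an It\^{o} correction are sound supporting detail.
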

\begin{proof}
Let $\psi_x : \mathcal{X} \times \mathbb{R}^{+} \rightarrow \mathbb{R}$. Plug the laws $\nu_x, \nu_y$ of the solution $(\tilde{X},\tilde{w}_x),(\tilde{Y},\tilde{w}_y)$ into the ODE \eqref{eq:lifted_mean_system}. Let $\Phi_{x,t} = (X^{\Phi}_{x,t}, w^{\Phi}_{x,t}) : (\mathcal{X} \times \mathbb{R}^{+}) \rightarrow (\mathcal{X} \times \mathbb{R}^{+})$ denote the flow that maps an initial condition of the ODE \eqref{eq:lifted_mean_system} to the corresponding solution at time $t$. Then, we can write $\nu_{x,t} =  (\Phi_{x,t})_* \nu_{x,0}$, where $(\Phi_{x,t})_*$ is the pushforward. Hence, 
\begin{align}
\begin{split}
    &\frac{d}{dt} \int_{\mathcal{X}\times\mathbb{R}^{+}} \psi_x(\xx,w_x) \ d\nu_{x,t}(\xx,w_x) \\ &= \frac{d}{dt} \int_{\mathcal{X}\times\mathbb{R}^{+}} \psi_x(\Phi_{x,t}(\xx,w_x)) \ d\nu_{x,0}(\xx,w_x) \\ 
    &= \int_{\mathcal{X}\times\mathbb{R}^{+}} \left(\nabla_x \psi_x(\Phi_{x,t}(\xx,w_x)),
    \frac{d\psi_x}{dw_x} (\Phi_{x,t}(\xx,w_x)) \right) \cdot
    \frac{d}{dt} \Phi_{x,t}(\xx,w_x) \ d\nu_{x,0}(\xx,w_x) \\ &= \int_{\mathcal{X}\times\mathbb{R}^{+}} \nabla_x \psi_x(\Phi_{x,t}(\xx,w_x)) \cdot (-\gamma \nabla_x V_x(h_y \nu_{y,t},X^{\Phi}_{x,t})) \\ &+ \frac{d\psi_x}{dw_x} (\Phi_{x,t}(\xx,w_x)) \alpha (-V_x(h_y \nu_{y,t},X^{\Phi}_{x,t}) + \mathcal{L}(h_x \nu_{x,t},h_y \mu_{y,t})) \ d\nu_{x,0}(\xx,w_x) 
\end{split}
\end{align}
And we can identify the right hand side as the weak form of \eqref{eq:interacting_flow_wfr2}, shown in \eqref{eq:weak_IWFR}. The argument for $\nu_y$ is analogous.
\end{proof}

We state a stronger version of the law of large numbers in the first statement of
\autoref{thm:convergence_mean_field}(ii).
\begin{theorem} \label{thm:propagation_chaos_WFR}
There exists a solution of the coupled SDEs \eqref{eq:lifted_mean_system}. Pathwise uniqueness and uniqueness in law hold. Let $\nu_x \in \mathcal{P}(\mathcal{C}([0,T],\mathcal{X}\times \mathbb{R}^{+})), \nu_y \in \mathcal{P}(\mathcal{C}([0,T],\mathcal{Y} \times \mathbb{R}^{+}))$ be the unique laws of the solutions for $n=1$ (all pairs have the same solutions). Then,
\begin{align}
    \mathbb{E}[\mathcal{W}_{2}^2(\nu_x^n, \nu_x) + \mathcal{W}_{2}^2(\nu_y^n, \nu_y)] \xrightarrow{n \rightarrow \infty} 0
\end{align}
\end{theorem}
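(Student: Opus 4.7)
The plan is to follow the propagation-of-chaos template used for \autoref{thm:propagation_chaos_LGV}, with two substantive modifications dictated by the WFR setting: the state spaces are now the lifted manifolds $\mathcal{X}\times\mathbb{R}^{+}$ and $\mathcal{Y}\times\mathbb{R}^{+}$, and the weight coordinates are driven by a \emph{multiplicative} ODE rather than an additive SDE. Accordingly the argument splits into three stages: (a) an a priori bound on the weights that upgrades the unbounded lifted spaces to an effectively compact setting on any finite horizon $[0,T]$; (b) a Picard/contraction argument in the space $\mathcal{P}(C([0,T],\mathcal{X}\times\mathbb{R}^{+}))\times \mathcal{P}(C([0,T],\mathcal{Y}\times\mathbb{R}^{+}))$ that yields existence and pathwise/law uniqueness of the mean-field ODEs \eqref{eq:lifted_mean_system}; and (c) a synchronous coupling of the particle system \eqref{eq:particle_system} with IID copies of the mean-field solution, combined with a classical LLN for empirical measures, to conclude the $\mathcal{W}_2^{2}$ convergence.

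\textbf{A priori bounds.} Since $\ell$ is continuous on a compact product of manifolds, $\|\ell\|_\infty\le M$ for some finite $M$, so $|V_x(\mu_y,x)|\le M$ and $|\mathcal{L}(\mu_x,\mu_y)|\le M$ uniformly. The weight ODE $\dot{\tilde w}_{x,t} = \alpha(-V_x+\mathcal{L})\tilde w_{x,t}$ with $\tilde w_{x,0}=1$ therefore gives $e^{-2\alpha M t}\le \tilde w_{x,t}\le e^{2\alpha M t}$, and the same bounds hold for the particle weights $w^{i}_{x,t}, w^{i}_{y,t}$ on $[0,T]$. In particular the weights live in a compact interval $I_T\subset\mathbb{R}^+$, and the projection $h_x\nu\mapsto \int w\,d\nu(\cdot,w)$ is $|I_T|$-Lipschitz from $\mathcal{W}_1$ on $\mathcal{P}(\mathcal{X}\times I_T)$ to $\mathcal{W}_1$ on $\mathcal{P}(\mathcal{X})$. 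Combining this with the $L$-smoothness of $\ell$ from \autoref{ass:continuous_game} and the uniform bound $|\ell|\le M$, the drift fields $(x,w)\mapsto (-\gamma w\nabla_x V_x(h_y\nu_y,x),\,\alpha w(-V_x(h_y\nu_y,x)+\mathcal{L}(h_x\nu_x,h_y\nu_y)))$ become globally Lipschitz in $(x,w)$ with a constant $C_T$, and Lipschitz in $(\nu_x,\nu_y)$ in the $\mathcal{W}_1$ sense with a constant $C'_T$ depending only on $T,\alpha,\gamma,M,L,|I_T|$ and the manifold constants.

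\textbf{Existence and uniqueness of the mean-field flow.} Given any $(\mu_x,\mu_y)\in\mathcal{P}(C([0,T],\mathcal{X}\times I_T))\times\mathcal{P}(C([0,T],\mathcal{Y}\times I_T))$, the ODE system \eqref{eq:lifted_mean_system} with these laws plugged into the coefficients has, by Picard--Lindelöf on a compact manifold, a unique solution for each realization of $(\xi,\bar\xi)\sim \mu_{x,0}\times\mu_{y,0}$. Let $\Phi(\mu_x,\mu_y)$ denote the resulting pair of laws. Mimicking \eqref{eq:before_gronwall}--\eqref{eq:gron} but with the Lipschitz constants above and using the metric
\begin{equation}
    D_t^2(\mu,\nu)\;:=\;\mathcal{W}_{2,t}^{2}(\mu_x,\nu_x)+\mathcal{W}_{2,t}^{2}(\mu_y,\nu_y)
\end{equation}
on the product space, one shows $D_t^2(\Phi(\mu),\Phi(\nu))\le C''_T\int_0^t D_r^2(\mu,\nu)\,dr$. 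Grönwall and Picard iteration then yield existence and pathwise uniqueness of a solution to \eqref{eq:lifted_mean_system}; uniqueness in law follows as in the Langevin case from the fact that ODE solutions depend only on the initial datum through its law once the drift laws are fixed. \autoref{lem:forward_kolmogorov_wfr} identifies the resulting marginals with solutions of the lifted PDE \eqref{eq:interacting_flow_wfr2}.

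\textbf{Propagation of chaos and the main convergence.} Let $(\tilde X^{i},\tilde w_x^{i},\tilde Y^{i},\tilde w_y^{i})_{i=1}^n$ be $n$ IID copies of the mean-field solution driven by the \emph{same} initial samples $(\xi^i,\bar\xi^i)$ as the particle system, and let $\tilde\nu_{x,t}^n,\tilde\nu_{y,t}^n$ be their empirical measures. Comparing \eqref{eq:particle_system} with \eqref{eq:lifted_mean_system} coordinate by coordinate and using the global Lipschitz bounds from the first step,
\begin{equation}
    \mathbb{E}\bigl[|X_t^{i}-\tilde X_t^{i}|^2+|w_{x,t}^{i}-\tilde w_{x,t}^{i}|^2\bigr] \;\le\; C_T\int_0^t \mathbb{E}\bigl[\mathcal{W}_{2,r}^{2}(\nu_x^n,\nu_x)+\mathcal{W}_{2,r}^{2}(\nu_y^n,\nu_y)\bigr]\,dr,
\end{equation}
and the analogue for the $y$-variables. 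Because $\tfrac{1}{n}\sum_i\delta_{((X^i,w^i_x),(\tilde X^i,\tilde w_x^i))}$ is an admissible coupling between $\nu_x^n$ and $\tilde\nu_x^n$, the left-hand side dominates $\mathbb{E}[\mathcal{W}_{2,t}^{2}(\nu_x^n,\tilde\nu_x^n)]$. Inserting the triangle inequality $\mathcal{W}_{2,t}^{2}(\nu_x^n,\nu_x)\le 2\mathcal{W}_{2,t}^{2}(\nu_x^n,\tilde\nu_x^n)+2\mathcal{W}_{2,t}^{2}(\tilde\nu_x^n,\nu_x)$ and Grönwall give
\begin{equation}
    \mathbb{E}\bigl[\mathcal{W}_{2}^{2}(\nu_x^n,\nu_x)+\mathcal{W}_{2}^{2}(\nu_y^n,\nu_y)\bigr]\;\le\;C'''_T\,\mathbb{E}\bigl[\mathcal{W}_{2}^{2}(\tilde\nu_x^n,\nu_x)+\mathcal{W}_{2}^{2}(\tilde\nu_y^n,\nu_y)\bigr].
\end{equation}
The right-hand side tends to $0$ as $n\to\infty$ by the classical LLN for empirical measures of IID samples on a compact metric space (e.g.\ Corollary 2.14 of \citet{lackermean2018}), giving the first claimed convergence. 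Pushing forward under $h_x,h_y$, which are $\mathcal{W}_2$-continuous by the weight bound, yields the statement for $\mu^n_{x,t}$ and $\mu^n_{y,t}$, and a final appeal to \autoref{lem:conv_ni} transfers the convergence to the time-averaged NI error, exactly as in \autoref{cor:conv_ni_LGV}.

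\textbf{Main obstacle.} The delicate point is controlling the multiplicative weight dynamics: a priori, perturbations in $w$ can grow exponentially and the lifted state space is non-compact. The argument above resolves this by quarantining the weights in the deterministic interval $I_T$, which is only valid on finite horizons and with constants that deteriorate exponentially in $\alpha T$. Any attempt to take $T\to\infty$ (e.g.\ to combine this LLN directly with \autoref{thm:wfr_main_thm} at $t_0=(\alpha\gamma)^{-1/2}$) will require more care, since $C'''_T$ and $|I_T|$ blow up; this is why the statement of \autoref{thm:convergence_mean_field} is restricted to an arbitrary but fixed horizon.
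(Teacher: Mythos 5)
Your proposal follows essentially the same propagation-of-chaos route as the paper: a deterministic Gronwall bound on the weights $w_t \le e^{2MT}$, which makes the lifted drift globally Lipschitz, then Picard iteration for existence/uniqueness of the mean-field ODE, then a synchronous coupling of the particle and mean-field systems plus Gronwall and the LLN for empirical measures of IID trajectories. The one detail worth tightening is your claim that the projection $h_x$ is $|I_T|$-Lipschitz from $\mathcal{W}_1$ to $\mathcal{W}_1$: by Kantorovich duality the relevant test function is $(x,w)\mapsto w f(x)$ whose Lipschitz constant is controlled by $\max(\sup I_T,\|f\|_\infty)\le\max(\sup I_T,\mathrm{diam}(\mathcal{X}))$, not the interval length; the paper plays the analogous card in the $\mathcal{W}_2^2$ form (its inequality \eqref{eq:wasserstein_proj} with factor $e^{4MT}$), but the role in the argument is identical.
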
 

\autoref{thm:propagation_chaos_WFR} is the law of large numbers for the WFR dynamics, and its proof follows the same argument of \autoref{thm:propagation_chaos_LGV}. The reason \autoref{thm:propagation_chaos_WFR} implies \autoref{thm:convergence_mean_field}(ii) is analogous to the reason for which \autoref{thm:propagation_chaos_LGV} implies \autoref{thm:convergence_mean_field}(i), with the additional step that $\mathcal{W}_{2}^2(\mu_{x,t}^n, \mu_{x,t}) = \mathcal{W}_{2}^2(h_x \nu_{x,t}^n, h_x \nu_{x,t}) \leq e^{4MT} \mathcal{W}_{2}^2(\nu_{x,t}^n, \nu_{x,t})$, and this inequality is shown in \eqref{eq:wasserstein_proj}. 

\subsection{Existence and uniqueness}
We choose to do an argument close to \citet{sznitmantopics1991} (see \citet{lackermean2018}), which yields convergence of the expectation of the square of the 2-Wasserstein distances between the empirical and the mean field measures.

First, to prove existence and uniqueness of the solution $(\mu_{x,t},\mu_{y,t})$ in the time interval $[0,T]$ for arbitrary $T$, we can use the same argument as in the \autoref{sec:LGV_prop}. Now, instead of \eqref{eq:mean_field_syst2} we have
\begin{align}
    \tilde{X}_t &= \xi - \gamma \int_{0}^t \int_{\mathcal{Y}} \nabla_x \ell(\tilde{X}_s, \yy) \ d\mu_{y,s} \ ds, \\
    \tilde{w}_{x,t} &= 1 + \alpha \int_0^t \left(- \int \ell(\tilde{X}_t,\yy) d\mu_{y,t} + \mathcal{L}(\mu_{x,t},\mu_{y,t}) \right)\tilde{w}_{x,s} \ ds, \\
    \tilde{Y}_t &= \bar{\xi} + \gamma \int_{0}^t \int_{\mathcal{X}} \nabla_y \ell(\xx, \tilde{Y}_s) \ d\mu_{x,s} \ ds, \\
    \tilde{w}_{y,t} &= 1 + \alpha \int_0^t \left(\int \ell(\xx,\tilde{Y}_t) d\mu_{x,t} - \mathcal{L}(\mu_{x,t},\mu_{y,t}) \right) \tilde{w}_{y,s} \ ds, \\
    \mu_{x,t} &= h_x \text{Law}(\tilde{X}_t,\tilde{w}_{x,t}), \quad \mu_{y,t} = h_y \text{Law}(\tilde{Y}_t,\tilde{w}_{y,t}),
\end{align}
where $\xi$ and $\bar{\xi}$ are arbitrary random variables with laws $\mu_{x,0}, \mu_{y,0}$ respectively. For $\xx, \xx' \in \mathcal{X}$, $r, r' \in \mathbb{R}^{+}$, $\mu_x, \mu'_x \in \mathcal{P}(\mathcal{X})$, $\mu_y, \mu'_y \in \mathcal{P}(\mathcal{Y})$, notice that using an argument similar to \eqref{eq:bound_lipschitz_vol} the following bound holds
\begin{align}
\begin{split}
    &\bigg|\left(- \int \ell(\xx,\yy) d\mu_{y} + \mathcal{L}(\mu_{x},\mu_{y}) \right) w - \left(- \int \ell(\xx',\yy) d\mu'_{y} + \mathcal{L}(\mu'_{x},\mu'_{y}) \right) w' \bigg| \\ &\leq 2M |w-w'| + |w'| \tilde{L}(|\xx-\xx'| + 3 \mathcal{W}_1(\nu, \mu)) \leq 2M |w-w'| + |w'| \tilde{L}(|\xx-\xx'| + 3 \mathcal{W}_2(\mu_y, \mu'_y))
\end{split}
\end{align}
\begin{align}
\begin{split}
    \implies &\bigg|\left(- \int \ell(\xx,\yy) d\mu_{y} + \mathcal{L}(\mu_{x},\mu_{y}) \right) r - \left(- \int \ell(\xx',\yy) d\mu'_{y} + \mathcal{L}(\mu'_{x},\mu'_{y}) \right) r' \bigg|^2 \\ &\leq 12M^2 |w-w'|^2 + 3|w'|^2 \tilde{L}^2(|\xx-\xx'|^2 + 9 \mathcal{W}_2^2(\mu_y, \mu'_y))
\end{split}
\end{align}
Recall that $M$ is a bound on the absolute value of $\ell$ and $\tilde{L}$ is the Lipschitz constant of the loss $\ell$. A simple application of Gronwall's inequality shows $|\tilde{w}_{x,t}|$ is bounded by $e^{2MT}$ for all $t \in [0,T]$. Hence, we can write
\begin{align}
    \begin{split}
        \mathbb{E}[\|X^{\mu_y} - X^{\mu'_y}\|^2_t + \|w_x^{\mu_y} - w_x^{\mu'_y}\|^2_t] \leq \gamma^2 t \mathbb{E}\bigg[\int_0^t \bigg|\nabla_x \int \ell(X^{\mu_y}_s,\yy) d\mu_{y,s} - \nabla_x \int \ell(X^{\mu'_y}_s,\yy) d\mu'_{y,s}\bigg|^2 \ ds \bigg] \\ + \alpha^2 t \mathbb{E}\bigg[\int_0^t \bigg|\left(- \int \ell(X^{\mu_y}_s,\yy) d\mu_{y} + \mathcal{L}(\mu_{x},\mu_{y}) \right) w_x^{\mu_y} - \left(- \int \ell(X^{\mu'_y}_s,\yy) d\mu'_{y} + \mathcal{L}(\mu'_{x},\mu'_{y}) \right) w_x^{\mu'_y} \bigg|^2 \ ds \bigg] \\ \leq  K t \mathbb{E}\bigg[ \int_0^t \|X^{\mu_y} - X^{\mu'_y}\|^2_s + \|w^{\mu_y} - w^{\mu'_y}\|^2_s \ ds \bigg] + K' t \mathbb{E}\bigg[ \int_0^t \mathcal{W}_{2}^2(\mu_{y,s}, \mu'_{y,s}) \ ds \bigg],
    \end{split}
\end{align}
where $K = \max \{12 \alpha^2 M^2, 2 L^2 \gamma^2 + 3 \tilde{L}^2 e^{4MT} \alpha^2\}, K' = 2 L^2 \gamma^2 + 27 \tilde{L}^2 e^{4MT} \alpha^2$. Notice that we have used \eqref{eq:lipschitz_transport} as well. This equation is analogous to equation \eqref{eq:before_gronwall}, and upon application of Fubini's theorem and Gronwall's inequality it yields
\begin{align} \label{eq:lifted_gronwall}
    \mathbb{E}[\|X^{\mu_y} - X^{\mu'_y}\|^2_t + \|w_x^{\mu_y} - w_x^{\mu'_y}\|^2_t] \leq T K' \exp(T K) \mathbb{E}\bigg[ \int_0^t \mathcal{W}_{2}^2(\mu_{y,s}, \mu'_{y,s}) \ ds \bigg]
\end{align}

Now we will prove that 
\begin{align} \label{eq:wasserstein_proj}
\mathcal{W}_{2}^2(h_x \nu_{x}, h_x \nu'_{x}) \leq e^{4MT} \mathcal{W}_{2}^2(\nu_{x}, \nu'_{x}),
\end{align}
where $\nu_x, \nu'_x \in \mathcal{P}(\mathcal{X}\times [0,e^{2MT}])$. 
Define the homogeneous projection operator $\tilde{h} : \mathcal{P}((\mathcal{X}\times \mathbb{R}^{+})^2) \rightarrow \mathcal{P}(\mathcal{X}^2)$ as $\forall f \in C(\mathcal{X}^2)$,
\begin{align}
\begin{split}
    \int_{\mathcal{X}^2} f(x,y) \ d(\tilde{h}\pi)(x,y) = \int_{(\mathcal{X}\times [0,e^{2MT}])^2} w_x w_y f(x,y) \ d\pi(x, w_x, y, w_y), \ \forall \pi \in \mathcal{P}((\mathcal{X}\times \mathbb{R}^{+})^2).
\end{split}
\end{align}
Let $\pi$ be a coupling between $h_x \nu_x, h_x \nu'_x$. Then $\tilde{h}\pi$ is a coupling between $h_x \nu_x, h_x \nu'_x$ and
\begin{align}
\begin{split}
    \int_{\mathcal{X}^2} \|x - y\|^2 \ d(\tilde{h}\pi)(x, y) 
    &= \int_{(\mathcal{X}\times [0,e^{2MT}])^2} w_x w_y \|x - y\|^2 \ d\pi(x, w_x, y, w_y) \\ &\leq e^{4MT} \int_{(\mathcal{X}\times [0,e^{2MT}])^2} \|x - y\|^2 \ d\pi(x, w_x, y, w_y) \\&\leq e^{4MT} \int_{(\mathcal{X}\times [0,e^{2MT}])^2} \|x - y\|^2 + |w_x-w_y|^2 \ d\pi'(x, w_x, y, w_y)
\end{split}
\end{align}
Taking the infimum with respect to $\pi$ on both sides we obtain the desired inequality. 

Let $\nu_{x,t} = \text{Law}(X^{\mu_y}_t,w_{x,t}^{\mu_y}), \nu'_{x,t} = \text{Law}(X^{\mu'_y}_t,w_{x,t}^{\mu'_y})$ and recall that $\mu_{x,t} = h_x \nu_{x,t}, \mu'_{x,t} = h_x \nu'_{x,t}$. Given $\nu_y \in \mathcal{P}(C([0,T],\mathcal{Y}\times\mathbb{R}^{+}))$, define $\Phi_x(\nu_y) = \text{Law}(X^{\nu_y},w_x^{\nu_y}) \in \mathcal{P}(C([0,T],\mathcal{X}))$ where we abuse the notation and use $(X^{\nu_y},w_x^{\nu_y})$ to refer to $(X^{\mu_y},w_x^{\mu_y})$. Notice also that
\begin{align} 
\begin{split} \label{eq:wass_mapping}
    \mathcal{W}_{2, t}^2(\Phi_x(\nu_y),\Phi_x(\nu'_y)) &\leq \mathbb{E}\bigg[\sup_{s \in [0,t]} \|X^{\mu_y}_s - X^{\mu'_y}_s\|^2 + \|w^{\mu_y}_{x,s} - w^{\mu'_y}_{x,s}\|^2 \bigg] \\ &\leq \mathbb{E}[\|X^{\mu_y} - X^{\mu'_y}\|^2_t + \|w_x^{\mu_y} - w_x^{\mu'_y}\|^2_t]
\end{split}
\end{align}
We use \eqref{eq:wasserstein_proj} and \eqref{eq:wass_mapping} on \eqref{eq:lifted_gronwall} to conclude
\begin{align}
    \mathcal{W}_{2, t}^2(\Phi_x(\nu_y),\Phi_x(\nu'_y)) \leq T K' \exp(T K) \mathbb{E}\bigg[ \int_0^t \mathcal{W}_{2,s}^2(\nu_{y}, \nu'_{y}) \ ds \bigg]
\end{align}
The rest of the argument is sketched in \autoref{sec:LGV_prop}.

\subsection{Propagation of chaos}
Following the reasoning in the existence and uniqueness proof, we can write
\begin{align}
    \begin{split}
        &\mathbb{E}[\|X^{i} - \tilde{X}^{i}\|^2_t + \|w_{x}^{i} - \tilde{w}_{x}^{i}\|^2_t] \\ &\leq K t \mathbb{E}\bigg[ \int_0^t \|X^{i} - \tilde{X}^{i}\|^2_s + \|w_{x}^{i} - \tilde{w}_{x}^{i}\|^2_s \ ds \bigg] + K' t \mathbb{E}\bigg[ \int_0^t \mathcal{W}_{2}^2(\mu_{y,s}^n, \mu_{y,s}) \ ds \bigg],
    \end{split}
\end{align}
Hence, we obtain
\begin{align}
    \begin{split}
        \mathbb{E}[\|X^{i} - \tilde{X}^{i}\|^2_t + \|w_{x}^{i} - \tilde{w}_{x}^{i}\|^2_t] \leq T K' \exp(T K) \mathbb{E}\bigg[ \int_0^t \mathcal{W}_{2}^2(\mu_{y,s}^n, \mu_{y,s}) \ ds \bigg]
    \end{split}
\end{align}
Let $\tilde{\nu}_{x,t}^n = \frac{1}{n}\sum_{i=1}^n \delta_{(\tilde{X}_t^{i}, \tilde{w}_t^{i})} \in \mathbb{P}(\mathcal{X} \times \mathbb{R}^+)$ be the marginal at time $t$ of the empirical measure of \eqref{eq:particle_system}. As in \autoref{sec:LGV_prop},
\begin{align}
    \mathcal{W}_{2,t}^2(\nu_{x}^n, \tilde{\nu}_{x}^n) \leq \frac{1}{n} \sum_{i=1}^n \sup_{s \in [0,t]} \|X_s^{i} - \tilde{X}_s^{i}\|^2 + |w_{x,s}^{i} - \tilde{w}_{x,s}^{i}|^2 \leq \frac{1}{n} \sum_{i=1}^n \|X^{i} - \tilde{X}^{i}\|^2_t + \|w_{x}^{i} - \tilde{w}_{x}^{i}\|^2_t
\end{align}
which yields 
\begin{align}
\begin{split}
    \mathbb{E}[\mathcal{W}_{2,t}^2(\nu_{x}^n, \tilde{\nu}_{x}^n)] &\leq T K' \exp(T K) \mathbb{E}\bigg[ \int_0^t \mathcal{W}_{2}^2(\mu_{y,s}^n, \mu_{y,s}) \ ds \bigg] \\&\leq T K' \exp((K + 4M)T) \mathbb{E}\bigg[ \int_0^t \mathcal{W}_{2, s}^2(\nu_{y}^n, \nu_{y}) \ ds \bigg]
\end{split}
\end{align}
The second inequality above follows from inequality \eqref{eq:wasserstein_proj}  $\mathcal{W}_{2}^2(\nu_{y,s}^n, \nu_{y,s}) \leq \mathcal{W}_{2,s}^2(\nu_{y}^n, \nu_{y})$. Now we use the triangle inequality as in \autoref{sec:LGV_prop}:
\begin{align}
\begin{split}
    \mathbb{E}[\mathcal{W}_{2,t}^2(\nu_{x}^n, \nu_{x})] &\leq 2\mathbb{E}[\mathcal{W}_{2,t}^2(\nu_{x}^n, \tilde{\nu}_{x}^n)] + 2\mathbb{E}[\mathcal{W}_{2,t}^2(\tilde{\nu}_{x}^n, \nu_{x})] \\ &\leq 2T K' \exp((K + 4M)T) \mathbb{E}\bigg[ \int_0^t \mathcal{W}_{2, s}^2(\nu_{y}^n, \nu_{y}) \ ds \bigg] + 2\mathbb{E}[\mathcal{W}_{2,t}^2(\tilde{\nu}_{x}^n, \nu_{x})]
\end{split}
\end{align}
If we denote $C := 2T K' \exp((K + 4M)T)$ and we make the same developments for the other player, we obtain
\begin{align}
\begin{split}
    \mathbb{E}[\mathcal{W}_{2,t}^2(\nu_{x}^n, \nu_{x}) + \mathcal{W}_{2,t}^2(\nu_{y}^n, \nu_{y})] &\leq C \mathbb{E}\bigg[ \int_0^t \mathcal{W}_{2, s}^2(\nu_{y}^n, \nu_{y}) + \mathcal{W}_{2, s}^2(\nu_{x}^n, \nu_{x}) \ ds \bigg] \\ &+ 2\mathbb{E}[\mathcal{W}_{2,t}^2(\tilde{\nu}_{x}^n, \nu_{x}) + \mathcal{W}_{2,t}^2(\tilde{\nu}_{y}^n, \nu_{y})]
\end{split}
\end{align}
From this point on, the proof works as in \autoref{sec:LGV_prop}.

\subsection{Convergence of the Nikaido-Isoda error} \label{subsec:conv_ni_WFR}
\begin{corollary} \label{cor:conv_ni2}
    For $t \in [0,T]$, let $\bar{\mu}_{x,t}^n = \frac{1}{t} \int_0^t h_x \nu_{x,r}^n \ dr, \bar{\mu}_{x,t} = \frac{1}{t} \int_0^t h_x \nu_{x,r} \ dr$ and define $\bar{\mu}_{y,t}^n, \bar{\mu}_{y,t}$ analogously.
    Then,
    \begin{equation}
        \mathbb{E}[|\text{NI}(\bar{\mu}_{x,t}^n,\bar{\mu}_{y,t}^n)-\text{NI}(\bar{\mu}_{x,t},\bar{\mu}_{y,t})|] \xrightarrow{n \rightarrow \infty} 0
    \end{equation}
\end{corollary}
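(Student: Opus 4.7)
The plan is to reduce the corollary to Lemma \ref{lem:conv_ni} by establishing $\mathcal{W}_2^2$-convergence in expectation of the time-averaged projected measures, i.e.\ $\mathbb{E}[\mathcal{W}_2^2(\bar{\mu}_{x,t}^n, \bar{\mu}_{x,t})] \to 0$ and similarly for $y$. Once these two ingredients are in place, Lemma \ref{lem:conv_ni} is a black box that delivers the NI statement.

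The first step is to exploit the joint convexity of $\mathcal{W}_2^2$: for any probability measure $\sigma$ on $[0,t]$ and measurable families $\{\mu_r\}, \{\nu_r\}$ of probability measures, we have
\begin{equation*}
    \mathcal{W}_2^2\Big(\textstyle\int \mu_r\,d\sigma(r), \int \nu_r\,d\sigma(r)\Big) \leq \int \mathcal{W}_2^2(\mu_r,\nu_r)\,d\sigma(r),
\end{equation*}
which follows from integrating optimal couplings $\pi_r$ of $(\mu_r, \nu_r)$ against $\sigma$ and noting that the resulting measure is a valid coupling of the mixtures. Applying this with $\sigma$ uniform on $[0,t]$, $\mu_r = h_x \nu_{x,r}^n$, and $\nu_r = h_x \nu_{x,r}$ gives
\begin{equation*}
    \mathcal{W}_2^2(\bar{\mu}_{x,t}^n, \bar{\mu}_{x,t}) \leq \frac{1}{t} \int_0^t \mathcal{W}_2^2(h_x \nu_{x,r}^n, h_x \nu_{x,r})\,dr.
\end{equation*}
Next I chain in inequality \eqref{eq:wasserstein_proj}, $\mathcal{W}_2^2(h_x\nu, h_x\nu') \leq e^{4MT}\mathcal{W}_2^2(\nu,\nu')$ (recall that the weights of \eqref{eq:particle_system} and \eqref{eq:lifted_mean_system} remain in $[0,e^{2MT}]$ by Gr\"onwall), take expectations, and invoke Fubini:
\begin{equation*}
    \mathbb{E}[\mathcal{W}_2^2(\bar{\mu}_{x,t}^n, \bar{\mu}_{x,t})] \leq \frac{e^{4MT}}{t}\int_0^t \mathbb{E}[\mathcal{W}_2^2(\nu_{x,r}^n,\nu_{x,r})]\,dr.
\end{equation*}

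Theorem \ref{thm:convergence_mean_field}(ii) (proved in Section \ref{sec:WFR_prop}) provides exactly the uniform-in-$r$ convergence $\mathbb{E}[\mathcal{W}_2^2(\nu_{x,r}^n, \nu_{x,r})] \to 0$ on $[0,T]$, so the integrand converges to zero uniformly and the right-hand side above tends to zero. (Alternatively, one may invoke the dominated convergence theorem: the integrand is dominated by the path-space quantity $\mathbb{E}[\mathcal{W}_2^2(\nu_x^n,\nu_x)]$, which is finite and itself convergent to zero by \autoref{thm:propagation_chaos_WFR}.) The identical argument applied to the $y$-variable yields $\mathbb{E}[\mathcal{W}_2^2(\bar{\mu}_{y,t}^n, \bar{\mu}_{y,t})] \to 0$. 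A direct application of Lemma \ref{lem:conv_ni}, with $\mu_x^n \leftarrow \bar{\mu}_{x,t}^n$, $\mu_y^n \leftarrow \bar{\mu}_{y,t}^n$, concludes the proof.

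I do not anticipate any serious obstacle; the only mildly technical point is the measurable selection needed for joint convexity of $\mathcal{W}_2^2$ against a continuous family, but on our compact Polish spaces this is standard, and could anyway be bypassed by discretising the time integral and passing to the limit, since $r \mapsto \mathcal{W}_2(h_x\nu_{x,r}^n, h_x\nu_{x,r})$ is continuous by continuity of the flows defining \eqref{eq:particle_system} and \eqref{eq:lifted_mean_system}.
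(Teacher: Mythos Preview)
Your proposal is correct and follows essentially the same route as the paper: bound $\mathcal{W}_2^2$ of the time-averaged projected measures using joint convexity of $\mathcal{W}_2^2$ together with the projection estimate \eqref{eq:wasserstein_proj}, then invoke the propagation-of-chaos result and conclude via Lemma~\ref{lem:conv_ni}. The only cosmetic difference is the order of operations: the paper first observes that $h_x$ commutes with time-averaging, applies \eqref{eq:wasserstein_proj} once to the averaged lifted measures, and then uses convexity (via Kantorovich duality rather than by gluing optimal couplings), whereas you apply convexity first on the projected measures and then \eqref{eq:wasserstein_proj} pointwise in $r$; both yield the same integral bound.
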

\begin{proof}
Notice that since the integral over time and the homogeneous projection commute, we have $\bar{\mu}_{x,t}^n = h_x(\frac{1}{t}\int_0^t \nu_{x,r}^n \ dr), \bar{\mu}_{x,t} = h_x (\frac{1}{t}\int_0^t \nu_{x,r} \ dr)$. Since $\frac{1}{t}\int_0^t \nu_{x,r}^n \ dr$ and $\frac{1}{t}\int_0^t \nu_{x,r} \ dr$ belong to $\mathcal{P}(\mathcal{X} \times [0,e^{2MT}])$, \eqref{eq:wasserstein_proj} implies
\begin{align}
    \begin{split}
        \mathcal{W}_{2}^2\left(h_x \left(\frac{1}{t}\int_0^t \nu_{x,r}^n \ dr\right), h_x \left(\frac{1}{t}\int_0^t \nu_{x,r} \ dr\right)\right) \leq e^{4MT} \mathcal{W}_{2}^2\left(\frac{1}{t} \int_0^t \nu_{x,r}^n \ dr, \frac{1}{t} \int_0^t \nu_{x,r} \ dr\right)
    \end{split}
\end{align}
Notice that $\mathcal{W}_{2}^2(\frac{1}{t} \int_0^t \nu_{x,r}^n \ dr, \frac{1}{t} \int_0^t \nu_{x,r} \ dr) \leq \frac{1}{t} \int_0^t \mathcal{W}_{2}^2(\nu_{x,r}^n, \nu_{x,r}) \ dr$. Indeed, 
\begin{align}
\begin{split}
    \mathcal{W}_{2}^2\left(\frac{1}{t} \int_0^t \nu_{x,r}^n \ dr, \frac{1}{t} \int_0^t \nu_{x,r} \ dr\right) &= \max_{\phi \in \Psi_c(\mathcal{X})} \frac{1}{t} \int_0^t \int \phi \ d\nu_{x,r}^n \ dr + \frac{1}{t} \int_0^t \int \phi^c \ d\nu_{x,r}^n \ dr \\&\leq \frac{1}{t} \int_0^t \left(\max_{\phi \in \Psi_c(\mathcal{X})} \int \phi \ d\nu_{x,r}^n + \int \phi^c \ d\nu_{x,r}^n \right) \ dr \\ &= \frac{1}{t} \int_0^t \mathcal{W}_{2}^2(\nu_{x,r}^n, \nu_{x,r}) \ dr
\end{split}
\end{align}
Hence, using the inequality $\mathcal{W}_2^2(\nu_{x,r}^n,\nu_{x,r}) \leq \mathcal{W}_2^2(\nu_{x}^n,\nu_{x})$:
\begin{align}
\begin{split}
    \mathbb{E}\bigg[\mathcal{W}_{2}^2\left(h_x \left(\frac{1}{t} \int_0^t \nu_{x,r}^n \ dr \right), h_x \left(\frac{1}{t} \int_0^t \nu_{x,r} \ dr \right)\right)\bigg] &\leq e^{4MT} \mathbb{E} \bigg[\frac{1}{t} \int_0^t \mathcal{W}_{2}^2(\nu_{x,r}^n, \nu_{x,r}) \ dr \bigg] \\ &\leq e^{4MT} \mathbb{E}[\mathcal{W}_2^2(\nu_{x}^n,\nu_{x})]
\end{split}
\end{align}
Since the right hand side goes to zero as $n \rightarrow \infty$ by \autoref{thm:propagation_chaos_WFR}, we conclude by applying \autoref{lem:conv_ni}.
\end{proof}

\subsection{Hint of the infinitesimal generator approach}
Let $\phi_x : \mathcal{X} \rightarrow \mathbb{R}, \phi_y : \mathcal{Y} \rightarrow \mathbb{R}$ be arbitrary continuously differentiable functions, i.e. $\phi_x \in C^1(\mathcal{X}, \mathbb{R}), \phi_y \in C^1(\mathcal{Y}, \mathbb{R})$. Let us define the operators $\mathcal{L}^{(n)}_{x,t}:  C^1(\mathcal{X}, \mathbb{R}) \rightarrow C^0(\mathcal{X}, \mathbb{R}), \mathcal{L}^{(n)}_{y,t}:  C^1(\mathcal{Y}, \mathbb{R}) \rightarrow C^0(\mathcal{Y}, \mathbb{R})$ as
\begin{align}
\begin{split} \label{eq:inf_ops}
    \mathcal{L}^{(n)}_{x,t} \phi_x (\xx) &=
    - \gamma \nabla_x \int \ell(\xx,\yy) d\mu_{y,t}^n \cdot \nabla_x \phi_x(\xx) + \alpha \left(- \int \ell(\xx,\yy) d\mu_{y,t}^n + \mathcal{L}(\mu_{x,t}^n,\mu_{y,t}^n) \right)  \\
    \mathcal{L}^{(n)}_{y,t} \phi_y (\yy) &= \gamma \nabla_y \int \ell(\xx,\yy) d\mu_{x,t}^n \cdot \nabla_y \phi_y(\xx) + \alpha \left(\int \ell(\xx,\yy) d\mu_{x,t}^n - \mathcal{L}(\mu_{x,t}^n,\mu_{y,t}^n) \right) 
\end{split}
\end{align}

Notice that from \eqref{eq:particle_system} and \eqref{eq:inf_ops}, we have
\begin{align}
\begin{split} \label{eq:weak_particles_x}
    \frac{d}{dt} \int_{\mathcal{X}} \phi_x(\xx) \ d\mu^n_{x,t}(\xx) &= \frac{d}{dt} \int_{\mathcal{X}\times \mathbb{R}^{+}} w_x \phi_x(\xx) \ d\nu^n_{x,t}(\xx,w_x) = \frac{d}{dt} \sum_{i=1}^n w_{x,t}^{i} \phi_x(X_t^{i}) \\&= \sum_{i=1}^n \frac{dw_{x,t}^{i}}{dt} \phi_x(X_t^{i}) + \sum_{i=1}^n w_{x,t}^{i} \nabla_x \phi_x(X_t^{i}) \cdot \frac{dX_t^{i}}{dt} \\&= \int_{\mathcal{X}\times \mathbb{R}^{+}} w_x \mathcal{L}^{(n)}_{x,t} \phi_x (\xx) \ d\nu_{x,t}^n(\xx, w_x) = \int_{\mathcal{X}} \mathcal{L}^{(n)}_{x,t} \phi_x (\xx) \ d\mu_{x,t}^n(\xx)  
\end{split}
\end{align}
The analogous equation holds for $\mu_{x,t}^n$:
\begin{align} \label{eq:weak_particles_y}
    \frac{d}{dt} \int_{\mathcal{Y}} \phi_y(\yy) \ d\mu^n_{y,t}(\yy) = \int_{\mathcal{Y}} \mathcal{L}^{(n)}_{y,t} \phi_y (\yy) \ d\mu_{y,t}^n(\yy)  
\end{align}
Formally taking the limit $n \rightarrow \infty$ on \eqref{eq:weak_particles_x} and \eqref{eq:weak_particles_y} yields
\begin{align}
    \begin{split}
        \frac{d}{dt} \int_{\mathcal{X}} \phi_x(\xx) \ d\mu_{x,t}(\xx) &= \int_{\mathcal{X}} \mathcal{L}_{x,t} \phi_x (\xx) \ d\mu_{x,t}(\xx) \\
        \frac{d}{dt} \int_{\mathcal{Y}} \phi_y(\yy) \ d\mu_{y,t}(\yy) &= \int_{\mathcal{Y}} \mathcal{L}_{y,t} \phi_y (\yy) \ d\mu_{y,t}(\yy),
    \end{split}
\end{align}
where 
\begin{align}
\begin{split} \label{eq:inf_ops_lim}
    \mathcal{L}_{x,t} \phi_x (\xx) &=
    - \gamma \nabla_x \int \ell(\xx,\yy) d\mu_{y,t} \cdot \nabla_x \phi_x(\xx) + \alpha \left(- \int \ell(\xx,\yy) d\mu_{y,t} + \mathcal{L}(\mu_{x,t},\mu_{y,t}) \right)  \\
    \mathcal{L}_{y,t} \phi_y (\yy) &= \gamma \nabla_y \int \ell(\xx,\yy) d\mu_{x,t} \cdot \nabla_y \phi_y(\xx) + \alpha \left(\int \ell(\xx,\yy) d\mu_{x,t} - \mathcal{L}(\mu_{x,t},\mu_{y,t}) \right) 
\end{split}
\end{align}
and $\mu_{x,0}, \mu_{y,0}$ are set as in \eqref{eq:particle_system}.

To make the limit $n \rightarrow \infty$ rigorous, an argument analogous to Theorem 2.6 of \citet{chizat2018global} would result in almost sure convergence of the 2-Wasserstein distances between the empirical and the mean field measures. In our case almost sure convergence of the squared distance implies convergence of the expectation of the squared distance through dominated convergence, and hence the almost sure convergence result is stronger. Nonetheless, such an argument would require proving uniqueness of the mean field measure PDE through some notion of geodesic convexity, which is not clear in our case. 

\newpage
\section{Auxiliary material}
\subsection{$\epsilon$-Nash equilibria and the Nikaido-Isoda error}
\label{sec:epsilon_nash_NI}
Recall that an $\epsilon$-NE $(\mu_x, \mu_y)$ satisfies $\forall \mu_x^* \in \mathcal{P}(\mathcal{X}),\; \mathcal{L}(\mu_x,\mu_y)\leq \mathcal{L}(\mu_x^*,\mu_y) + \epsilon$ and $ \forall \mu_y^* \in \mathcal{P}(\mathcal{Y}),\; \mathcal{L}(\mu_x,\mu_y) \geq \mathcal{L}(\mu_x,\mu_y^*) - \epsilon$. That is, each player can improve its value by at most $\epsilon$ by deviating from the equilibrium strategy, supposing that the other player is kept fixed. 

Recall the Nikaido-Isoda error defined in \eqref{eq:nikaido_def}. This equation can be rewritten as:
\begin{align}
    \mathrm{NI}(\mu_x,\mu_y) = \sup_{\mu^{*}_y \in \mathcal{P}(\mathcal{Y})} \mathcal{L}(\mu_x,\mu^{*}_y) - \mathcal{L}(\mu_x,\mu_y) +\mathcal{L}(\mu_x,\mu_y) -\inf_{\mu^{*}_x \in \mathcal{P}(\mathcal{X})} \mathcal{L}(\mu^{*}_x,\mu_y)~.
\end{align}
The terms $\sup_{\mu^{*}_y \in \mathcal{P}(\mathcal{Y})} \mathcal{L}(\mu_x,\mu^{*}_y) - \mathcal{L}(\mu_x,\mu_y) > 0$ measure how much player $y$ can improve its value by deviating from $\mu_y$ while $\mu_x$ stays fixed. Analogously, the terms $\mathcal{L}(\mu_x,\mu_y) -\inf_{\mu^{*}_x \in \mathcal{P}(\mathcal{X})} \mathcal{L}(\mu^{*}_x,\mu_y) > 0$ measure how much player $x$ can improve its value by deviating from $\mu_x$ while $\mu_y$ stays fixed.

Notice that
\begin{align}
\begin{split}
    \forall \mu_x^* \in \mathcal{P}(\mathcal{X}),\; \mathcal{L}(\mu_x,\mu_y)\leq \mathcal{L}(\mu_x^*,\mu_y) + \epsilon &\iff \mathcal{L}(\mu_x,\mu_y) -\inf_{\mu^{*}_x \in \mathcal{P}(\mathcal{X})} \mathcal{L}(\mu^{*}_x,\mu_y) \leq \epsilon \\
    \forall \mu_y^* \in \mathcal{P}(\mathcal{Y}),\; \mathcal{L}(\mu_x,\mu_y) \geq \mathcal{L}(\mu_x,\mu_y^*) - \epsilon &\iff \sup_{\mu^{*}_y \in \mathcal{P}(\mathcal{Y})} \mathcal{L}(\mu_x,\mu^{*}_y) - \mathcal{L}(\mu_x,\mu_y) \leq \epsilon
\end{split}
\end{align}
Thus, an $\epsilon$-Nash equilibrium $(\mu_x,\mu_y)$ fulfills $\text{NI}(\mu_x,\mu_y) \leq 2\epsilon$, and any pair $(\mu_x,\mu_y)$ such that $\text{NI}(\mu_x,\mu_y) \leq \epsilon$ is an $\epsilon$-Nash equilibrium.

\subsection{Example: failure of the Interacting Wasserstein Gradient Flow} \label{subsec:failure_IWGF}

Let us consider the polynomial $f(x) = 5x^4 + 10x^2 - 2x$, which is an asymmetric double well as shown in \autoref{fig:f(x)}.
\begin{figure}[h] 
\begin{center}
\begin{tikzpicture} 
  \begin{axis}[ 
    xlabel=$x$,
    ylabel={$f(x) = 5x^4 - 10x^2 - 2x$}
  ] 
  \addplot [
    domain=-1.5:1.5, 
    samples=100, 
    color=blue,
    ]
    {5*x^4 - 10*x^2 - 2*x};
  \end{axis}
\end{tikzpicture}
\caption{Plot of the function $f(x) = 5x^4 + 10x^2 - 2x$.} \label{fig:f(x)}
\end{center}
\end{figure}
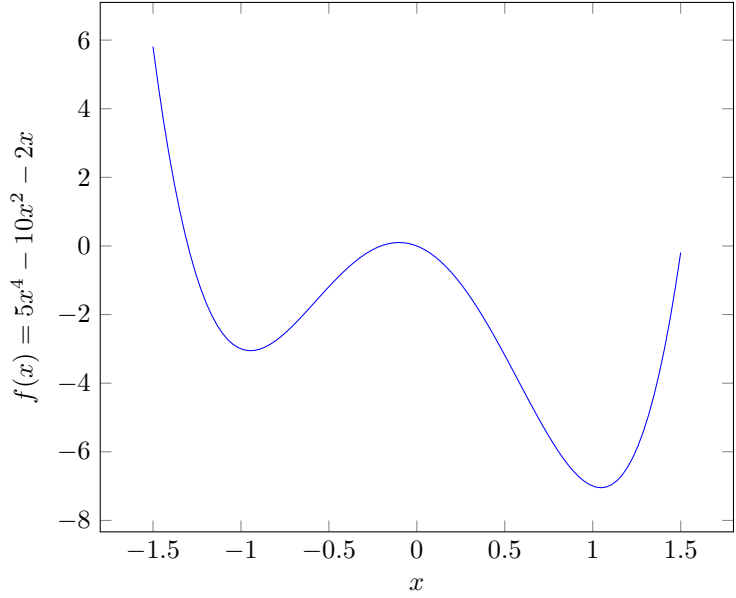

Let us define the loss $\ell : \mathbb{R} \times \mathbb{R} \rightarrow \mathbb{R}$ as $\ell(x,y) = f(x) - f(y)$. That is, the two players are non-interacting and hence we obtain $V_x(x,\mu_y) = f(x) + K$, $V_y(y,\mu_x) = -f(y) + K'$. This means that the IWGF in equation \eqref{eq:interacting_flow} becomes two independent Wasserstein Gradient Flows
\begin{equation}
    \begin{split} 
        \partial_t \mu_x = \nabla \cdot (\mu_x f'(x)), \quad \mu_x(0) = \mu_{x,0},\\
        \partial_t \mu_y = -\nabla \cdot (\mu_y f'(y)), \quad \mu_y(0) = \mu_{y,0}.
    \end{split}
\end{equation}
The particle flows in \eqref{eq:IWGF_particle_flow} become
\begin{equation}
    \frac{dx_{i}}{dt} = - f'(x_{i}), \quad \frac{dy_{i}}{dt} = f'(y_i).
\end{equation}
That is, the particles of player $x$ follow the gradient flow of $f$ and the particles of player $y$ follow the gradient flow of $-f$. It is clear from \autoref{fig:f(x)} that if the initializations $x_{0,i}, y_{0,i}$ are on the left of the barrier, they will not end up in the global minimum $f$ (resp., the global maximum of $-f$). And in this case, the pair of measures supported on the global minimum of $f$ is the only (pure) Nash equilibrium.

The game given by $\ell$ does not fall exactly in the framework that we describe in this work because $\ell$ is not defined on compact spaces. However, it is easy to construct very similar continuously differentiable functions on compact spaces that display the same behavior. 

\subsection{Link between Interacting Wasserstein Gradient Flow and interacting particle gradient flows} \label{subsec:link_IWGF}
Recall \eqref{eq:IWGF_particle_flow}: 
\begin{equation} \label{eq:IWGF_particle_flow2}
    \frac{dx_{i}}{dt} = - \frac{1}{n} \sum_{j=1}^{n} \nabla_x \ell(x_i, y_j), \quad \frac{dy_{i}}{dt} = \frac{1}{n} \sum_{j=1}^{n} \nabla_x \ell(x_j, y_i). \\
\end{equation}
Let $\Phi_t = (\Phi_{x,t}, \Phi_{y,t}) : \mathcal{X}^n \times \mathcal{Y}^n \rightarrow \mathcal{X}^n \times \mathcal{Y}^n$ be the flow mapping initial conditions $\mathbf{X}_0 = {(x_{i,0})}_{i \in [1:n]}, \mathbf{Y}_0 = {(y_{i,0})}_{i \in [1:n]}$ to the solution of \eqref{eq:IWGF_particle_flow}. Let $\mu_{x,t}^n = \frac{1}{n} \sum_{i=1}^n \delta_{\Phi^{(i)}_{x,t}(\mathbf{X}_0,\mathbf{Y}_0)}, \mu_{y,t}^n = \frac{1}{n} \sum_{i=1}^n \delta_{\Phi^{(i)}_{y,t}(\mathbf{X}_0,\mathbf{Y}_0)}$. For all $\psi_x \in \mathcal{C}(\mathcal{X})$, 
\begin{align}
\begin{split}
    \frac{d}{dt} \int_{\mathcal{X}} \psi_x(\xx) \ d\mu_{x,t}^n(\xx) &= \frac{1}{n} \sum_{i=1}^n \frac{d}{dt}\psi_x(\Phi^{(i)}_{x,t}(\mathbf{X}_0,\mathbf{Y}_0)) \\ &= \frac{1}{n} \sum_{i=1}^n \nabla_x \psi_x(\Phi^{(i)}_{x,t}(\mathbf{X}_0,\mathbf{Y}_0)) \cdot \left(- \frac{1}{n} \sum_{j=1}^{n} \nabla_x \ell(\Phi^{(i)}_{x,t}(\mathbf{X}_0,\mathbf{Y}_0), \Phi^{(j)}_{y,t}(\mathbf{X}_0,\mathbf{Y}_0))\right) \\ &= -\frac{1}{n} \sum_{i=1}^n \nabla_x \psi_x(\Phi^{(i)}_{x,t}(\mathbf{X}_0,\mathbf{Y}_0)) \cdot \nabla_x V_x(\mu_{y,t}^n, \Phi^{(i)}_{x,t}(\mathbf{X}_0,\mathbf{Y}_0)) \\ &= -\int_{\mathcal{X}} \nabla_x \psi_x(\xx) \cdot \nabla_x V_x(\mu_{y,t}^n, \xx) \ d\mu_{x,t}^n(x),
\end{split}
\end{align}
which is the first line of \eqref{eq:interacting_flow}. The second line follows analogously.

\subsection{Minimax problems and Stackelberg equilibria} \label{subsec:minimax}
Several machine learning problems, including GANs, are framed as a minimax problem 
\begin{equation}\label{eq:minmax_standard2}
    \min_{x\in  \mathcal{X}}\max_{y\in\mathcal{Y}}\;\ell(x,y).
\end{equation}
A minimax point (also known as a Stackelberg equilibrium or sequential equilibrium) is a pair $(\tilde{x},\tilde{y})$ at which the minimum and maximum of the problem \label{eq:minmax_standard} are attained, i.e.
\begin{equation}
\begin{split}
    \begin{cases}
    \min_{x \in \mathcal{X}}\max_{y \in \mathcal{Y}}\ell(x,y) = \max_{y\in \mathcal{Y}} \ell(\tilde{x},y)\\
    \max_{y\in \mathcal{Y}}\ell(\tilde{x},y) = \ell (\tilde{x},\tilde{y})
    \end{cases}.
\end{split}
\end{equation}
We consider the lifted version of the minimax problem~\eqref{eq:minmax_standard} in the space of probability measures.
\begin{equation}\label{eq:minmax_lifted}
    \min_{\mu_x\in \mathcal{P}(\mathcal{X})}\max_{\mu_y\in \mathcal{P}(\mathcal{Y})} \mathcal{L}(\mu_x,\mu_y). 
\end{equation}
By the generalized Von Neumann's minimax theorem, a Nash equilibrium of the game given by $\mathcal{L}$ is a solution of the lifted minimax problem~\eqref{eq:minmax_lifted} (see \autoref{lem:nash_stackelberg} in the case $\epsilon = 0$).

The converse is not true:
minimax points (solutions of \eqref{eq:minmax_lifted}) are not necessarily mixed Nash equilibria even in the case where the loss function is convex-concave. An example is $\mathcal{L} : \mathbb{R} \times \mathbb{R} \rightarrow \mathbb{R}$ given by $\mathcal{L}(\mu_x,\mu_y) = \iint (x^2 + 2xy) \ d\mu_x \ d\mu_y$. Let $\mathcal{M}$ be the set of measures $\mu \in \mathcal{P}(\mathbb{R})$ such that $\int x \ d\mu = 0$.
Notice that any pair $(\delta_0,\mu_y)$ with $\mu_y \in \mathcal{P}(\mathbb{R})$ is a minimax point. That is because 
\begin{align}
    \max_{\mu_y \in \mathcal{P}(\mathbb{R})} \mathcal{L}(\mu_x,\mu_y) =
    \begin{cases}
    +\infty &\text{if } \mu_x \notin \mathcal{M} \\
    \text{positive} &\text{if } \mu_x \in \mathcal{M} \setminus \{\delta_0\}\\
    0 &\text{if } \mu_x = \delta_0,
    \end{cases}
\end{align}
and hence $\delta_0 = \argmin_{\mu_x \in \mathcal{P}(\mathbb{R})} \max_{\mu_y \in \mathcal{P}(\mathbb{R})} \mathcal{L}(\mu_x,\mu_y)$. But if $\mu_x =\delta_0$, we have $\argmax_{\mu_y \in \mathcal{P}(\mathbb{R})} \mathcal{L}(\mu_x,\mu_y) = \mathcal{P}(\mathbb{R})$, because for all measures $\mu_y \in \mathcal{P}(\mathbb{R})$, $\mathcal{L}(\delta_0,\mu_y) = 0$. However, for $\mu_y \notin \mathcal{M}$, $\mathcal{L}(\mu_x,\mu_y)$ as a function of $\mu_x$ does not have a minimum at $\delta_0$, but at $\delta_{-\int y \ d\mu_y}$. Hence, the only mixed Nash equilibria are of the form $(\delta_0, \mu_y)$, with $\mu_y \in \mathcal{M}$.

The intuition behind the counterexample is that minimax points only require the minimizing player to be non-exploitable, but the maximizing player is only subject to a weaker condition. 

We define a $\varepsilon$-minimax point (or $\epsilon$-Stackelberg equilibrium) of an objective $\mathcal{L}(\mu_x,\mu_y)$ as a couple $(\tilde{\mu}_x,\tilde{\mu}_y)$ such that
\begin{equation}
\begin{split}
    \begin{cases}
    \min_{\mu_x\in \mathcal{P}(\mathcal{X})}\max_{\mu_y\in \mathcal{P}(\mathcal{Y})}\mathcal{L}(\mu_x,\mu_y)\geq\max_{\mu_y\in \mathcal{P}(\mathcal{Y})} \mathcal{L}(\tilde{\mu}_x,\mu_y)-\varepsilon\\
    \max_{\mu_y\in \mathcal{P}(\mathcal{Y})}\mathcal{L}(\tilde{\mu}_x,\mu_y) \leq \mathcal{L}(\tilde{\mu}_x,\tilde{\mu}_y) + \varepsilon
    \end{cases}.
\end{split}
\end{equation}

\begin{lemma} \label{lem:nash_stackelberg}
An $\varepsilon$-Nash equilibrium is a $2\varepsilon$-minimax point, and it holds that 
\begin{align}
\min_{\mu_x \in \mathcal{P}(\mathcal{X})} \max_{\mu_y \in \mathcal{P}(\mathcal{Y})} \mathcal{L}(\mu_x,\mu_y) - \epsilon \leq
\mathcal{L}(\hmu_x,\hmu_y) \leq \max_{\mu_y \in \mathcal{P}(\mathcal{Y})} \min_{\mu_x \in \mathcal{P}(\mathcal{X})} \mathcal{L}(\mu_x,\hmu_y) + \varepsilon \end{align}
\end{lemma}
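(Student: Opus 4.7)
My plan is to unpack the $\varepsilon$-Nash conditions into two scalar inequalities and then chain them with the elementary minimax inequality $\max_{\mu_y}\min_{\mu_x}\mathcal{L}\leq\min_{\mu_x}\max_{\mu_y}\mathcal{L}$ (which in the lifted bilinear setting is actually an equality by the generalized Von Neumann theorem invoked earlier in the paper).

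First, I would rewrite the definition of $\varepsilon$-Nash equilibrium from \autoref{sec:epsilon_nash_NI} in its compact ``sup/inf'' form: the pair $(\hmu_x,\hmu_y)$ satisfies
\begin{equation}
\sup_{\mu_y\in\mathcal{P}(\mathcal{Y})}\mathcal{L}(\hmu_x,\mu_y)-\varepsilon\;\leq\;\mathcal{L}(\hmu_x,\hmu_y)\;\leq\;\inf_{\mu_x\in\mathcal{P}(\mathcal{X})}\mathcal{L}(\mu_x,\hmu_y)+\varepsilon.
\end{equation}
These are obtained by taking the sup/inf over $\mu_y^\star$ and $\mu_x^\star$ in the defining inequalities; the left half is the non-exploitability of the $x$-player and the right half is the non-exploitability of the $y$-player.

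Next, I would obtain the two displayed bounds on $\mathcal{L}(\hmu_x,\hmu_y)$ by sandwiching. For the upper bound, combine the right half above with the trivial inequality $\inf_{\mu_x}\mathcal{L}(\mu_x,\hmu_y)\leq \sup_{\mu_y}\inf_{\mu_x}\mathcal{L}(\mu_x,\mu_y)=\max_{\mu_y}\min_{\mu_x}\mathcal{L}(\mu_x,\mu_y)$. For the lower bound, combine the left half with $\sup_{\mu_y}\mathcal{L}(\hmu_x,\mu_y)\geq \inf_{\mu_x}\sup_{\mu_y}\mathcal{L}(\mu_x,\mu_y)=\min_{\mu_x}\max_{\mu_y}\mathcal{L}(\mu_x,\mu_y)$. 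This produces exactly the two-sided inequality claimed.

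Finally, to show that $(\hmu_x,\hmu_y)$ is a $2\varepsilon$-minimax point, I need to verify the two conditions in the definition. The second one, $\max_{\mu_y}\mathcal{L}(\hmu_x,\mu_y)\leq\mathcal{L}(\hmu_x,\hmu_y)+\varepsilon\leq\mathcal{L}(\hmu_x,\hmu_y)+2\varepsilon$, is immediate from the left Nash bound. The first one, $\min_{\mu_x}\max_{\mu_y}\mathcal{L}(\mu_x,\mu_y)\geq \max_{\mu_y}\mathcal{L}(\hmu_x,\mu_y)-2\varepsilon$, follows by chaining: the left Nash bound gives $\max_{\mu_y}\mathcal{L}(\hmu_x,\mu_y)\leq\mathcal{L}(\hmu_x,\hmu_y)+\varepsilon$, the right one then gives this is $\leq\inf_{\mu_x}\mathcal{L}(\mu_x,\hmu_y)+2\varepsilon$, and applying the minimax inequality once more bounds this by $\min_{\mu_x}\max_{\mu_y}\mathcal{L}(\mu_x,\mu_y)+2\varepsilon$. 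The argument is essentially bookkeeping; the only non-trivial ingredient is the minimax equality in the lifted (bilinear) formulation, which is where I expect the only subtle point to arise, but this is already invoked in the discussion of \eqref{eq:minmax_lifted} above so it can be cited rather than reproved.
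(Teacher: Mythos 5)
Your proof is correct and follows essentially the same chain of inequalities as the paper's: unpack the two $\varepsilon$-Nash conditions into $\sup_{\mu_y}\mathcal{L}(\hmu_x,\mu_y)-\varepsilon\leq\mathcal{L}(\hmu_x,\hmu_y)\leq\inf_{\mu_x}\mathcal{L}(\mu_x,\hmu_y)+\varepsilon$, sandwich with weak duality $\max_{\mu_y}\min_{\mu_x}\leq\min_{\mu_x}\max_{\mu_y}$, and read off the minimax-point conditions from the resulting chain. One small remark: your final sentence frames the minimax \emph{equality} (Von Neumann) as the only non-trivial ingredient, but every step you actually write only uses the one-sided weak-duality inequality, which holds unconditionally for any $\mathcal{L}$; the paper's proof likewise invokes only weak duality, so the lifted-bilinear structure is not needed for this lemma.
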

\begin{proof}
Let $(\hmu_x, \hmu_y)$ be an $\varepsilon$-Nash equilibrium. Notice that $\max_{\mu_y \in \mathcal{P}(\mathcal{Y})} \min_{\mu_x \in \mathcal{P}(\mathcal{X})} \mathcal{L}(\tilde{\mu}_x,\mu_y) \leq \min_{\mu_x \in \mathcal{P}(\mathcal{X})} \max_{\mu_y \in \mathcal{P}(\mathcal{Y})} \mathcal{L}(\tilde{\mu}_x,\mu_y)$. Also,
\begin{align}
    \begin{split} \label{eq:chain_ineq}
        \min_{\mu_x \in \mathcal{P}(\mathcal{X})} \max_{\mu_y \in \mathcal{P}(\mathcal{Y})} \mathcal{L}(\mu_x,\mu_y) &\leq \max_{\mu_y \in \mathcal{P}(\mathcal{Y})} \mathcal{L}(\hat{\mu}_x,\mu_y) \leq \mathcal{L}(\hmu_x,\hmu_y) + \varepsilon \leq \min_{\mu_x \in \mathcal{P}(\mathcal{X})} \mathcal{L}(\mu_x,\hmu_y) + 2\varepsilon\\
        &\leq \max_{\mu_y \in \mathcal{P}(\mathcal{Y})} \min_{\mu_x \in \mathcal{P}(\mathcal{X})} \mathcal{L}(\mu_x,\hmu_y) + 2\varepsilon
    \end{split}
\end{align}
and this yields the chain of inequalities in the statement of the theorem.
The condition $\max_{\mu_y\in \mathcal{P}(\mathcal{Y})}\mathcal{L}(\tilde{\mu}_x,\mu_y) \leq \mathcal{L}(\tilde{\mu}_x,\tilde{\mu}_y) + \varepsilon$ of the definition of $\varepsilon$-minimax point follows directly from the definition of an $\epsilon$-Nash equilibrium. Using part of \eqref{eq:chain_ineq}, we get
\begin{align}
    \max_{\mu_y \in \mathcal{P}(\mathcal{Y})} \mathcal{L}(\hat{\mu}_x,\mu_y) - 2\epsilon \leq \max_{\mu_y \in \mathcal{P}(\mathcal{Y})} \min_{\mu_x \in \mathcal{P}(\mathcal{X})} \mathcal{L}(\mu_x,\hmu_y) \leq \min_{\mu_x \in \mathcal{P}(\mathcal{X})} \max_{\mu_y \in \mathcal{P}(\mathcal{Y})} \mathcal{L}(\tilde{\mu}_x,\mu_y), 
\end{align}
which is the first condition of a $2\epsilon$-minimax.
\end{proof}
\autoref{lem:nash_stackelberg} provides the link between approximate Nash equilibria and approximate Stackelberg equilibria, and it allows to translate our convergence results into minimax problems such as GANs.  

\subsection{It\^{o} SDEs on Riemannian manifolds: a parametric approach} \label{sec:sdes_manifolds}
We provide a brief summary on how to deal with SDEs on Riemannian manifolds and their corresponding Fokker-Planck equations (see Chapter 8 of \citet{chirikjianstochastic}). While ODEs have a straightforward translation into manifolds, the same is not true for SDEs. Recall that the definitions of the gradient and divergence for Riemannian manifolds are
\begin{align}
    \nabla \cdot X = |g|^{-1/2} \partial_i (|g|^{1/2} X^i), \quad (\nabla f)^i = g^{ij} \partial_j f,
\end{align}
where $g_{ij}$ is the metric tensor, $g^{ij} = (g_{ij})^{-1}$ and $|g| = \text{det}(g_{ij})$. We use the Einstein convention for summing repeated indices. 

The parametric approach to SDEs in manifolds is to define the SDE for the variables $\mathbf{q} = (q_1,\cdots,q_d)$ of a patch of the manifold:
\begin{align} \label{eq:SDE_patch}
    d\mathbf{q} = \mathbf{h}(\mathbf{q},t) dt + H(\mathbf{q},t) d\mathbf{w}.
\end{align}
The corresponding forward Kolmogorov equation is
\begin{align} \label{eq:fokker_planck_manifold}
    \frac{\partial f}{\partial t} + |g|^{-1/2} \sum_{i=1}^d \frac{\partial}{\partial q_i} \left(|g|^{1/2} h_i f\right) = \frac{1}{2} |g|^{-1/2} \sum_{i,j=1}^d \frac{\partial^2}{\partial q_i \partial q_j} \left(|g|^{1/2} \sum_{k=1}^D H_{ik} H_{kj}^\top f \right),
\end{align}
which is to be understood in the weak form.


Assume that the manifold $\mathcal{M}$ embedded in $\mathbb{R}^D$. If $\phi : \mathcal{U}_{\mathbb{R}^d} \subseteq \mathbb{R}^d \rightarrow \mathcal{U} \subseteq \mathcal{M} \subseteq \mathbb{R}^D$ is the mapping corresponding to the patch $\mathcal{U}$ and \eqref{eq:SDE_patch} is defined on $\mathcal{U}_{\mathbb{R}^d}$, let us set $H(\mathbf{q}) = (D\phi(\mathbf{q}))^{-1}$. In this case, $\sum_k H_{ik} H_{kj}^\top = \sum_k (D\phi)^{-1}_{ik} ((D\phi)^{-1}_{kj})^\top = g^{ij}(\mathbf{q})$. Hence, the right hand side of \eqref{eq:fokker_planck_manifold} becomes
\begin{align}
\begin{split}
    &\frac{1}{2} |g|^{-1/2} \sum_{i,j=1}^d \frac{\partial^2}{\partial q_i \partial q_j} \left(|g|^{1/2} g^{ij} f \right) \\ &= |g|^{-1/2} \sum_{i=1}^d \frac{\partial}{\partial q_i} \left(|g|^{1/2} \tilde{h}_i f\right) + \frac{1}{2} |g|^{-1/2} \sum_{i,j=1}^d \frac{\partial}{\partial q_i} \left(|g|^{1/2} g^{ij} \frac{\partial}{\partial q_j}f \right) \\&= |g|^{-1/2} \sum_{i=1}^d \frac{\partial}{\partial q_i} \left(|g|^{1/2} \tilde{h}_i f\right) + \frac{1}{2} |g|^{-1/2} \sum_{i,j=1}^d \frac{\partial}{\partial q_i} \left(|g|^{1/2} g^{ij} \frac{\partial}{\partial q_j}f \right) \\&= \nabla \cdot (\mathbf{\tilde{h}} f) + \frac{1}{2}\nabla \cdot \nabla f
\end{split}
\end{align}
where
\begin{align}
    \tilde{h}_i(\mathbf{q}) = \frac{1}{2} \sum_{j=1}^d \left( |g(\mathbf{q})|^{-1/2} g^{ij}(\mathbf{q}) \frac{\partial |G(\mathbf{q})|^{1/2}}{\partial q_j} + \frac{\partial g^{ij}(\mathbf{q})}{\partial q_j} \right)
\end{align}
Hence, we can rewrite \eqref{eq:fokker_planck_manifold} as
\begin{align}
    \frac{\partial f}{\partial t} = \nabla \cdot ((-\mathbf{h} + \mathbf{\tilde{h}}) f) + \frac{1}{2}\nabla \cdot \nabla f
\end{align}
For this equation to be a Fokker-Planck equation with potential $E$ (i.e. with a Gibbs equilibrium solution), we need $-\mathbf{h} + \mathbf{\tilde{h}} = \nabla E$, which implies $\mathbf{h} = -\nabla E + \mathbf{\tilde{h}}$. 

We can convert an SDE in parametric form like \eqref{eq:SDE_patch} into an SDE on $\mathbb{R}^D$ by using Ito's lemma on $X = \phi(\mathbf{q})$:
\begin{align} \label{eq:ito_dxi}
    dX_i = d\phi_i(\mathbf{q}) = \left(D\phi_i(\mathbf{q}) \mathbf{h}(\mathbf{q}) + \frac{1}{2} \text{Tr}(H(\mathbf{q},t)^\top (H\phi_i)(\mathbf{q}) H(\mathbf{q},t)) \right) dt + D\phi_i(\mathbf{q})H(\mathbf{q},t)
    d\mathbf{w}  
\end{align}
If we set $H(\mathbf{q}) = (D\phi(\mathbf{q}))^{-1}$ as before, $D\phi(\mathbf{q})H(\mathbf{q},t)$ is the projection onto the tangent space of the manifold, i.e. $D\phi(\mathbf{q})H(\mathbf{q},t) v = \text{Proj}_{T_{\phi(\mathbf{q})}M} v, \ \forall v \in \mathbb{R}^D$. 
In the case $\mathbf{h} = \nabla E + \mathbf{\tilde{h}}$, $D\phi_i(\mathbf{q}) \mathbf{h}(\mathbf{q}) = D\phi_i(\mathbf{q}) \nabla E (\mathbf{q}) + D\phi_i(\mathbf{q}) \mathbf{\tilde{h}}(\mathbf{q})$. It is very convenient to abuse the notation and denote $D\phi(\mathbf{q}) \nabla E (\mathbf{q})$ by $\nabla E (\phi(\mathbf{q}))$. We also use $\mathbf{\hat{h}}(\phi(\mathbf{q})) := D\phi(\mathbf{q}) \mathbf{\tilde{h}}(\mathbf{q}) + \frac{1}{2} \text{Tr}(((D\phi(\mathbf{q}))^{-1})^\top (H\phi)(\mathbf{q}) (D\phi(\mathbf{q}))^{-1})$. Both definitions are well-defined because the variables are invariant by changes of coordinates. Hence, under these assumptions \eqref{eq:ito_dxi} becomes
\begin{align} \label{eq:SDE_manifold_embedded}
    dX = (-\nabla E (X) + \mathbf{\hat{h}}(X)) \ dt + \text{Proj}_{T_X M} (d\mathbf{w})
\end{align}
In short that means that we can treat SDEs on embedded manifolds as SDEs on the ambient space by projecting the Brownian motions to the tangent space and adding a drift term $\mathbf{\hat{h}}$ that depends on the geometry of the manifold. Notice that for ODEs on manifolds the additional drift term does not appear and \eqref{eq:SDE_manifold_embedded} reads simply $dX = \nabla E(X) dt$.  

Notice that the forward Kolmogorov equation for \eqref{eq:SDE_manifold_embedded} on $\mathbb{R}^D$ reads
\begin{align} \label{eq:KFP_manifold}
    \frac{d}{dt} \int f(x) \ d\mu_t(x) = \int (\nabla E (x) - \mathbf{\hat{h}}(x)) \cdot \nabla_x f(x) + \frac{1}{2}\text{Tr}((\text{Proj}_{T_x M})^\top H f(x) \text{Proj}_{T_x M})\ d\mu_t(x),
\end{align}
for an arbitrary $f$.



\end{document}